\documentclass[11pt]{article}

\title{Improved Regret for Efficient Online Reinforcement Learning \\
with Linear Function Approximation} 

\author{
Uri Sherman%
\thanks{Blavatnik School of Computer Science, Tel Aviv University; \texttt{urisherman@mail.tau.ac.il}.}
\and
Tomer Koren%
\thanks{Blavatnik School of Computer Science, Tel Aviv University and Google Research; \texttt{tkoren@tauex.tau.ac.il}.}
\and
Yishay Mansour%
\thanks{Blavatnik School of Computer Science, Tel Aviv University and Google Research; \texttt{mansour.yishay@gmail.com}.}
}

\usepackage{header_arxiv}
\usepackage{header}

\begin{document}

\maketitle


\begin{abstract}
    We study reinforcement learning with linear function approximation and adversarially changing cost functions, a setup that has mostly been considered under simplifying assumptions such as full information feedback or exploratory conditions.
    We present a computationally efficient policy optimization algorithm for the challenging general setting of unknown dynamics and bandit feedback, featuring a combination of mirror-descent and least squares policy evaluation in an auxiliary MDP used to compute exploration bonuses.
    Our algorithm obtains an $\widetilde O(K^{6/7})$ regret bound, improving significantly over previous state-of-the-art of $\widetilde O (K^{14/15})$ in this setting.
    In addition, we present a version of the same algorithm under the assumption a simulator of the environment is available to the learner (but otherwise no exploratory assumptions are made), and prove it obtains state-of-the-art regret of $\widetilde O (K^{2/3})$.
\end{abstract}

\section{Introduction}
Reinforcement Learning (RL; \citealp{sutton2018reinforcement,mmt2022rlbook}) studies online decision making problems in which an agent learns through experience within a dynamic environment, with the goal to minimize a loss function associated with the agent-environment interaction.
Modern applications of RL such as robotics \cite{schulman2015high, lillicrap2015continuous, akkaya2019solving}, game playing \cite{mnih2013playing, silver2018general} and autonomous driving \cite{kiran2021deep}, almost invariably consist of large scale environments where function approximation techniques are necessary to allow the agent to generalize across different states. Furthermore, some form of agent robustness is usually required to cope with environment irregularities that cannot be faithfully represented by stochasticity assumptions \citep[see e.g.,][]{dulac2021challenges}.

Theoretical foundations for RL with function approximation \citep[e.g., ][]{jiang2017contextual,yang2019sample,jin2020provably,agarwal2020pc} 
have been steadily coming into fruition.
The influential work of \citet{jin2020provably} has set the ground for the de facto standard of linearly realizable RL; the linear Markov Decision Process (linear MDP), and has lead to a range of algorithmic approaches in this setting or variants thereof (e.g., \citealp{zanette2020frequentist, agarwal2020pc, wagenmaker2022reward}, see also \citealp{agarwal2019reinforcement}).
Likewise, a growing line of work studies RL with adversarial interventions, such as non-stationary dynamics \cite{mao2021near}, adversarial corruptions \cite{lykouris2021corruption}, delayed feedback \cite{lancewicki2022learning, jin2022near},
and adversarial costs \citep{even2009online,neu2012adversarial, rosenberg2019online,rosenberg2020stochastic, jin2020learning}. The latter is, arguably, the more fundamental and well studied setting in the scope of adversarial RL.

The present paper aims at advancing state-of-the-art algorithmic methods for computationally and statistically efficient RL in the linear MDP setup, under the challenging setting of \emph{adversarially changing costs, unknown dynamics, and bandit feedback}. 
At this time, there exist 
only a handful of papers that consider RL in a setup that combines function approximation and adversarial costs, with most prior works adopting one or more assumptions that alleviate the challenge of exploration. \citet{cai2020provably} was the first work to establish $\widetilde O(\sqrt K)$ regret over $K$ episodes in the related model of linear mixture MDP, yet considered full information feedback. Later, \citet{neu2021online} obtain the same minimax optimal rates in terms of $K$ for linear MDPs and bandit feedback, but with full knowledge of the environment dynamics, and an additional factor depending on the coverage of the initial state-action distribution. Finally, the recent work of \citet{luo2021policy} establishes an $\widetilde O(K^{14/15})$ guarantee in the linear MDP setup without any simplifying assumptions, and an $\widetilde O(K^{2/3})$ regret bound in the more general linear-$Q$ setting but with simulator access (albeit with a computationally inefficient algorithm). Notably, to the best of our knowledge, \citet{luo2021policy} is the only prior work to consider the  adversarial linear MDP with bandit feedback in its full generality.

\paragraph{Contributions.} Our main contribution significantly improves over the existing prior art \cite{luo2021policy} in a number of respects. We present a computationally efficient algorithm for the most general setup without any exploratory assumptions, and prove a regret bound of $\widetilde O(K^{6/7})$ establishing a substantial advancement with respect to the previous $\widetilde O(K^{14/15})$. 
In addition, we present a version of the same algorithm under the assumption a simulator is available to the learner, and prove it obtains an $\widetilde O(K^{2/3})$ bound matching the state-of-the-art in this setup given by the linear-$Q$ algorithm of \citet{luo2021policy} (which, notably, also applies in a more general setup). However, our algorithm improves upon that of \citet{luo2021policy} in being computationally efficient,
\footnote{In order to compute a single action probability of the agent policy, the algorithm of \citet{luo2021policy} requires exponentially many simulator samples, generated by traversing the tree structure implicitly defined by the recursive bonus-policy-bonus relation.} 
and in requiring a weaker simulator, which we use only to generate agent policy rollouts from the initial state. 
Also noteworthy in this context is the algorithm of \citet{neu2021online}, which obtains an $\widetilde O (\sqrt K)$ regret bound, though requires not only a simulator but also perfect knowledge of the transition function. 

\paragraph{Overview of techniques.}
Our work combines elements from \citet{jin2020provably,shani2020optimistic, neu2021online, luo2021policy} with a novel algorithmic approach towards exploration bonuses in linear MDPs.
We follow the insightful work of \citet{luo2021policy} and consider a regret decomposition and bonus design that at a high level are similar to those presented in their work, but reframed and extended to incorporate optimistic approximations of the \emph{bonus-to-go}; the bonus function that drives exploration.
Our central observation is that the bonus-to-go may be optimistically approximated using least squares regression
in the auxiliary full information \emph{bonus MDP}, in a manner that is efficient, and to an extent decoupled from estimation of the cost function.
The (non-linear) reward function in this MDP is the immediate bonus function that compensates for uncertainty in the instantaneous $Q$-estimates; importantly, while this is not a linear MDP, it is still amenable to least squares value backups \citep[e.g., ][]{jin2020provably} owed to the linear structure in the dynamics. 

During value backups in the bonus MDP, we incorporate an additional bonus in order to maintain (w.h.p.) Bellman consistency errors that are positive across the entire state action space. This is a form of optimism employed in policy optimization algorithms \citep[e.g., ][]{cai2020provably, shani2020optimistic}, where  the long term reward of the policy in each value backup step is overestimated (as opposed to optimizing a value function that is an overestimate of the reward of a benchmark policy). Unlike previous approaches that apply this directly towards the loss (or reward) optimization, here we utilize it solely for bonus calculation.
Finally, through a refined analysis, we simplify the framework of \citet{luo2021policy}, remove the necessity of the dilation component, and show we can use an immediate bonus function that is significantly smaller than that used in \citet{luo2021policy}. In particular, we keep the immediate bonus bounded (almost surely) by a constant across the entire state-action space, a property that is essential to arrive at a tighter bound for the least squares estimation procedure. 

\subsection{Additional Related Work}
\paragraph{Tabular RL with stationary and adversarial losses.}
Tabular RL with stationary losses is perhaps the most fundamental and well studied framework, beginning with the works of \citet{auer2006logarithmic,tewari2007optimistic,jaksch2010near}, and with many important advances more recently \cite{dann2015sample,azar2017minimax, dann2017unifying,fruit2018efficient,jin2018q}.
In the context of policy optimization methods in particular, most of the recent works consider the pure optimization perspective or under simplifying exploratory assumptions \citep[e.g., ][]{bhandari2019global,agarwal2021theory,zhan2021policy,lan2022policy}, with the exception of \citet{shani2020optimistic} that study the exploration setting and will be discussed momentarily.

The study of adversarially changing costs was initiated in the works of \citet{even2009online,yu2009markov}, and can be largely divided into policy optimization (PO) based methods \cite{neu2010online,shani2020optimistic} and algorithms that optimize over the set of occupancy measures \cite{zimin2013online,rosenberg2019online,jin2020learning}, where both approaches
ultimately involve a mirror descent \cite{nemirovskij1983problem,beck2003mirror} optimization component with online guarantees.
In the context of PO methods, which are more relevant to our work, \citet{neu2010online} initially achieve $\widetilde O(K^{2/3})$ regret for the known dynamics setup with bandit feedback.
In a later paper, \citet{shani2020optimistic} present PO algorithms based on value backups for the stochastic and adversarial settings with \emph{unknown} dynamics and bandit feedback, establishing an $\widetilde O(\sqrt K)$ bound in the stochastic case and $\widetilde O(K^{2/3})$ in the adversarial case.
The recent work of \citet{luo2021policy} presents, for the tabular case, a PO algorithm and prove it obtains the optimal $\widetilde O(\sqrt K)$ bound. Their algorithm, as opposed to that of \citet{shani2020optimistic}, is not based on value backups but rather stochastic estimates of the cumulative cost. The algorithm we present here combines both approaches.

\paragraph{RL with function approximation.}
The study of function approximation in RL goes back a long way (e.g., \citealp{schweitzer1985generalized,barto1990connectionist,bradtke1996linear}; see also \citealp{sutton2018reinforcement} and references therein), although these earlier works did not provide polynomial sample efficiency.
More recently, a line of work initiated by \citet{yang2019sample,yang2020reinforcement,jin2020provably}, studies MDPs with linear structure and focuses on computationally and statistically efficient algorithms \citep[e.g., ][]{zanette2020learning,modi2020sample,wei2021learning,wagenmaker2022first}.
The linear MDP model we adopt here was introduced by \citet{jin2020provably}.
Also noteworthy is the linear mixture MDP \cite{modi2020sample,ayoub2020model,zhou2021nearly,zhou2021provably}, which is a different model that in general is incomparable with the linear MDP \cite{zhou2021provably}.
Finally, there is a rich line of works studying statistical properties of RL with more general function approximation \citep[e.g., ][]{munos2005error,jiang2017contextual,dong2020root,jin2021bellman,du2021bilinear}, although these usually do not provide computationally efficient algorithms.

\paragraph{Policy optimization with function approximation.}
Formulation of policy optimization methods that incorporate function approximation was given in classical works such as \citet{sutton1999policy,kakade2001natural}, although these did not study convergence rates nor learning in the exploration setting. More recently,
several papers \citep[e.g., ][]{agarwal2021theory,liu2019neural} consider convergence properties of policy optimization approaches from a pure optimization perspective, or subject to exploratory assumptions such as bounded concentrability coefficient \cite{munos2003error,munos2005error,chen2019information},  distribution mismatch coefficient or a relative eigenvalue condition \cite{agarwal2021theory}.
More relevant to our paper are works that consider policy optimization in a setup that requires exploration be handled algorithmically, such as \citet{zanette2021cautiously} who improve upon the prior work of  \citet{agarwal2020pc}, both of which consider stationary losses.
The work of \citet{cai2020provably} that was mentioned earlier studies the adversarial setting, but in the linear mixture MDP model and with full information feedback. The recent work of \citet{he2022near} considers a similar setup and improves upon \citet{cai2020provably} by establishing better dependence on the horizon. 

\section{Problem Setup}

\paragraph{Episodic MDPs.}
A finite horizon episodic MDP is defined by the tuple
$\M = \br{\S , \A, H, \P , \l, s_1}$,
where $\S$ denotes the state space, $\A$ the action set, $H \in \mathbb Z_+$ the length of the horizon,
$\P = \cb{\P_h}_{h=1}^{H-1}$ the time dependent transition function, $\l = \cb{\l_h}_{h=1}^H$ a sequence of loss functions, and $s_1 \in \S$ the initial state that we assume to be fixed w.l.o.g.
The transition density given the agent is at state $s\in \S$ at time $h$ and takes action $a$ is given by $\P_h(\cdot | s, a) \in \Delta(\S)$. After the agent takes an action on the last time step $H$, the episode terminates immediately.
We assume the state space $\S$ is measurable space (which may contain uncountably many states) and the action set $\A$ is finite with $A \eqq |\A|$. 
A policy is defined by a mapping $\pi\colon \S \times [H] \to \Delta(\A)$, where $\Delta(\A)$ denotes the probability simplex over the action set $\A$. We let $\pi_h(\cdot|s) \in \Delta(\A)$ denote the distribution over actions given by $\pi$ at $s, h$.
Finally, we use the convention that for any function $V \colon \S \to \R$, we interpret $\P_h V \colon \S \times \A \to \R$ as the result of applying the conditional expectation operator $\P_h$; $\P_h V(s, a) \eqq \E_{s' \sim \P_h(\cdot |s, a)} V(s')$ (see \cref{sec:appdx_prelim} for comments regarding this notation). 
 
\paragraph{Episodic Linear MDPs with adversarial costs.} We consider the adversarial online learning setup, with unknown dynamics and bandit feedback. In this setup, the agent interacts with the MDP over the course of $K \geq 1$ episodes, 
where in each episode, the loss function associated with the MDP changes as chosen by an adversary that observes the current and past player policies. The feedback provided to the learner consists of the instantaneous scalar loss associated with the state-action pairs she has visited during episode rollout.
Our central structural assumption is that the combination of transition function and adversarial losses form a \emph{linear MDP} \cite{jin2020provably} in each episode. 
\begin{assumption}[Linear MDP with changing costs]
\label{assume:linmdp}
\label{def:linmdp}
    The learner interacts with a sequence of MDPs $\cb[b]{\M^k}_{k=1}^K$, $\M^k = (\S, \A, H, \P, \l^k, s_1)$ that share all elements other than the loss functions, such that the following holds.
    There is a feature mapping $\phi\colon \S \times \A \to \R^d$ that is \textbf{known} to the learner, and for every $h\in[H]$, $d$ \textbf{unknown} signed measures $\psi_{h, 1}, \ldots, \psi_{h, d} \in \S \to \R$ forming $\psi_h(\cdot) \eqq \br{\psi_{h, 1}(\cdot), \ldots, \psi_{h, d}(\cdot)}\in \S \to \R^d$,
    such that for all $h, s, a, s' \in [H-1] \times \S \times \A \times \S$:
    \begin{align}
		\P_h(s' | s, a)	&= \phi(s, a)\T \psi_h(s').
    \end{align}
    W.l.o.g., we assume $\norm{\phi(s, a)}\leq 1$ for all $s, a$,
    and that for any measurable function $f\colon \S \to \R$ with $\norm{f}_\infty \leq 1$, it holds that $\norm{\int \psi_h(s') f(s') {\rm d} s'} \leq \sqrt d$ for all $h\in [H]$.
    In addition, for all $k$;
    \begin{align}
	\l_h^k(s, a) &= \phi(s, a)\T \vc_h^k,
    \end{align}
    where $\cb{\vc_h^k}$ are adversarially chosen cost vectors. W.l.o.g., we assume $\av{\phi(s, a)\T \vc_h^k} \leq 1$ for all $s, a, h, k$, and
	$\norm{\vc_h^k} \leq \sqrt d$ for all $h, k$.
\end{assumption}

The pseudocode for learner environment interaction is provided below in Protocol \ref{prot:learner_env_interaction}.

\begin{protocol}[!ht]
    \caption{Learner-Environment Interaction}
    \label{prot:learner_env_interaction}
	\begin{algorithmic}
            \STATE parameters: $(\S, \A, H, \P, \phi, s_1; K)$ 
	    \FOR{$k=1, \ldots, K$}
                \STATE agent decides on a policy $\pi^k$

                \STATE adversary chooses $H$ cost vectors $\cb{\vc_h^k} \in \R^d$
                
                \STATE define $\l_h^k \colon \S \times \A \to \R$ by $\l_h^k(s, a) = \phi(s, a)\T \vc_h^k$.

	    	\STATE environment resets to $s_1^k = s_1$
		    \FOR{$h=1, \ldots, H$}
			    	\STATE agent observes $s_h^k \in \S$
			    	\STATE agent chooses $a_h^k \sim \pi^k_h(\cdot|s_h^k)$
			    	\STATE agent observes and incurs loss $\l_h^k = \l_h^k(s_h^k, a_h^k)$ 
			    \STATE if $h < H$:
			    	\STATE $\;\;$ environment transitions to $s_{h+1}^k \sim \P_h(\cdot|s, a)$
	        \ENDFOR
        \ENDFOR
	\end{algorithmic}
\end{protocol}

We make the following additional notes with regards to the model we consider: 
(1) for any $s,a \in \S \times \A$, the agent may evaluate $\phi(s, a)$ in $O(1)$ time;
(2) we assume an oblivious and deterministic adversary (but in fact our results hold more generally for the case that the adversary is random, and observes the agent's policies, \emph{but not} trajectory realizations);
(3) with slight overloading of notation, we let $\l_h^k = \l_h^k(s_h^k, a_h^k)$ denote the random loss incurred by the agent on episode $k$ time step $h$. 

\paragraph{Learning objective.}
The expected loss of a policy $\pi$ when starting from state $s\in \S$ at time step $h\in [H]$ is given by the value function;
\begin{align}
	V_h^{\pi}(s; \l) 
	\eqq 
	\E \sbr{\sum_{t=h}^H \l_t(s_t, a_t) \mid s_h = s, \pi, \l},
	\label{eq:V_def}
\end{align}
where we use the extra $(;\l)$ notation to emphasize the specific loss function considered. 
The expected loss conditioned on the agent taking action $a\in \A$ on time step $h$ at $s$ and then continuing with $\pi$ is given by the action-value function;
\begin{align}
	Q_h^{\pi}(s, a; \l) 
	\eqq 
	\E \!\sbr{\sum_{t=h}^H \l_t(s_t, a_t) \mid s_h = s, a_h = a, \pi, \l}\!
	.
	\label{eq:Q_def}
\end{align}

The value and action-value functions of a policy $\pi$ in the MDP $\br{\S, \A, H, \P, \l^k, s_1}$ associated with episode $k\in [K]$ are denoted by, respectively;
\begin{align*}
	V_h^{k, \pi}(s) 
	\eqq 
	V_h^{\pi}(s; \l^k)
	;\; 
	Q_h^{k, \pi}(s, a) 
	\eqq 
	Q_h^{\pi}(s, a; \l^k), 
\end{align*}
where $V_h^{\pi}(s; \l^k)$ and $Q_h^{\pi}(s, a; \l^k)$ have been defined in \cref{eq:V_def,eq:Q_def}.
We let $\pi^\star$ denote the best policy in hindsight;
\begin{align*}
	\pi^\star 
	\eqq \argmin_{\pi} \cb{
	\sum_{k=1}^K 
		V_1^{k, \pi}(s_1)
	},
\end{align*}
and seek to minimize the \emph{pseudo regret} of the agent policy sequence $\pi^1, \ldots, \pi^K$;
\begin{align}
	\Reg \eqq \sum_{k=1}^K 
	V_1^{k, \pi^k}(s_1) - V_1^{k, \pi^\star}(s_1).
\end{align}
Finally, we note that $\pi^\star$ may depend on player decisions, as the adversary is adaptive.

\paragraph{Additional notation and definitions. }

We let $\norm{\cdot} = \norm{\cdot}_2$ denote the standard Euclidean norm, and
for a positive definite matrix $\Lambda\in \R^{d\times d}$, we let $\norm{v}_\Lambda = \sqrt {v\T \Lambda v}$ denote the weighted norm induced by $\Lambda$.
Further, we let $\norm{\Lambda} = \norm{\Lambda}_{\rm op} = \max_{v, \norm{v}=1} v\T \Lambda v$ denote the operator norm of $\Lambda$.
Finally, we use $\rm{clip}\sbr{x}^a_b \eqq \max\cb{\min\cb{x, a}, b}$ to denote clipping of a real scalar $x$ between $a\in \R$ and $b\in \R$.

\section{Algorithm and Main Result}

The pseudocode for our main algorithm; \textbf{P}olicy \textbf{O}ptimization with \textbf{L}east \textbf{S}quares \textbf{B}onus \textbf{E}xploration, is provided in \cref{alg:polsbe}.
The high level algorithmic template is relatively simple; (1) Rollout $\pi^k$ in the environment; (2) Obtain a (nearly) unbiased estimate $\widehat Q^k$ of $Q^k$; (3) Construct a bonus-to-go estimate $\widetilde B^k$ through least squares policy evaluation in an auxiliary bonus MDP; (4) Perform a mirror-descent update step using the optimistic $Q^k$ function estimate given by $\widehat Q^k - \widetilde B^k$.

 The bonus-to-go estimate is obtained by the least squares policy evaluation subroutine \cref{alg:olspe} (discussed in \cref{sec:olspe}), which outputs an approximation that is optimistic and with bias that can be controlled efficiently. This provides for the major contributing factor in the final regret guarantee; specifically, this approach along with a refined instantaneous $Q$-bonus design allows us to avoid the policy cover used in \citet{luo2021policy}, and leads to a simpler algorithm that explores more efficiently.
 The final bonus function $\widetilde B^k$ encompasses two bonus types; one to compensate for uncertainty in the $Q^k$ estimates
 ($b^k$ in \cref{eq:loss_bonus}), and the other ($b^{\P, k}$ in \cref{alg:olspe}) to compensate for uncertainty in the estimation of the dynamics in the policy evaluation procedure.
 Intuitively speaking, given the agent is at state $s_h$, her bonus for taking action $a_h$ will be high when the expected rollout following $a_h$ traverses state action pairs $(s_t, a_t)$ for which (1) we have poor next state information $s_{t+1}$, and (2) their feature vector $\phi(s_t, a_t)$ points in a direction in the state-action space for which we have poor knowledge of past $Q$-cost vectors $\vq_t^1, \ldots, \vq_t^k$ (these are the low dimensional representations of the $Q$ functions; see \cref{lem:q_vec}).
On a conceptual level, $b^{\P, k}$ drives exploration for the purpose of learning the dynamics, and $b^k$ for the sake of cost function information. 
  
\paragraph{Two-way partitioned blocking.} In order to estimate feature occupancy covariance matrices and Bellman backup operators, \cref{alg:polsbe} plays each policy multiple times.
For a given parameter $\tau \geq 1$, we divide episodes $k \in [K]$ into $\lceil K/(2\tau) \rceil$ blocks, and assume for simplicity of exposition that $K/(2\tau)$ is an integer. We define for all $j \in [K/(2\tau)]$;
\begin{align}
    T_{j, 1} &\eqq \cb{(j-1)\tau + 1, \ldots, j\tau},
    \\
    T_{j, 2} &\eqq \cb{j\tau + 1, \ldots, (j+1)\tau},
    \\
    T_{j\phantom{, 3}} &\eqq T_{j, 1} \cup T_{j, 2}
    \label{eq:block_T_j}.
\end{align}
For all episodes $k\in T_j$ (which we call block $j$), the policy is held fixed and denoted $\pi^{(j)}$. We let $\pi^k$ denote the policy played on episode $k$ throughout, thus $\pi^k = \pi^{(j)}$ for all $k \in T_j$. 
The partitioning of each block into two is done to ensure unbiasedness of the regularized inverse covariance estimator $\smash{ \widehat \Sigma_{kh\gamma}^+ }$, as will be made clear in the analysis. Throughout, we let $\D^k = \cb[s]{\D_h^k}_{h=1}^H$ denote the dataset used for estimations of episode $k$, and slightly abuse notation by referring to it as either containing episode indices, or transition tuples $(s_h^i, a_h^i, s_{h+1}^i)$.

\begin{algorithm}[!ht]
    \caption{PO-LSBE}
    \label{alg:polsbe} 
	\begin{algorithmic}
	    \STATE \textbf{input:} $( \eta, \gamma, \beta, \beta^\P, \epsilon, \sigma^2)$
	    \STATE Set $M=\frac{48 d}{\gamma \sigma}\log\frac{72 d}{\gamma^2 \sigma}$,
	$N = \frac{2}{\gamma}\log\frac{1}{\gamma \epsilon}$, 
	$\tau = M N$. 
	    \STATE Initialize $\pi^{(1)}$ to take actions uniformly at random.
	    \FOR{$j=1, \ldots, \lceil K/(2\tau) \rceil$}
	    	\STATE Play $\pi^k = \pi^{(j)}$ for the $2\tau$ episodes $k\in T_j$ (defined in \cref{eq:block_T_j}),
            and collect $(s_h^k,a_h^k, \l_h^k)_{h\in[H],k\in T_j}$
            \STATE 
            \FOR{$k\in T_j$}
	    	\STATE if $k\in T_{j, 1}$ populate $\D^k$ with $T_{j, 2}$ rollouts
              \STATE otherwise $ (k\in T_{j, 2})$ populate $\D^k$ with $T_{j, 1}$ rollouts
	    	\STATE
	    	\STATE $\widehat \Sigma_{kh\gamma}^+ 
			    	\gets {\rm MGR}(\D_h^k; N, M, \gamma)$ (see \cref{alg:mgr})
			\STATE $\widehat \vq_h^k \gets \widehat \Sigma^+_{kh\gamma} \phi(s_h^k, a_h^k) \sum_{t=h}^H \l_t^k$
			\STATE $\widehat Q_h^k(s, a) = \phi(s, a)\T \widehat \vq_h^k$
			\STATE Define the $Q$-bonus by
			\begin{align}
				b_h^k(s, a) =
				\label{eq:loss_bonus}
				\beta \br[B]{ 
						\norm{\phi(s, a)}_{\widehat \Sigma_{kh\gamma}^+}
						+  
						\ab[b]{\pi_h^k(\cdot|s), \norm{\phi(s, \cdot)}_{\widehat \Sigma_{kh\gamma}^+}}
					}
			\end{align}
                \STATE Compute the bonus-to-go with \cref{alg:olspe};
			\begin{align*}
			    \widetilde B^k \gets {\rm OLSPE}(\D^k, b^k; \beta^\P, \beta, \gamma)
			\end{align*}
	    	\ENDFOR
        	
                \STATE Policy improvement step:
        	\begin{align*}
				\pi_h^{(j+1)}(a|s)
				\propto \exp\br{{-\eta \sum_{i=1}^{j} \L_h^{(i)}(s, a)}}
                ,
                \,\text{where }
                \L_h^{(j)}(s, a) 
                &= \frac{1}{\tau} \sum_{k\in T_j} \widehat Q_h^k(s, a) - \widetilde B_h^k(s, a)
			\end{align*}
        \ENDFOR
	\end{algorithmic}
\end{algorithm}
Our main result stated below establishes the regret bound for \cref{alg:polsbe}.
\begin{theorem}
\label{thm:polsbe}
	With an appropriate choice of parameters and  assuming $K=\Omega((d \log d)^2)$, \cref{alg:polsbe} obtains an expected regret guarantee of
	\begin{align*}
		\E\sbr{\Reg} = \widetilde O\br{    
            d H^2 K^{6/7}
            + d^{3/2} H^4 K^{5/7}},
	\end{align*}
	where big-$\widetilde O$ hides constant and logarithmic factors.
\end{theorem}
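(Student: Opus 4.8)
The plan is to begin with the performance-difference lemma, which expresses the regret as a sum, over episodes and time steps, of the instantaneous advantage of $\pi^k$ over $\pi^\star$ weighted by the state-visitation distribution $\nu_h^{k,\star}$ induced by the comparator at step $h$ of episode $k$:
\begin{align*}
\Reg = \sum_{k=1}^K \sum_{h=1}^H \E_{s\sim \nu_h^{k,\star}} \ab{Q_h^{k,\pi^k}(s,\cdot),\, \pi_h^k(\cdot|s) - \pi_h^\star(\cdot|s)}.
\end{align*}
Into each inner product I would insert the optimistic surrogate $\widehat Q_h^k - \widetilde B_h^k$ and split the advantage into three pieces: a \emph{mirror-descent term} $\ab{\widehat Q_h^k - \widetilde B_h^k,\, \pi_h^k - \pi_h^\star}$, a \emph{$Q$-estimation bias term} $\ab{Q_h^{k,\pi^k} - \widehat Q_h^k,\, \pi_h^k - \pi_h^\star}$, and a \emph{bonus term} $\ab{\widetilde B_h^k,\, \pi_h^k - \pi_h^\star}$, each to be bounded in expectation under $\nu_h^{k,\star}$.

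For the mirror-descent term I would invoke the standard entropy-regularized online-mirror-descent (exponential weights) guarantee, adapted to the two-way blocking: since the policy is frozen to $\pi^{(j)}$ across each block $T_j$ and the update in \cref{alg:polsbe} aggregates the block-averaged surrogates $\L_h^{(j)}$ over $J = K/(2\tau)$ rounds, the per-state, per-step regret against $\pi_h^\star$ is at most $\tfrac{\log A}{\eta} + \eta \sum_{j} \E_{a\sim\pi_h^{(j)}}\!\bigl[(\L_h^{(j)}(s,a))^2\bigr]$. Multiplying by the $2\tau$ repetitions inside each block and summing over $h$ contributes a term of order $\tfrac{\tau H\log A}{\eta} + \eta H K\, G^2$, where $G$ is a high-probability bound on the surrogate magnitude $|\widehat Q_h^k - \widetilde B_h^k|$. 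Establishing a good bound on $G$ already requires the almost-sure boundedness of $\widehat Q^k$ (via the clipped least-squares estimate together with the MGR guarantee on $\widehat\Sigma_{kh\gamma}^+$) and the almost-sure boundedness of the bonus-to-go $\widetilde B^k$, the property singled out in the overview.

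The heart of the argument, and the step I expect to be the main obstacle, is to show that the bonus term dominates the $Q$-estimation bias so that their sum is nonnegative up to controllable lower-order error. The two-way partitioned blocking makes the regularized inverse-covariance estimator $\widehat\Sigma_{kh\gamma}^+$ independent of the episode-$k$ rollout given $\pi^{(j)}$, and hence renders $\widehat\vq_h^k$ (so $\widehat Q_h^k$) conditionally unbiased for the $\gamma$-regularized true $Q$-vector, so that $\E[\widehat Q_h^k]$ equals $Q_h^{k,\pi^k}$ up to a regularization bias governed by $\gamma$ and resampling/clipping biases governed by $N,M,\sigma$. The \emph{fluctuation} of $\widehat Q_h^k$ about its mean, measured against $\pi^\star$, is precisely what the immediate $Q$-bonus $b_h^k$ of \cref{eq:loss_bonus} is built to offset, its two terms accounting respectively for the $\norm{\phi(s,a)}_{\widehat\Sigma_{kh\gamma}^+}$ uncertainty at the played action and the expected uncertainty under $\pi_h^k$. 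Propagating this single-step compensation along the comparator's roll-in is where the bonus-to-go enters: via the analysis of the OLSPE subroutine (\cref{alg:olspe}) I must prove that $\widetilde B_h^k$ is, with high probability, an \emph{optimistic overestimate} of the true cumulative bonus $\E[\sum_{t\ge h} b_t^k \mid \pi^k]$ along any roll-in (in particular $\pi^\star$'s), using the linear structure of $\P$ to justify least-squares value backups in the non-linear-reward bonus MDP, together with the extra dynamics bonus $b^{\P,k}$ that keeps the per-step Bellman-consistency errors positive w.h.p. The difficulty is that this optimism-from-below must hold \emph{simultaneously} with a constant almost-sure upper bound on $\widetilde B_h^k$ (needed in the previous paragraph); the refined, uniformly bounded immediate-bonus design is exactly what reconciles these two demands and yields a tight least-squares estimation error, which is the crux of the improvement over \citet{luo2021policy}.

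It remains to collect the pieces and optimize the parameters. The net bias--bonus contribution reduces to the exploration cost $\sum_k\sum_h \E[b_h^k]$ along the agent's own trajectories plus the policy-evaluation error of OLSPE; the former is controlled by an elliptical-potential (log-determinant) bound on $\sum_k \norm{\phi(s_h^k,a_h^k)}^2_{\widehat\Sigma_{kh\gamma}^+}$ and scales with $\beta$ and $\beta^\P$, while the various estimator biases contribute lower-order terms in $\gamma,\epsilon,\sigma$. Substituting $\tau = MN = \widetilde\Theta(d/(\gamma^2\sigma))$, the regret becomes a sum of monomials in $\eta,\gamma,\beta,\beta^\P,\sigma$ and $K$; the dominant trade-off is between the blocking/mirror-descent term $\tfrac{\tau H}{\eta}\sim \tfrac{dH}{\gamma^2\sigma\eta}$ and the $\gamma$- and $\beta$-driven exploration and regularization terms that grow with $K$, and balancing all of them by choosing $\eta,\gamma,\beta,\beta^\P,\sigma$ as suitable powers of $K$ produces the exponents $6/7$ and $5/7$. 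The assumption $K=\Omega((d\log d)^2)$ guarantees that $\tau$ is a small enough fraction of $K$ for the block count $J$ to be nontrivial and for the MGR estimator to concentrate, completing the bound.
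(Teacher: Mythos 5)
Your overall architecture does coincide with the paper's: your opening identity and three-way split (with the two bias inner products against $\pi^k_h$ and $\pi^\star_h$ merged into a single one against $\pi^k_h-\pi^\star_h$) is exactly the decomposition in \cref{eq:regret_decomposition}, and your treatment of the bonus term --- OLSPE optimism with the dynamics bonus $b^{\P,k}$ keeping Bellman-consistency errors nonnegative, then cancellation of the comparator's collected bonus against the estimation bias --- is the content of \cref{lem:backup_confidence,lem:exploration_1}. The fatal gap is in the mirror-descent term. You bound the OMD local-norm (second-moment) term by $\eta H K G^2$ with $G$ a high-probability bound on the surrogate magnitude $|\widehat Q_h^k-\widetilde B_h^k|$; but the only such bound available is $G\approx 2H/\gamma$ (this is \cref{eq:loss_mag_bound}, and nothing better holds: in a direction with occupancy mass of order $\gamma$ the estimate $\widehat Q_h^k$ genuinely attains size $H\norm[b]{\widehat\Sigma^+_{kh\gamma}}\approx H/\gamma$ a non-negligible fraction of the time). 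Your OMD contribution is then $\eta H^3 K/\gamma^2$, and balancing it against the blocking term $\tau H\log A/\eta\approx dH/(\gamma^2\eta)$ and the exploration cost $\approx\sqrt\gamma\, d H^2 K$ yields at best a regret of order $K^{9/10}$, not $K^{6/7}$. The paper's proof never passes through a magnitude bound: \cref{lem:omd_term_bound_base} keeps the \emph{conditional second moment} $\E_k\sbr[b]{\widehat Q_h^k(s,a)^2}$ and controls it by $\tfrac{2H^2}{\sqrt\gamma}\E_k\norm[b]{\phi(s,a)}_{\widehat\Sigma^+_{kh\gamma}}+(1+\sigma)H^2$, using the MGR variance guarantee $\E\sbr[b]{\widehat\Sigma^+_{kh\gamma}\Sigma_{kh\gamma}\widehat\Sigma^+_{kh\gamma}}\preceq 2\E\sbr[b]{\widehat\Sigma^+_{kh\gamma}}+\sigma I$ of \cref{lem:mgr} (\cref{eq:mgr_variance}); the $1/\sqrt\gamma$ part is then absorbed into the immediate bonus through the couplings $\eta=\gamma/(2H)$ and $\beta=2H\sqrt{d\gamma}$, so that it ultimately cancels against the exploration term and only costs the agent's own bonuses, $O(\beta\sqrt d\, HK)=O(\sqrt\gamma\, dH^2K)$, plus lower-order terms $\eta\beta^2H^2K/\gamma\approx\gamma dH^4K$ and $\eta H^3K\approx\gamma H^2K$. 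This second-moment/variance chain is precisely the ``refined analysis'' the paper identifies as the source of the improvement over \citet{luo2021policy}; without it your plan cannot produce the stated exponents.

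A secondary inaccuracy: you propose to control the agent's collected immediate bonuses $\sum_{k,h}\E_{d_h^k}\sbr{b_h^k}$ with an elliptical-potential (log-determinant) argument applied to $\sum_k\norm{\phi(s_h^k,a_h^k)}^2_{\widehat\Sigma^+_{kh\gamma}}$. That lemma does not apply here, since $\widehat\Sigma^+_{kh\gamma}$ is a fresh per-block estimate of $\Sigma_{kh\gamma}^{-1}$ and does not accumulate rank-one updates across episodes. The paper instead uses Jensen plus a trace identity, $\E_{s,a\sim d_h^k}\norm{\phi(s,a)}_{\Sigma^{-1}_{kh\gamma}}\le\sqrt{\tr\br{\Sigma^{-1}_{kh}\Sigma_{kh}}}=\sqrt d$, giving $\E_k\E_{d_h^k}\sbr{b_h^k}\le 2\beta(\sqrt d+\sqrt\epsilon)$ per step (\cref{lem:loss_bonus_bound}); the elliptical potential enters only for the dynamics bonus $b^{\P,k}$, summed over the within-block dataset (\cref{lem:dynamics_bonus_bound_base}).
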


\subsection{Least Squares Policy Evaluation in Bonus MDPs}
\label{sec:olspe}
The \textbf{O}ptimistic-\textbf{L}east-\textbf{S}quares-\textbf{P}olicy-\textbf{E}valuation (OLSPE) procedure given in
\cref{alg:olspe} is a variant of LSVI-UCB (\citealp{jin2020provably}, see also \citealp{agarwal2019reinforcement}) that is aimed at policy evaluation, and tasked with the computation of the bonus-to-go estimates $\widetilde B^k$. Unlike prior works, we evaluate the policy's bonus (i.e., exploration) coverage, rather than its loss performance (which is estimated separately, in \cref{alg:polsbe}) in an auxiliary \emph{full information} bonus MDP.
Given the immediate $Q$-bonus $b^k$ of episode $k$, we consider the bonus MDP $ \br{\S, \A, H, \P, b^k, s_1}$, which should be interpreted as a \emph{reward} MDP, as the agent will be trying to collect \emph{higher} bonus values.
	It is immediate to see that this \emph{is not} a linear MDP, as the reward function $b^k$ is non-linear. 
	Nonetheless, the dynamics do admit a linear factorization (as per \cref{assume:linmdp}), which allows the use of least squares regression to approximate the value and action-value functions in this MDP.

\begin{algorithm}[!ht]
    \caption{OLSPE($\D^k, b^k; \beta^\P, \beta, \gamma$)} 
    \label{alg:olspe}
	\begin{algorithmic}[1]
            \STATE Set $\lambda = 1$
            
	    \STATE $\widetilde W_{H+1}^k(\cdot) = 0$
	    
	    \FOR {$h=H, \ldots, 1$}
            \STATE $\Lambda_h^{k} 
			\gets \lambda I + \sum\nolimits_{i \in \D_h^k} 
            \phi(s_h^i, a_h^i)\phi(s_h^i, a_h^i)\T$

            \STATE $\widehat \vw_h^k 
			\gets 
			\br{\Lambda_h^k}^{-1} \sum\nolimits_{i \in \D^k_h} \phi(s_h^i, a_h^i)\widetilde W_{h+1}^k(s_{h+1}^i)$

            \STATE $\widetilde \P_h^k \widetilde W_{h+1}^k (s, a) 
    		=  
    		\phi(s, a)\T \widehat \vw_h^k + b_{h}^{\P, k}(s, a)$

            \STATE $B_h^{\max} = 2\beta (H-h+1)/ \sqrt \gamma$

            \STATE $\widetilde B_h^k(s, a) 
			= {\rm clip}\sbr{b_h^k(s, a) 
		+ \widetilde \P_h^k \widetilde W_{h+1}^k(s, a)}_{0}^{B_h^{\max}}$

            \STATE $\widetilde W_{h}^k (s) 
			= \ab{\pi^k(\cdot|s), \widetilde B_h^k(s, \cdot)}$
   
        \ENDFOR 
        \STATE \textbf{return} $\widetilde B^k = \cb[b]{\widetilde B_h^k}_{h\in [H]}$
	\end{algorithmic}
\end{algorithm}
For any policy $\pi$, we denote the true value and action-value functions in the bonus MDP of episode $k$, respectively, by
\begin{align}
    B_h^{k, \pi}(s, a)  &\eqq Q_h^\pi(s, a; b^k),
    \\
    W_h^{k, \pi}(s) &\eqq V_h^\pi(s; b^k)
    \label{eq:W_true_def}.
\end{align}
\cref{alg:olspe}
computes optimistic versions of the above functions for the policy passed as input, which on episode $k$ is always the agent's policy $\pi^k$. 
These are denoted by $\widetilde B^k$ and $\widetilde W^k$, and defined in lines 8 and 9 in the algorithm. 
In accordance, we let $\widetilde \P_h^k$ defined in line 6 denote the optimistic estimate of the conditional expectation operator given by the dataset $\D_h^k$. Our notation here is motivated by the true conditional expectation operator $\P_h$;
recall we adopt the convention that $\P_h W (s, a) = \E_{s' \sim \P_h(\cdot|s, a)}W(s')$ for any function $W\colon \S \to \R$. We refer the reader to \cref{sec:appdx_prelim} for further comments regarding this notaiton.

\subsection[Obtaining unbiased Q estimates]{Obtaining unbiased $Q$ estimates}
In order to construct estimates of the loss vector associated with the action-value function of episode $k$ time step $h$, $Q_h^{k, \pi^k}$, we follow prior works and use a linear bandit type estimation procedure \citep[e.g., ][]{dani2007price}. 
Unlike the linear bandit setting, here we do not know the feature occupancy covariance matrix, and moreover it may not be well conditioned.
We address both of these issues in the same natural manner as did \citet{luo2021policy}; we estimate a $\gamma$-regularized version of the inverse covariance using the Matrix Geometric Resampling (MGR) procedure of \citet{neu2020efficient} (see also \citealp{neu2021online}).
Like \citet{luo2021policy}, we employ a version of MGR given in \cref{alg:mgr} that averages over multiple estimators to get better control of the variance of the final output, however we obtain tighter bounds owed to a refined analysis (see \cref{lem:mgr}).
\begin{algorithm}[!ht]
    \caption{MGR ($\D, N, M, \gamma$)} 
    \label{alg:mgr}
	\begin{algorithmic}
		\STATE Set $c=1/2$	
		\STATE Enumerate samples in $\D$ by $\cb{\phi_{m, n}}_{m\in [M],n\in [N]}$
	\STATE Let $A_{m, n} = \gamma I + \phi_{m, n} \phi_{m, n}\T \quad \forall m,n$	
        \FOR {$m=1,\ldots, M$}
			\FOR {$n=1,\ldots, N$}
                    
				\STATE $\widehat \Sigma^{(n)}_{m,\gamma} 
					\gets \prod_{i=1}^n(I - c A_{m, i} )$ 
			\ENDFOR
			\STATE $\widehat \Sigma^+_{m,\gamma} 
					\gets cI + c\sum_{n=1}^N \widehat \Sigma^{(n)}_{m,\gamma} $
		\ENDFOR
		
		\textbf{return} $\widehat \Sigma^+_\gamma =  \frac1M \sum_{m=1}^M \widehat \Sigma^+_{m,\gamma} $
	\end{algorithmic}
\end{algorithm}

\section{The Simulator Setting}
The pseudocode for the simulator version of our method is given in \cref{alg:polsbe_simulator} below. It has the same structure as the simulator based algorithm proposed by \citet{luo2021policy} for the linear-$Q$ setting, only that our bonus-to-go is computed using optimistic approximations via \cref{alg:olspe}.
Notably, the simulator required by our algorithm is weaker than that of \citet{luo2021policy}; we only need to execute agent policies from the initial state $s_1$, but do not require next state samples from arbitrarily chosen state action pairs. Formally, we make the following assumption in this section.

\begin{assumption}
\label{assume:simulator}
    The learner has access to a simulator, which takes a policy $\pi$ as input and returns a trajectory $(s_h, a_h)_{h=1}^H$ sampled from the MDP using $\pi$; $a_h\sim \pi(\cdot|s_h)$, and $s_{h+1} \sim \P_h(\cdot|s_h, a_h)$.
\end{assumption}

We note that \cref{alg:polsbe_simulator} follows the exact same algorithmic design as \cref{alg:polsbe}; only that instead of blocking, the version presented here executes simulator rollouts.
The significance of the result presented next is two-fold. First, it establishes the state-of-the-art regret bound for the simulator setting with a computationally efficient algorithm. 
Second, it demonstrates the guarantee our approach would yield without the limiting factor of the number of online samples; specifically, that given $\widetilde O(K^{4/3})$ additional samples per episode, we arrive at a $\widetilde O(K^{2/3})$ regret bound.

\begin{theorem}
\label{thm:polsbe_sim}
    With an appropriate choice of parameters and  assuming $K=\Omega((d \log d)^2)$, under \cref{assume:simulator},
\cref{alg:polsbe_simulator} obtains an expected regret guarantee of
	\begin{align*}
		\E\sbr{\Reg} = \widetilde O\br{
            H^2 (d K)^{2/3}
            + H^4 (dK)^{1/3}
		}
		,
	\end{align*}
	where big-$\widetilde O$ hides constant and logarithmic factors.
	Furthermore, the number of simulator rollouts required per episode is $\widetilde O(K^{4/3})$.
\end{theorem}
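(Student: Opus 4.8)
The plan is to mirror the analysis establishing \cref{thm:polsbe}, reusing its regret decomposition and its core estimation lemmas essentially verbatim, while exploiting the single structural change that \cref{assume:simulator} affords: per-episode simulator rollouts replace the two-way partitioned blocking device. Concretely, I would start from the standard policy-optimization decomposition obtained via the performance-difference lemma,
\begin{align*}
    V_1^{k,\pi^k}(s_1) - V_1^{k,\pi^\star}(s_1)
    = \sum_{h=1}^H \E_{s\sim d_h^{\pi^\star}}\ab{Q_h^{k,\pi^k}(s,\cdot),\, \pi_h^k(\cdot|s) - \pi_h^\star(\cdot|s)},
\end{align*}
and split $Q_h^{k,\pi^k}$ against the mirror-descent loss $\widehat L_h^k = \widehat Q_h^k - \widetilde B_h^k$. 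This yields three groups of terms: (i) an online-mirror-descent regret term $\sum_k\ab{\widehat L_h^k,\,\pi_h^k-\pi_h^\star}$; (ii) an optimism/bias term governed by $Q_h^{k,\pi^k}-\E\widehat Q_h^k$, i.e.\ the $\gamma$-regularization bias; and (iii) the bonus-to-go the agent collects along its own occupancy, $\sum_k \E_{\pi^k}[\widetilde B_1^k(s_1)]$.

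Because the simulator returns fresh, independent rollouts for each episode $k$, the dataset $\D^k$ used to form $\widehat\Sigma_{kh\gamma}^+$ (through \cref{alg:mgr}) and the OLSPE backups is independent of the real trajectory $(s_h^k,a_h^k,\l_h^k)$. This is precisely the independence that the two-way partitioned blocking engineers in \cref{alg:polsbe}, so in the simulator setting it holds for free and the partition is unnecessary. Consequently \cref{lem:mgr} applies directly to bound the bias and operator norm of $\widehat\Sigma_{kh\gamma}^+$, the linear-bandit estimator $\widehat\vq_h^k$ is conditionally unbiased for the $\gamma$-regularized $Q$-cost vector of \cref{lem:q_vec}, and its regularization bias is of order $\sqrt{\gamma d}\,H\norm{\phi}_{\widehat\Sigma_{kh\gamma}^+}$ — exactly the scale the immediate bonus $b^k$ of \cref{eq:loss_bonus} is designed to dominate by fixing $\beta\sim\sqrt{\gamma d}\,H$ (the residual MGR error being rendered negligible by the large sample count).

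For group (iii) I would reuse the LSVI-UCB-style argument to show that, with the dynamics bonus $b^{\P,k}$, the clipped least-squares backups produce a $\widetilde B^k$ that is (w.h.p.) a uniform upper bound on the true bonus-to-go $B^{k,\pi^k}$ while carrying bias controlled by $\norm{\phi}_{(\Lambda_h^k)^{-1}}$ along rollouts. Optimism of $\widehat Q^k-\widetilde B^k$ is what absorbs the dangerous distribution-mismatch terms in (ii): against $\pi^\star$ the estimate is a valid underestimate, so the mismatch contributes no positive regret, and one is left paying the bonus only along $\pi^k$'s own, well-covered occupancy. The key quantitative input is the per-episode trace bound $\E_{\pi^k}\norm{\phi(s_h,a_h)}_{\widehat\Sigma_{kh\gamma}^+}\le\sqrt{\operatorname{tr}(\widehat\Sigma_{kh\gamma}^+\Sigma_{kh})}\le\sqrt d$, which — crucially, and unlike a cumulative elliptical-potential bound — does \emph{not} decay across episodes; similarly $\E_{\pi^k}\norm{\phi}_{(\Lambda_h^k)^{-1}}\lesssim\sqrt{d/n}$ for $n=|\D_h^k|$ rollouts. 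Hence the collected-bonus term is $\sum_k\E_{\pi^k}[\widetilde B_1^k(s_1)]\lesssim \beta\sqrt d\,HK + \beta^\P\sqrt{d/n}\,HK$.

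Collecting the pieces, the regret is, schematically, $\frac{\log A}{\eta} + \eta\,(\text{variance})\,HK + \beta\sqrt d\,HK + \beta^\P\sqrt{d/n}\,HK$ with $\beta\sim\sqrt{\gamma d}\,H$. Since the collected bonus grows linearly in $K$, the only lever for sublinear regret is to shrink $\gamma$, and here the simulator is decisive: shrinking $\gamma$ inflates the MGR requirement $\tau=MN\sim d/(\gamma^2\sigma)$, but these rollouts carry \emph{no} regret cost, so nothing in the decomposition penalizes large $\tau$. Choosing $\gamma\sim(dK)^{-2/3}$ makes the cost-information bonus $\beta\sqrt d\,HK\sim H^2(dK)^{2/3}$ the leading term; this forces $\tau=\widetilde O(K^{4/3})$ rollouts, exactly as claimed, and the same rollouts serve the OLSPE backups so the dynamics contribution is the lower-order $\widetilde O(H^4(dK)^{1/3})$, while the mirror-descent term, with $\eta$ tuned, reduces to $\widetilde O(\sqrt{dK})$ and is negligible. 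The main obstacle is group (iii): proving that the OLSPE backups remain simultaneously optimistic and low-bias across all $H$ steps for the \emph{non-linear} bonus reward $b^k$, so that the $\pi^\star$-versus-$\pi^k$ distribution mismatch is fully absorbed into optimism rather than resurfacing in the high-variance $1/\gamma$ direction; once this is in place, the trace bounds, the martingale concentration over the now-unblocked $K$ rounds, and the final parameter balance are routine.
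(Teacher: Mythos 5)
Your proposal follows the paper's own route almost step for step: the same regret decomposition (you merely merge the two bias terms into one group), the same observation that per-episode simulator rollouts supply for free the independence that two-way blocking engineers in \cref{alg:polsbe}, the same MGR and OLSPE machinery with $\beta \sim H\sqrt{\gamma d}$, the same trace bounds $\E_{s,a\sim d_h^k}\norm{\phi(s,a)}_{\widehat \Sigma_{kh\gamma}^+} \lesssim \sqrt d$ and $\E_{s,a\sim d_h^k}\norm{\phi(s,a)}_{(\Lambda_h^k)^{-1}} \lesssim \sqrt{d/\tau}$, and the same final tuning $\gamma \sim (dK)^{-2/3}$ with $\tau = \widetilde O(1/\gamma^2) = \widetilde O(K^{4/3})$ rollouts. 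The final bound you reach is the correct one.

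However, your accounting of the mirror-descent term contains a genuine error: you claim that "with $\eta$ tuned" it reduces to $\widetilde O(\sqrt{dK})$ and is negligible. It cannot. The second moment of the OMD losses at a state $s\sim d_h^\star$ scales as $\frac{H^2}{\sqrt\gamma}\sum_a \pi_h^k(a|s)\norm{\phi(s,a)}_{\widehat\Sigma_{kh\gamma}^+}$ (\cref{lem:omd_term_bound_base}); since these states are drawn from $\pi^\star$'s occupancy, no trace bound applies, and the only way to control this term is to absorb it into the bonus $\beta\norm{\phi}_{\widehat\Sigma_{kh\gamma}^+}$, which forces $\eta \le \gamma/(2H)$ (this is exactly the constraint $\sqrt{\gamma d H^2} + 2\eta H^2/\sqrt\gamma \le \beta$ used in the paper's proofs). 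Under this constraint the learning-rate term is $\frac{H\log A}{\eta} \ge \frac{2H^2\log A}{\gamma} = \widetilde\Theta\br{H^2(dK)^{2/3}}$ — the same order as your "leading" cost-information bonus term, not negligible — and the residual variance $\eta\beta^2 H^3K/\gamma \sim \gamma d H^4 K = \widetilde\Theta\br{H^4(dK)^{1/3}}$ is in fact the source of the second term in the theorem statement (the dynamics-bonus contribution you attribute it to is lower order, about $H^3\sqrt{d}\,\gamma K$). Conversely, if you tuned $\eta \sim K^{-1/2}$ so as to actually obtain a $\sqrt K$-type OMD bound, you would violate $\eta \le \gamma/(2H) \sim K^{-2/3}/H$, the absorption would fail, and the unabsorbed mismatch variance of order $\eta H^3 K/\gamma$ would be superlinear in $K$. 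So your plan does go through and yields the stated bound, but only because the mis-accounted OMD term happens to land at the same $H^2(dK)^{2/3}$ order; the claim that it can be driven to $\widetilde O(\sqrt{dK})$ inside this framework is false.
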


\begin{algorithm}[!ht]
    \caption{PO-LSBE (simulator version)}
    \label{alg:polsbe_simulator}
	\begin{algorithmic}
	    \STATE \textbf{input:} $( \eta, \gamma, \beta, \beta^\P, \epsilon, \sigma^2)$, and a simulator
	    \STATE Set $M=\frac{48 d}{\gamma \sigma}\log\frac{72 d}{\gamma^2 \sigma}$,
	$N = \frac{2}{\gamma}\log\frac{1}{\gamma \epsilon}$, 
	$\tau = d^2 M N$. 
	    \STATE Initialize $\pi^{1}$ to take actions uniformly at random.
	    \FOR{$k=1, \ldots, K$}
	    	\STATE Rollout $\pi^k$ in and collect 
	    		$\cb{(s_h^k,a_h^k,\l_h^k)}_{h=1}^H$

	    	\STATE Populate $\D^k$ with $\tau$ \emph{simulator rollouts} of $\pi^k$
	    	\STATE
	    	\STATE $\widehat \Sigma_{kh\gamma}^+ 
			    	\gets {\rm MGR}(\D_h^k; N, M, \gamma)$ (see \cref{alg:mgr})
			\STATE $\widehat \vq_h^k \gets \widehat \Sigma^+_{kh\gamma} \phi(s_h^k, a_h^k) \sum_{t=h}^H \l_t^k$
			\STATE $\widehat Q_h^k(s, a) = \phi(s, a)\T \widehat \vq_h^k$
                \STATE 
			\STATE Define the $Q$-bonus as in \cref{eq:loss_bonus}
			
                \STATE Compute the bonus-to-go  with \cref{alg:olspe};
			\begin{align*}
			    \widetilde B^k \gets {\rm OLSPE}(\D^k, b^k; \beta^\P, \beta, \gamma)
			\end{align*} 
	    	
        	\STATE Policy improvement step:
        	\begin{align*}
				\pi_h^{k+1}(a|s)
				\propto \exp \br{-\eta \sum_{i=1}^{k} \widehat Q_h^i(s, a) - \widetilde B_h^i(s, a)}
			\end{align*}
        \ENDFOR
	\end{algorithmic}
\end{algorithm}

\section{Analysis Overview}
The analysis makes use of some additional notation described next. 
The state-action occupancy measure induced by a policy $\pi$ on time step $h$ is denoted $d_h^\pi(s, a) = \Pr(s_h=s, a_h=a \mid \pi)$, and with slight overloading $d_h^\pi(s) = \sum_a d_h^\pi(s, a)$ denotes the state occupancy measure.
In sake of conciseness, we let
\begin{align}
    d_h^k \eqq d_h^{\pi^k},
    \;\;
    d_h^\star \eqq d_h^{\pi^\star},
\end{align}
denote the occupancy measures of, respectively, the agent's policy on episode $k$ and the benchmark policy $\pi^\star$.
We let $\E_k\sbr{\cdot} = \E\sbr{\cdot | \pi^k, \ldots, \pi^1}$ denote the expected value of random variables conditioned on the sequence of agent policies up to and including episode $k$; and note this only indicates conditioning on policies and not trajectory rollouts. Finally, we may also use the more compact notation
\begin{align}
    Q_h^k \eqq Q_h^{k, \pi^k},
\end{align}
to refer to the true action-value function of the agent's policy $\pi^k$ in the MDP of episode $k$.

In what follows, we present the high level components of the analysis and provide a proof sketch for \cref{thm:polsbe}; for the full technical details, see \cref{sec:thm_proofs}.
Our high level proof structure is an extended (and slightly reframed) version of the one proposed by \citet{luo2021policy}.
We consider the following regret decomposition;
{
\begin{align*}
		\Reg = & \underbrace{\sum_{k=1}^K \sum_{h=1}^H \E_{s \sim d_h^\star}\sbr{
			\ab{Q_h^k(s, \cdot) - \widehat Q_h^k(s, \cdot), 
				\pi^k_h(\cdot|s)}
		}}_{\tBone}
		\\
		   + &\underbrace{\sum_{k=1}^K \sum_{h=1}^H \E_{s \sim d_h^\star}\sbr{
			\ab{\widehat  Q_h^k(s, \cdot) - Q_h^k(s, \cdot), 
				\pi^\star_h(\cdot|s)}
		}}_{\tBtwo}
		\\
		 + & \underbrace{\sum_{k=1}^K \sum_{h=1}^H \E_{s \sim d_h^\star}\sbr{
			\ab{\widehat  Q^k_h(s, \cdot) - \widetilde B^k_h(s, \cdot), 
				\pi^k_h(\cdot|s) - \pi^\star_h(\cdot|s)}
		}}_{\tOMD}
		\\
		 + & \underbrace{\sum_{k=1}^K \sum_{h=1}^H \E_{s \sim d_h^\star}\sbr{
			\ab{\widetilde B_h^k(s, \cdot), 
				\pi^k_h(\cdot|s) - \pi^\star_h(\cdot|s)}
		}}_{\tExploration}.
\end{align*}%
}%
An important observation made in \citet{luo2021policy} was that with an appropriate bonus design, the bias and OMD terms contribute $\sum_{k} V^{\pi^\star}(s_1; b^k)$, while the exploration term contributes the \emph{exact negative} of this quantity.
Fortunately, what we will pay for exploration (with a positive term), are the bonuses collected along trajectories of the agent's policy, which may be bounded efficiently.

\paragraph{Bounding the exploration term.}
We begin by establishing confidence bounds on the bonus-to-go estimations computed by \cref{alg:olspe} and defined in \cref{eq:B_tilde_def,eq:W_tilde_def}.
\begin{lemma*}[simplified statement of~\cref{lem:backup_confidence}]
 	For any $\delta > 0$,
 	an appropriate choice of parameters
	ensures that w.p. $\geq 1-\delta$ the following holds for all $k,h,s,a$;
	\begin{align}
		\widetilde B_h^k(s, a) 
		&\geq
		b_h^k(s, a) + \P_h\widetilde W_{h+1}^k(s, a)
            \label{eq:analysis_cb_1}
		\\
		\widetilde B_h^k(s, a) &\leq b_h^k(s, a) + \P_h\widetilde W_{h+1}^k(s, a)
		+ 2 b_h^{\P, k}(s, a)
            \label{eq:analysis_cb_2}
	\end{align}
\end{lemma*}
The proof follows from uniform concentration of the least squares estimates over the class of bonus value functions explored by the algorithm; the arguments are similar in spirit to those made in the work of \citet{jin2020provably}.
Next, we use the confidence bounds to deduce a bound on the exploration term. The lemma below contains a part that is implicit in \citet{luo2021policy} Lemma B.1, and an extension to incorporate the effect of the bonus-to-go approximations.
We note our proof below provides a simpler argument than the original of \citet{luo2021policy}, by offloading most of the technicalities to the extended value difference \cref{lem:extended_value_diff}.


\begin{lemma*}[compact restatement of~\cref{lem:exploration_1}]
	Assume that both \cref{eq:analysis_cb_1,eq:analysis_cb_2} hold. Then,
	\begin{align}
		\tExploration\leq
    2\sum_{k=1}^K \sum_{h=1}^H \E_{s, a \sim d_h^k}\sbr{
				b_h^{\P, k}(s, a)
				+ b_h^k(s, a)
			} 
		 - \sum_{k=1}^K V_1^{k, \pi^\star}(s_1; b^k)
		.
	\end{align}
\end{lemma*}
\begin{proof}[sketch]
	By the lower bound on $\widetilde B_h^k(s, a)$ \cref{eq:analysis_cb_1}, we have 
	\begin{align*}
    \tExploration 
    &\leq 
		\sum_{k=1}^K \sum_{h=1}^H \E_{s \sim d_h^\star}\sbr{
			\ab{\widetilde B_h^k(s, \cdot), 
				\pi^k_h(\cdot|s) - \pi^\star_h(\cdot|s)}
		}
		\\
		&\quad +
		\sum_{k=1}^K \sum_{h=1}^H \E_{s, a\sim d_h^\star}\sbr{
			\widetilde B_h^k(s, a) - b_h^k(s, a) - \P_h\widetilde W_{h+1}^k(s, a)
		}
		\\
		&= \sum_{k=1}^K \widetilde W_1^k - W_1^{k, \pi^\star}
		,
	\end{align*}
 where the inequality is since we only add non-negative terms, and the equality follows from the extended value difference \cref{lem:extended_value_diff} with $\widehat V^\pi_1 = \widetilde W_1^k = \widetilde W_1^{k, \pi^k}$ and $V_1^{\pi'} = W_1^{k, \pi^\star}$ (and we recall definitions in \cref{eq:W_tilde_def,eq:W_true_def}).
	Next, using \cref{lem:extended_value_diff} again and our upper bound on $\widetilde B_h^k(s, a)$ given by \cref{eq:analysis_cb_2}, establishes that
 $\sum_{k=1}^K \widetilde W^k_1 - W^{k, \pi^k}_1 \leq 2\sum_{k=1}^K \sum_{h=1}^H \E_{s, a \sim d_h^k}\sbr[b]{
				b_h^{\P, k}(s, a)
			}$. Therefore,
        \begin{align*}
            \sum_{k=1}^K \widetilde W_1^k - W_1^{k, \pi^\star}
            &=\sum_{k=1}^K \widetilde W_1^k - W_1^{k, \pi^k}
            + \sum_{k=1}^K W_1^{k, \pi^k} - W_1^{k, \pi^\star}
            \\
            &\leq 
            2\sum_{k=1}^K \sum_{h=1}^H \E_{s, a \sim d_h^k}\sbr{
				b_h^{\P, k}(s, a)
			}
            	+ \sum_{k=1}^K W_1^{k, \pi^k} - W_1^{k, \pi^\star}
          ,
        \end{align*}
        which completes the proof after substituting for the definition of the true bonus value functions \cref{eq:W_true_def}.
\end{proof}
From this point, it is not hard to obtain an in expectation bound; 
\begin{align}
    \E&\sbr{\tExploration}
    \lesssim
    \E\sbr{\sum_{k=1}^K \sum_{h=1}^H \E_{d_h^k}\sbr{
				b_h^{\P, k}(s, a)
				+ b_h^k(s, a)
			} }
    - \E\sbr{\sum_{k=1}^K V_1^{k, \pi^\star}(s_1; b^k)}
    \label{eq:analysis_exploration}.
\end{align}
Notably, the arguments thus far do not depend on the particular form of the immediate bonuses $b^k$, suggesting we would like to choose the bonus so that as much of $\tBone, \tBtwo$ and $\tOMD$ can be expressed as $V_1^{k, \pi^\star}(s_1; b^k)$.
\paragraph[Bounding Bias1 + Bias2]{Bounding $\tBone + \tBtwo$.}
To bound these terms, we employ relatively standard arguments in similar nature to those of \citet{luo2021policy}. However, we aim for a different immediate bonus function, earning important savings in the policy evaluation procedure.
Henceforth, we let
\begin{align}
\Sigma_{kh} \eqq \E_{s, a\sim d_h^k}\sbr{\phi(s, a)\phi(s, a)\T},
\end{align} 
denote the true covariance matrix of the feature occupancy induced by $\pi^k$ on time step $h$, and
denote by $\Sigma_{kh\gamma} \eqq \gamma I + \Sigma_{kh}$ the $\gamma$-regularized version of it.
\begin{lemma*}[simplified restatement of~\cref{lem:bias_bound}]
    For the immediate bonus function $b^k$ defined in \cref{eq:loss_bonus} and an appropriate choice of parameters, 
    we have that the expected bias terms are bounded as
    \begin{align*}
		\E &\sbr{ \tBone + \tBtwo}
		\leq 
	\\
    &\br{\sqrt{\gamma d H^2} } 
        \E\sbr{\sum_{k=1}^K \sum_{h=1}^H
			\E_{s \sim d_h^\star}\sbr{
   \sum_a \br{\pi_h^k(a|s) + \pi^\star_h(a|s)}\norm{\phi(s, a)}_{\widehat \Sigma_{kh\gamma}^+}
   }
		}
		+ 4\epsilon H^2 K.
    \end{align*}
\end{lemma*}
\begin{proof}[sketch]
	Since the MDPs on each episode are linear, we have
	$
		Q_h^{k, \pi^k}(s, a) = \phi(s, a)\T \vq_h^k
	$
	for some $\vq_h^k \in \R^d$ of bounded norm.
	In addition,
	\begin{align*}
		\E_k \sbr{ \widehat \vq_h^k }
		= \E_k \sbr{ \widehat \Sigma^+_{kh\gamma} } 
	    	\Sigma_{kh} \vq_h^k,
	\end{align*}
	and with an appropriate choice of parameters, our inverse covariance estimator is only $\epsilon$-biased (see \cref{lem:mgr}), which can be used to show that
	\begin{align*}
		\E_k \sbr{Q_h^k (s, a) - \widehat Q_h^k(s, a)}
		&= \E_k \sbr{\phi(s, a)\T \br{\vq_h^k - \widehat \vq_h^k}}
		\leq \gamma \phi(s, a)\T \Sigma^{-1}_{kh\gamma} \vq_h^k 
		+ \epsilon H.
	\end{align*}
	Using standard algebraic manipulations, we can further bound the first term appearing on the RHS above by $\sqrt {\gamma d} H \E_k \sbr{ 
			\norm{\phi(s, a)}_{\widehat \Sigma^{+}_{kh\gamma}}} + \epsilon H$, which leads to,
   \begin{align*}
        \E_k &\sbr{Q_h^k (s, a) - \widehat Q_h^k(s, a)}
        \leq 
        \sqrt {\gamma d} H \E_k \sbr{ 
            \norm{\phi(s, a)}_{\widehat \Sigma^{+}_{kh\gamma}} 
        }
        + 2 \epsilon H.
    \end{align*}
    The proof is complete by summing the bounds on the appropriate terms in $\tBone$ and $\tBtwo$, and adding them together.
\end{proof}
From this point, it is not hard to show that owed to our choice of bonus function $b^k$, the result of the above lemma becomes;
\begin{align}
		\E\sbr{ \tBone + \tBtwo}
		\lesssim
		\frac12 \E\sbr{\sum_{k=1}^K
V_1^{\pi^\star}(s_1; b^k)}
		+ \epsilon H^2 K.
    \label{eq:analysis_b1b2}
\end{align}

\paragraph{Bounding $\tOMD$ term.}
The variance of our estimators $\widehat \Sigma_{kh\gamma}^+$ comes into play in the second moment bound derived on the basic mirror-descent guarantee. Using a refined analysis, we show in \cref{lem:mgr} that $\tau = O(1/\gamma^2)$ samples are sufficient to ensure, for $\sigma = 1/4$;
\begin{align*}
    \E \sbr{\widehat \Sigma^+_{kh\gamma} \Sigma_{kh\gamma} \widehat \Sigma^+_{kh\gamma}}
    &\preceq
    2\E \sbr{\widehat \Sigma^+_{kh\gamma}}
    + \sigma I,
\end{align*}
Using the above, we prove;
\begin{lemma*}[simplified restatement of \cref{lem:omd_term_bound_blocking}]
Upon executing \cref{alg:polsbe} with an appropriate choice of parameters, we have for any $s, h$;
	\begin{align*}
		\E &\sbr{ \sum_{k=1}^K 
			\ab{\widetilde Q_h^k(s, \cdot), 
				\pi^k_h(\cdot|s) - \pi^\star_h(\cdot|s)}
		}
		\\
            &\lesssim
		  \frac{\eta H^2}{\sqrt \gamma} \E \sbr{
			\sum_{k=1}^K \sum_a \pi_h^k(a|s)\norm{\phi(s, a)}_{\widehat \Sigma_{kh\gamma}^+}
		}
            + \frac{\tau}{\eta}
		+ \frac{\eta \beta^2 H^2 K}{\gamma}
		+ \eta (1 + \sigma) H^2 K 
		.
	\end{align*}
\end{lemma*}
Taken together, these, along with our choice of bonus function $b^k$, establish that
\begin{align}
    \E\sbr{\tOMD} \lesssim \frac12 \E\sbr{\sum_{k=1}^K 
        V_1^{\pi^\star}(s_1; b^k)}
        + \frac{H}{\eta\gamma^2} 
        + \eta H^3 K
        .
    \label{eq:analysis_omd}
\end{align}

\paragraph{Concluding the proof.}
Combining \cref{eq:analysis_exploration,eq:analysis_b1b2,eq:analysis_omd}, and focusing on dependence on $K$, we obtain
\begin{align}
    \E\sbr{\Reg}
    &\lesssim
    \E\sbr{\sum_{k=1}^K \sum_{h=1}^H \E_{d_h^k}\sbr{
        b_h^{\P, k}(s, a)
        + b_h^k(s, a)
    } }
    \nonumber
    + \frac{1}{\eta \gamma^2} 
    + \eta  K
    + \epsilon K
    .
    \nonumber
\end{align}
We bound the bonus terms collected along the agent's trajectories above using standard arguments in \cref{lem:loss_bonus_bound,lem:dynamics_bonus_bound}, arriving at
\begin{align*}
    \E\sbr{\Reg}
    &\lesssim
    \sqrt \gamma K
    + \frac{1}{\eta \gamma^2} 
    + \eta K
    + \epsilon K
    .
\end{align*}
We can easily rid of the bias term $\epsilon K$ as $\tau$ depends on it only logarithmically. Finally, the first two terms dominate the regret at $\widetilde O(K^{6/7})$ for the setting of $\eta=\gamma/(2H)$, and $\gamma=K^{-2/7}$, and the proof is complete.

\subsection*{Acknowledgements}
The authors would like to thank Asaf Cassel for many helpful discussions.
This work was supported by the European Research Council (ERC) under the European Union’s Horizon 2020 research and innovation program (grant agreement No.~882396), by the Israel Science Foundation (grants number 993/17, 2549/19), by the Len Blavatnik and the Blavatnik Family foundation, by the Yandex Initiative in Machine Learning at Tel Aviv University, 
by a grant from the Tel Aviv University Center for AI and Data Science (TAD). 

\bibliography{main}

\newpage
\appendix

\section{Analysis Preliminaries}
\label{sec:appdx_prelim}
For convenience, the table below summarizes most of the notation used throughout the analysis.
\begin{center}
\begin{tabular}{ c | l}
	$\P_h (\cdot|s, a)$
	& 
	The density function of the next state given the agent is at $s$ and acts $a$
        \\
	$\P_h V \colon \S \times \A \to \R$
	& 
	For any function $V\colon \S \to \R$, defined by $\P_h V (s, a) = \E_{s' \sim \P_h(\cdot|s, a)}V(s')$
        \\
    $\E_k[\cdot]$
        &
        Expectation conditioned on past policies;
        $\E_k[\cdot] \eqq \E\sbr{\cdot | \pi^1, \ldots, \pi^k}$
	\\	
	$d_h^k, d_h^\star$
	& 
	State and state-action occupancy measures of $\pi^k, \pi^\star$.
	\\	
 $D^k = \cb{\D_h^k}$
	& 
	The dataset used to compute $\widetilde B^k$ and $\widehat \Sigma_{kh\gamma}^+$.
    \\
	$\vc_h^k \in \R^d$
	& 
	The adversarially chosen cost vector of episode $k$
        \\
	$\l_h^k(s, a)$
	& 
	The loss function of episode $k$ applied to $s, a$; $\l_h^k(s, a) = \phi(s, a)\T \vc_h^k$
        \\
        $\l_h^k \in \R$
	& 
	Loss of the agent on episode $k$ time $h$; $\l_h^k = \l_h^k(s_h^k, a_h^k)$ 
        \\
	$Q^{k ,\pi}$
	& 
	The $Q$ function of policy $\pi$ in the MDP of episode $k$
	\\
	$Q^{k}$
	& 
	The true $Q$ function of policy $\pi^k$ in the MDP of episode $k$
        \\	
	$\vq_h^k \in \R^d$
	& 
	The low dimensional representation of $Q_h^{k, \pi^k}$
	\\
	$\widehat \vq_h^k \in \R^d$
	& 
	(nearly) unbiased estimate of $\vq_h^k$, see \cref{alg:polsbe}
	\\
	$\widehat Q_h^k$
	& 
	(nearly) unbiased estimate of $Q_h^{k, \pi^k}$; $\widehat Q_h^{k} (s, a)=\phi(s, a)\T \widehat \vq_h^k$; see \cref{alg:polsbe}
	\\
	$b_h^k$
	& 
	Immediate bonus (also referred to as $Q$-bonus) function; see \cref{alg:polsbe}
	\\
	$b_h^{\P, k}$
	& 
	Dynamics bonus function, used for bonus-to-go optimism; see \cref{alg:olspe}
	\\
	$B_h^{k, \pi}$
	& 
	True bonus-to-go function in the bonus MDP 
	$B_h^{k, \pi}(s, a) =  Q_h^\pi(s, a; b^k)$ 
	\\	
	$W_h^{k, \pi}$
	& 
	True value function in the bonus MDP; $W_h^{k, \pi}=V_h^\pi(s; b^k)$
	\\
	$\widetilde B_h^k$
	& 
	The optimistic approximation of $B^{k, \pi^k}_h$ 
 ; see \cref{alg:olspe}
	\\
	$\widetilde W_h^k$
	& 
	The optimistic approximation of $W^{k, \pi^k}_h$; see \cref{alg:olspe}
	\\
	$\widetilde P_h^k \widetilde W_{h+1}^k\colon \S \times \A \to \R$
	& 
	The optimistic approximation of $\P_h \widetilde W^{k}_{h+1}$; see \cref{alg:olspe}
	\\
	$\Lambda_h^k \in \R^{d\times d}$
	& 
	Empirical non-normalized covariance of $d_h^k$;
        see \cref{alg:olspe}
        \\
	$\widehat \vw_h^k \in \R^d$
	& 
	Estimate of the low dimensional representation of $\P_h \widetilde W_{h+1}^k$;
    see \cref{alg:olspe}
	\\
	$\Sigma_{kh} \in \R^{d \times d}$
	& 
	Feature occupancy covariance; 
	$\Sigma_{kh} = \E_{s, a\sim d_h^k}\sbr{\phi(s, a) \phi(s, a)\T }$
	\\
	$\Sigma_{kh\gamma} \in \R^{d \times d}$
	& 
	$\gamma$-regularized feature occupancy covariance; 
	$\Sigma_{kh\gamma} = \gamma I + \Sigma_{kh}$
	\\
	$\widehat \Sigma_{kh\gamma}^+ \in \R^{d \times d}$
	& 
	(nearly) unbiased estimate of $\Sigma_{kh\gamma}^{-1}$, computed by \cref{alg:mgr}
        \\
	$\lambda$
	& 
	Regularization parameter for LSVI in \cref{alg:olspe}, fixed to $\lambda=1$ throughout.
        \\
	$\gamma$
	& 
	Regularization parameter for inverse covariance estimation, see \cref{alg:mgr}
        \\
	$\beta$
	& 
	$Q$-bonus function factor (see \cref{eq:loss_bonus})
        \\
	$\beta^\P$
	& 
	Dynamics bonus function factor (see \cref{eq:dynamics_bonus_def})
	\end{tabular}
\end{center}

\paragraph{Notation for conditional expectation operators.}
We use the convention that for any function $V \colon \S \to \R$, the conditional expectation operator is denoted by $\P_h$;
\begin{align}
    \P_h V (s, a) \eqq \E_{s' \sim \P_h(\cdot|s, a)} V(s').
\end{align}
We note the motivation for this notation comes from considering (when the state space is finite) the matrix $\P_h \in \R^{S A \times S}$ where $S = |\S|$, and the vector $V \in \R^{S}$. Then the result of multiplying them is indeed a vector $\P_h V \in \R^{S A}$ with $\P_h V (s, a) = \sum_{s'} \P_h(s'|s, a)V(s') = \E_{s' \sim \P_h(\cdot|s, a)} V(s')$. 
In similar spirit and with slight abuse of notation, we let $\widetilde \P_h^k \colon \R^{S} \to \R^{SA}$ denote an optimistic conditional expectation that \emph{is not} a linear operator, but rather defined by;
\begin{align*}
    \widetilde \P_h^k W (s, a) \eqq 
    (\widetilde \P_h^k W) (s, a) \eqq 
    \widehat \P_h^k W (s, a) + b_h^{\P, k}(s, a),
    \text{ where }
    \widehat \P_h^k 
    \eqq 
    \br{\Lambda_h^k}^{-1} \sum\nolimits_{i \in \D^k_h} \phi(s_h^i, a_h^i) \ve[s_{h+1}^i]\T,
\end{align*}
where $\ve[s]$ denotes the $s$'th standard basis vector in $\R^S$. Thus, the $\widetilde \P_h^k$ operator is composed from a linear one $\widehat \P_h^k$ plus a bonus term. The above decomposition is discussed to motivate our notation, but otherwise is not needed anywhere in our proofs as we always apply $\widetilde \P_h^k$ to $\widetilde W_{h+1}^k$.

\paragraph{Definitions from \cref{alg:olspe}.}
Below, we repeat definitions made in \cref{alg:olspe} that will be referred to throughout the analysis.

\begin{align}
			&b_h^{\P, k}(s, a) 
			= 
			\beta^{\P} \norm{\phi(s, a)}_{\br{\Lambda_h^k}^{-1}}
                \label{eq:dynamics_bonus_def}
			\\
			&\widetilde \P_h^k \widetilde W_{h+1}^k (s, a) 
    		=  
    		\phi(s, a)\T \widehat \vw_h^k + b_{h}^{\P, k}(s, a)
      \label{eq:backup_dynamics_def}
    		\\
                &B_h^{\max} = 2\beta (H-h+1)/ \sqrt \gamma
                \nonumber
			\\
   &\widetilde B_h^k(s, a) 
			= {\rm clip}\sbr{b_h^k(s, a) 
		+ \widetilde \P_h^k \widetilde W_{h+1}^k(s, a)}_{0}^{B_h^{\max}}
  \label{eq:B_tilde_def}
			\\
			&\widetilde W_{h}^k (s) 
			= \ab{\pi^k(\cdot|s), \widetilde B_h^k(s, \cdot)}
   \label{eq:W_tilde_def} 
		\end{align}

\paragraph{Bellman consistency equations.}
The value and action-value functions, in any MDP, satisfy;
\begin{align}
    Q_h^\pi &= \l_h + \P_h V_{h+1}^{\pi}
    \\
    V_h^{\pi}(s) 
    &= \ab{\pi(\cdot|s), Q_h^{\pi}(s, \cdot)}
\end{align}

\paragraph{Preliminary lemmas.}

\begin{lemma}
\label{lem:q_vec}
	Let $\M = \br{\S, \A, H, \P, \l}$ be any linear MDP (see \cref{def:linmdp}) with $\l_h(s, a)=\phi(s, a)\T \vc_h$ for cost vectors $\cb{\vc_h}_{h=1}^H \subset \R^d$. Then, for any policy $\pi$ and time step $h$, there exists $\vq_h^\pi \in \R^d$ such that $Q_h^\pi(s, a) = \phi(s, a)\T \vq_h^\pi$.
		Furthermore, $\norm{\vq_h^\pi} \leq H\sqrt d$.
\end{lemma}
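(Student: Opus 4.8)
The plan is to proceed by backward induction on the time step $h$, from $h=H$ down to $h=1$, using the Bellman consistency equations $Q_h^\pi = \l_h + \P_h V_{h+1}^\pi$ and $V_h^\pi(s) = \ab{\pi(\cdot|s), Q_h^\pi(s, \cdot)}$ together with the linear factorization of the transition kernel afforded by \cref{def:linmdp}. For the base case $h=H$, since the episode terminates after the last action we have $Q_H^\pi(s, a) = \l_H(s, a) = \phi(s, a)\T \vc_H$, so taking $\vq_H^\pi = \vc_H$ immediately yields the linear form, with $\norm{\vq_H^\pi} = \norm{\vc_H} \leq \sqrt d$.

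For the inductive step I would assume $Q_{h+1}^\pi(s, a) = \phi(s, a)\T \vq_{h+1}^\pi$ and expand the Bellman equation. The loss term contributes $\phi(s, a)\T \vc_h$ by linearity of the costs. For the transition term, I would substitute $\P_h(s'|s, a) = \phi(s, a)\T \psi_h(s')$ and pull the inner product outside the integral over next states:
\[
\P_h V_{h+1}^\pi(s, a) = \int \phi(s, a)\T \psi_h(s') V_{h+1}^\pi(s') \, {\rm d}s' = \phi(s, a)\T \br{\int \psi_h(s') V_{h+1}^\pi(s') \, {\rm d}s'}.
\]
This motivates defining $\vq_h^\pi \eqq \vc_h + \int \psi_h(s') V_{h+1}^\pi(s') \, {\rm d}s'$, which gives $Q_h^\pi(s, a) = \phi(s, a)\T \vq_h^\pi$ and closes the induction on the linear-realizability claim.

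For the norm bound I would apply the triangle inequality, $\norm{\vq_h^\pi} \leq \norm{\vc_h} + \norm{\int \psi_h(s') V_{h+1}^\pi(s') \, {\rm d}s'}$, where the first summand is at most $\sqrt d$. The second summand is where the structural normalization of \cref{def:linmdp} enters: since $V_{h+1}^\pi$ is the expected sum of at most $H-h$ instantaneous losses, each bounded in absolute value by $1$ (as $\av{\phi(s, a)\T \vc_h} \leq 1$), we have $\norm{V_{h+1}^\pi}_\infty \leq H-h$. I would then apply the assumption $\norm{\int \psi_h(s') f(s') \, {\rm d}s'} \leq \sqrt d$ to the rescaled function $f = V_{h+1}^\pi/(H-h)$, which has sup-norm at most $1$, to obtain
\[
\norm{\int \psi_h(s') V_{h+1}^\pi(s') \, {\rm d}s'} \leq (H-h)\,\norm{\int \psi_h(s') \tfrac{V_{h+1}^\pi(s')}{H-h} \, {\rm d}s'} \leq (H-h)\sqrt d.
\]
Combining the two summands yields $\norm{\vq_h^\pi} \leq (H-h+1)\sqrt d \leq H\sqrt d$, as claimed.

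The only genuine obstacle is this last norm bound on the $\psi_h$-integral term: because the value function scales with the horizon rather than being bounded by $1$, one cannot apply the normalization assumption directly and must first rescale by $(H-h)$. This rescaling is the sole nontrivial use of the structural conditions; everything else is a straightforward unrolling of the Bellman recursion and linearity of the feature parametrization, and notably the norm bound does not require the inductive hypothesis on $\norm{\vq_{h+1}^\pi}$ — only the uniform bound on $\norm{V_{h+1}^\pi}_\infty$.
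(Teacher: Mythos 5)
Your proof is correct and follows essentially the same route as the paper's: both unroll the Bellman equation, pull $\phi(s,a)\T$ out of the transition integral to define $\vq_h^\pi = \vc_h + \int \psi_h(s') V_{h+1}^{\pi}(s')\,{\rm d}s'$, and bound the norm by the triangle inequality together with the rescaled normalization assumption on $\psi_h$ (using $\norm{V_{h+1}^{\pi}}_\infty \leq H-1$). The only difference is cosmetic: the paper states this as a direct observation for each $h$, whereas you wrap it in a backward induction whose hypothesis, as you yourself note, is never actually used.
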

\begin{proof}
	Observe;
	\begin{align*}
        Q_h^{\pi}(s, a)
        = \l_h(s, a) 
        + \E \sbr{V_{h+1}^{\pi}(s_{h+1}) \mid s_h = s, a_h=a}
        = \phi(s, a)\T \br{ \vc_h + \int \psi_h(s')
            V_{h+1}^{\pi}(s') {\rm d} s'},
    \end{align*}
    thus the first claim follows with $\vq_h^\pi \eqq \vc_h + \int \psi_h(s')
            V_{h+1}^{\pi}(s') {\rm d} s' $. For the second part, 
            note that 
    \begin{align*}
    	\norm{\vq_h^\pi} 
    	= \norm{\vc_h + \int \psi_h(s')
            V_{h+1}^{\pi}(s') {\rm d} s'}
        \leq \sqrt d + \sqrt d \norm{V_{h+1}^{\pi}}_\infty
        \leq \sqrt d + \sqrt d (H - 1) = H \sqrt d,
    \end{align*}
    where the first inequality follows by assumption (see \cref{def:linmdp}).
\end{proof}
In what follows we will refer to the true low dimensional $Q$-vector on episode $k$ time step $h$;
\begin{align}
	\vq_h^k 
	\eqq 
	\vq_h^{k, \pi^k}
	\eqq  \vc_h^k + \int \psi_h(s')
            V_{h+1}^{k, \pi^k}(s') {\rm d} s'.
    \label{eq:q_vec_def}
\end{align}
By \cref{lem:q_vec}, we have that $\norm{\vq_h^k} \leq H\sqrt d$, and 
\begin{align*}
	Q_h^{k, \pi^k}(s, a) = \phi(s, a)\T \vq_h^k,
\end{align*}
for all $s, a, h, k$.

\begin{lemma}
\label{lem:independence_base}
	In both \cref{alg:polsbe,alg:polsbe_simulator}, it holds that for all $h\in[H], k\in [K]$, conditioned on $\pi^1, \ldots, \pi^k$, we have that $\vq_h^k$ is fixed,
 and that $\widehat \Sigma_{kh\gamma}^+$ and $\br{s_t^k, a_t^k, \l_t^k}_{t=1}^H$ are independent.
\end{lemma}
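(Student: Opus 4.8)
The plan is to treat the two assertions separately, both of which reduce to the fact that conditioning on $\pi^1,\ldots,\pi^k$ pins down every policy relevant to episode $k$ while leaving the fresh per-rollout randomness untouched. For the first assertion, I would start from the explicit form $\vq_h^k = \vc_h^k + \int \psi_h(s') V_{h+1}^{k,\pi^k}(s')\,{\rm d}s'$ given in \cref{eq:q_vec_def}. The measures $\psi_h$ are a fixed, non-random component of the shared dynamics; and since the adversary is deterministic and sees only the players' policies (never trajectory realizations), each $\vc_h^k$ is a deterministic function of $\pi^1,\ldots,\pi^k$ and hence fixed under the conditioning. The remaining factor $V_{h+1}^{k,\pi^k}$ is the value function of $\pi^k$ in the MDP of episode $k$, whose transitions are fixed and whose losses are determined by $\{\vc_t^k\}$; as all of these are measurable with respect to $\pi^1,\ldots,\pi^k$ (and $\pi^k$ is itself conditioned upon), $V_{h+1}^{k,\pi^k}$ and therefore $\vq_h^k$ are fixed.

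For the independence assertion I would exploit the disjointness deliberately engineered into the data collection. In \cref{alg:polsbe}, when $k$ lies in block $T_j$ the dataset $\D^k$ is populated solely from rollouts of the \emph{opposite} half-block (from $T_{j,2}$ when $k\in T_{j,1}$, and vice versa), so it never reuses episode $k$'s own rollout; and since the policy is frozen across the whole block, every such rollout is generated by $\pi^{(j)}=\pi^k$, which is fixed by the conditioning. In \cref{alg:polsbe_simulator} the same structure holds with $\D^k$ consisting of fresh simulator rollouts of $\pi^k$. In both cases, conditioned on $\pi^1,\ldots,\pi^k$ all the relevant rollouts are driven by one fixed policy in one fixed environment, and each draws on independent fresh randomness (environment reset, action sampling, transitions); hence the rollouts feeding $\D^k$ are jointly independent of the online rollout $(s_t^k,a_t^k)_{t=1}^H$ of episode $k$.

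To conclude, I would observe that $\widehat\Sigma_{kh\gamma}^+$ is computed by ${\rm MGR}$ from $\D_h^k$ using only the feature vectors $\phi(s_h^i,a_h^i)$ of the collected transitions, so it is a deterministic function of the $\D^k$-rollouts and, in particular, does not involve any losses or episode $k$'s data. By the previous paragraph it is therefore independent of episode $k$'s rollout given $\pi^1,\ldots,\pi^k$; and since $\l_t^k=\phi(s_t^k,a_t^k)\T\vc_t^k$ with $\vc_t^k$ fixed under the conditioning, the whole triple $(s_t^k,a_t^k,\l_t^k)_{t=1}^H$ is a measurable function of episode $k$'s rollout and inherits this independence. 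The one step that demands care --- and the only place the argument could appear circular --- is that in \cref{alg:polsbe} the dataset for $k\in T_{j,1}$ draws on \emph{future} episodes $T_{j,2}$; this is harmless precisely because the block policy $\pi^{(j)}$ is already determined by $\pi^k$ in the conditioning, so $\D^k$ introduces no unconditioned future-policy randomness, only independent rollout noise.
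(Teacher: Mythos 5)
Your proposal is correct and follows essentially the same route as the paper's proof: the deterministic adversary (observing only policies) makes $\vc_h^k$ and hence $\vq_h^k$ fixed under the conditioning, and the two-way block partitioning (respectively, fresh simulator rollouts) ensures $\D_h^k$ is disjoint from episode $k$'s own trajectory, so that $\widehat\Sigma_{kh\gamma}^+$ and $(s_t^k,a_t^k,\l_t^k)_{t=1}^H$ are conditionally independent. Your explicit remarks — unpacking $\vq_h^k$ via \cref{eq:q_vec_def} and noting that the "future" half-block rollouts cause no circularity since $\pi^{(j)}$ is already fixed by the conditioning — are slight elaborations of points the paper leaves implicit, but the argument is the same.
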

\begin{proof}
	First note that a-priori $\vq_h^k$ is a random variable determined by the adversary's choice of cost vectors on episode $k$, which may depend on $\pi^1, \ldots, \pi^k$.
	However, when conditioning on $\pi^1, \ldots, \pi^k$ the adversary's (which we assume is deterministic) is clearly fixed. 
 
    For the second part in the claim, consider first \cref{alg:polsbe_simulator}, where $\widehat \Sigma_{kh\gamma}^+$ is computed from samples generated by the simulator. Thus it immediately follows that $\widehat \Sigma_{kh\gamma}^+$ and $\br{s_t^k, a_t^k, \l_t^k}_{t=1}^H$ are indeed independent conditioned on $\pi^k$, for all $h, k$.
	
	For \cref{alg:polsbe}, let $k, h$, such that $k \in T_j$, and note that $\cb{\pi^1, \ldots, \pi^k}$ are in fact just $\cb{\pi^{(1)}, \ldots, \pi^{(j)}}$. 
    Conditioning on $\pi^k$, all rollouts in block $j$ are independent. 
    In addition, transitions of episode $k$ \emph{are not} contained in $\D_h^k$ (by the two-way block partitioning \cref{eq:block_T_j}).
 Thus, conditioning on $\pi^{k} = \pi^{(j)}$, this immediately implies $\widehat \Sigma_{kh\gamma}^+$ (which is computed only from samples in $\D_h^k$) and $\br{s_t^k, a_t^k, \l_t^k}_{t=1}^H$ are indeed independent, and completes the proof.
\end{proof}

\section{Theorem Proofs}

\label{sec:thm_proofs}

The analysis begins by considering a slightly reframed version of the regret decomposition proposed by \citet{luo2021policy};
    \begin{align}
		\text{Regret}
		=\sum_{k=1}^K \sum_{h=1}^H &\E_{s \sim d_h^\star}\sbr{
			\ab{Q_h^k(s, \cdot), 
				\pi^k_h(\cdot|s) - \pi^\star_h(\cdot|s)}
		}
            \nonumber
		\\
		&= \underbrace{\sum_{k=1}^K \sum_{h=1}^H \E_{s \sim d_h^\star}\sbr{
			\ab{Q_h^k(s, \cdot) - \widehat Q_h^k(s, \cdot), 
				\pi^k_h(\cdot|s)}
		}}_{\tBone}
            \nonumber
		\\
		&\quad + \underbrace{\sum_{k=1}^K \sum_{h=1}^H \E_{s \sim d_h^\star}\sbr{
			\ab{\widehat  Q_h^k(s, \cdot) - Q_h^k(s, \cdot), 
				\pi^\star_h(\cdot|s)}
		}}_{\tBtwo}
		\nonumber
            \\
		&\quad + \underbrace{\sum_{k=1}^K \sum_{h=1}^H \E_{s \sim d_h^\star}\sbr{
			\ab{\widehat  Q_h^k(s, \cdot) - \widetilde B_h^k(s, \cdot), 
				\pi^k_h(\cdot|s) - \pi^\star_h(\cdot|s)}
		}}_{\tOMD}
            \nonumber
		\\
		&\quad + \underbrace{\sum_{k=1}^K \sum_{h=1}^H \E_{s \sim d_h^\star}\sbr{
			\ab{\widetilde B_h^k(s, \cdot), 
				\pi^k_h(\cdot|s) - \pi^\star_h(\cdot|s)}
		}}_{\tExploration}
  \label{eq:regret_decomposition}
	\end{align}

Next, we will state the relevant lemmas used to bound each of the terms, and then proceed to the main proof.
All subsequent arguments hinge on peroperties of our inverse covariance estimators, which are stated in the below lemma, and proved in \cref{sec:proofs:mgr}.
\begin{lemma}[MGR]
\label{lem:mgr}
	Let $\epsilon, \sigma, \gamma > 0$ be three parameters and assume also $\sigma \leq 1/4$, $\epsilon \leq \sigma/6$ and that $\gamma < 1/2$.
	Assume $\D$ contains $MN$ i.i.d.~samples $\cb{\phi} \subset \R^d$, $\norm{\phi} \leq 1$, from some distribution $p$, and let $\Sigma_{\gamma} \eqq \E_{\phi \sim p}\sbr{\phi \phi\T} + \gamma I$.
	Then invoking \cref{alg:mgr} with arguments $(\D, M, N, \gamma)$, for $M=\frac{48 d}{\gamma \sigma}\log\frac{72 d}{\gamma^2 \sigma}$ and 
	$N = \frac{2}{\gamma}\log\frac{1}{\gamma \epsilon}$, we have
	\begin{align}
		\norm{\widehat \Sigma^{+}_{\gamma}} 
		&\leq \frac{1}{\gamma}
		\text{ almost surely,} 
		\label{eq:mgr_norm}
		\\
		\norm[B]{\E\sbr{\widehat \Sigma_{\gamma}^+} - \Sigma_{\gamma}^{-1}}
		&\leq \epsilon,
		\label{eq:mgr_bias}
		\\
		\E\sbr{\widehat \Sigma^+_\gamma \Sigma_\gamma \widehat \Sigma^+_\gamma}
		&\preceq
		2\E \sbr{\widehat \Sigma^+_\gamma}
		+ \sigma I.
		\label{eq:mgr_variance}
	\end{align}
\end{lemma}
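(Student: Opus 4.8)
The plan is to analyze the Matrix Geometric Resampling estimator through the Neumann series identity $\Sigma_\gamma^{-1} = c\sum_{n=0}^\infty (I - c\Sigma_\gamma)^n$, exploiting that the random factors $A_{m,n} = \gamma I + \phi_{m,n}\phi_{m,n}\T$ are i.i.d.\ with $\E\sbr{A_{m,n}} = \Sigma_\gamma$. Throughout I write $c = 1/2$, $G \eqq I - c\Sigma_\gamma$, and $X_m \eqq \widehat\Sigma^+_{m,\gamma}$ for the per-block estimator, so that $\widehat\Sigma^+_\gamma = \frac1M\sum_m X_m$ and, factoring out the first sample, $X_m = cI + (I - cA_{m,1})X_m'$ where $X_m'$ is the $(N-1)$-term estimator on the remaining samples and is independent of $A_{m,1}$. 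The first step is to record the basic spectral facts: since $\norm{\phi_{m,n}}\le 1$, each $A_{m,n}$ has spectrum in $[\gamma, 1+\gamma]$, so $I - cA_{m,n}$ is positive semidefinite with operator norm at most $1 - c\gamma = 1 - \gamma/2 < 1$; likewise $\Sigma_\gamma$ has spectrum in $[\gamma, 1+\gamma]$, whence $0 \preceq G \preceq (1 - \gamma/2)I$.

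For the almost-sure norm bound \cref{eq:mgr_norm}, I would apply submultiplicativity of the operator norm to the product $\prod_{i=1}^n(I - cA_{m,i})$, giving norm at most $(1-\gamma/2)^n$, and then sum the geometric series: $\norm{X_m} \le c\sum_{n=0}^N (1-\gamma/2)^n \le c\cdot\tfrac{2}{\gamma} = \tfrac1\gamma$, which is preserved by averaging over $m$. For the bias bound \cref{eq:mgr_bias}, taking expectations and using independence of the factors within a block telescopes the products, $\E\sbr{X_m} = c\sum_{n=0}^N G^n = \Sigma_\gamma^{-1}(I - G^{N+1})$, so the truncation error is $\Sigma_\gamma^{-1} - \E\sbr{X_m} = \Sigma_\gamma^{-1}G^{N+1}$, a product of commuting positive semidefinite matrices of operator norm at most $(1-\gamma/2)^{N+1}/\gamma \le e^{-\gamma N/2}/\gamma \le \epsilon$ by the choice $N = \tfrac2\gamma\log\tfrac1{\gamma\epsilon}$.

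The variance bound \cref{eq:mgr_variance} is the crux. I would first split the quadratic form using the i.i.d.\ structure across blocks: writing $\bar X \eqq \E\sbr{X_m}$,
\begin{align*}
\E\sbr{\widehat\Sigma^+_\gamma \Sigma_\gamma \widehat\Sigma^+_\gamma}
= \frac{M-1}{M}\bar X \Sigma_\gamma \bar X + \frac1M \E\sbr{X_m\Sigma_\gamma X_m}.
\end{align*}
For the cross term, the explicit form $\bar X = \Sigma_\gamma^{-1}(I - R)$ with $R \eqq G^{N+1}$ satisfying $0 \preceq R \preceq I$ gives $\bar X\Sigma_\gamma\bar X = (I-R)^2\Sigma_\gamma^{-1} \preceq (I-R)\Sigma_\gamma^{-1} = \bar X = \E\sbr{\widehat\Sigma^+_\gamma}$, using $(I-R)^2 \preceq I-R$ and commutativity of $R$ with $\Sigma_\gamma^{-1}$. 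It then remains to bound the single-block second moment $\E\sbr{X_m\Sigma_\gamma X_m}$; crucially, the naive estimate $\norm{X_m}^2\norm{\Sigma_\gamma} \le 2/\gamma^2$ is too weak, since dividing by $M \sim d/(\gamma\sigma)$ would leave a spurious $1/(d\gamma)$ factor. The refined goal is $\E\sbr{X_m\Sigma_\gamma X_m} \preceq O(d/\gamma)\,I$, which after division by $M$ yields $\tfrac1M\E\sbr{X_m\Sigma_\gamma X_m} \preceq \sigma I$; combined with the cross-term bound this gives $\E\sbr{\widehat\Sigma^+_\gamma\Sigma_\gamma\widehat\Sigma^+_\gamma} \preceq 2\E\sbr{\widehat\Sigma^+_\gamma} + \sigma I$.

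To obtain the $O(d/\gamma)$ bound I would set up a recursion on the per-block second moment via the front-factorization $X_m = cI + (I - cA_{m,1})X_m'$: expanding $\E\sbr{X_m\Sigma_\gamma X_m}$ and taking expectation over $A_{m,1}$ (independent of $X_m'$) produces a linear recursion whose homogeneous part contracts at rate $\norm{G}\le 1-\gamma/2$, while the inhomogeneous part is generated by the rank-one increments $\phi_{m,1}\phi_{m,1}\T$. The dimension factor $d$ (in place of the crude $1/\gamma$) enters exactly here: terms of the form $\E\sbr{(\phi\T M\phi)\phi\phi\T} \preceq \norm{M}\,\E\sbr{\phi\phi\T}$ are accounted against the trace bound $\operatorname{tr}(\Sigma_\gamma^{-1}) \le d/\gamma$ rather than the operator-norm bound $\norm{\Sigma_\gamma^{-1}}\le 1/\gamma$. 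The main obstacle is precisely this recursion: controlling the non-commuting products carefully enough that the geometric contraction and the trace-based accounting of the rank-one terms combine to deliver the improved $d/\gamma$ dependence rather than $1/\gamma^2$, which is what ultimately permits the $\tau = O(1/\gamma^2)$ sample complexity.
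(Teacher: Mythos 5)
Your treatments of \cref{eq:mgr_norm} and \cref{eq:mgr_bias} are correct and essentially identical to the paper's proof: submultiplicativity plus a geometric series for the almost-sure norm bound, and independence plus the truncated Neumann series $\E\sbr{\widehat\Sigma^+_\gamma} = c\sum_{n=0}^N (I-c\Sigma_\gamma)^n = \Sigma_\gamma^{-1}(I - G^{N+1})$ for the bias. Your block decomposition $\E\sbr{\widehat\Sigma^+_\gamma\Sigma_\gamma\widehat\Sigma^+_\gamma} = \frac{M-1}{M}\bar X\Sigma_\gamma\bar X + \frac1M\E\sbr{X_m\Sigma_\gamma X_m}$ and the cross-term bound $\bar X\Sigma_\gamma\bar X = (I-R)^2\Sigma_\gamma^{-1} \preceq \bar X$ are also correct.

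The gap is exactly where you flag it: the single-block claim $\E\sbr{X_m\Sigma_\gamma X_m} \preceq O(d/\gamma)\, I$ is asserted, not proven, and the recursion you sketch does not deliver it as described. The homogeneous part of the second-moment recursion is not $M \mapsto GMG$ contracting at rate $\norm{G}\le 1-\gamma/2$; it is $\mathcal T(M) = \E\sbr{(I-cA)M(I-cA)} = GMG + c^2\E\sbr{(A-\Sigma_\gamma)M(A-\Sigma_\gamma)}$, and the extra variance term moves mass \emph{between} eigendirections of $\Sigma_\gamma$. In fact $\mathcal T$ is not even a PSD-order contraction: taking $\phi = \sqrt{1-\gamma}\,e_1 \pm \sqrt{\gamma}\,e_2$ equiprobably, the stiff direction $e_1$ injects roughly $c^2\gamma$ into the soft direction $e_2$, which exceeds the local decrease $2c(\Sigma_\gamma)_{22}^2 = O(\gamma^2)$, so $\mathcal T(\Sigma_\gamma)\not\preceq \Sigma_\gamma$. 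Moreover, even with good control of $V_n \eqq \E\sbr{P_n\T\Sigma_\gamma P_n}$ (where $P_n$ is the $n$-fold product), what you actually need is the double sum $\sum_{n,n'}\E\sbr{P_n\T\Sigma_\gamma P_{n'}}$, whose off-diagonal terms equal $V_nG^{n'-n}$; bounding these by $\norm{V_n}\norm{G}^{n'-n}$ yields order $N/\gamma \approx \gamma^{-2}\log\frac{1}{\gamma\epsilon}$ — precisely the naive rate you correctly identified as insufficient, since $\frac{1}{M\gamma^2}\le\sigma$ would force $\gamma\gtrsim 1/d$, whereas the theorems use $\gamma = K^{-2/7}$. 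A trace identity of the kind you allude to does give a clean global budget (one can show $\sum_n \tr\br{V_n\Sigma_\gamma} \lesssim \tr\br{\Sigma_\gamma}$), but converting that scalar accounting into the required PSD-order bound on the double sum, with its non-commuting cross terms, is the entire difficulty, and it is left open in your write-up.

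The paper sidesteps the single-block second moment altogether, which is worth internalizing as the fix. It exploits the average over the $M$ i.i.d.\ blocks via concentration: a scalar Chernoff bound (\cref{lem:nameless_concentration_ub}) applied to $\phi\T\widehat\Sigma^+_{m,\gamma}\phi$, followed by a union bound over a $1/M$-net of the unit sphere (\cref{lem:Sigma_bernstein}) — this covering number is where the factor $d$ enters, rather than through a trace — gives, with probability $1-\delta$, the one-sided bound $\widehat\Sigma^+_\gamma \preceq 2\E\sbr{\widehat\Sigma^+_\gamma} + \alpha I$ with $\alpha = \frac{4d}{\gamma M}\log\frac{3M}{\delta}$. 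Combined with the bias bound this yields $\widehat\Sigma^+_\gamma - \alpha' I \preceq 2\Sigma_\gamma^{-1}$ for $\alpha' = 2\epsilon + \alpha$; since each block estimator satisfies $\widehat\Sigma^+_{m,\gamma}\succeq cI = \frac12 I$ (this is the purpose of the $cI$ term in \cref{alg:mgr}), the left-hand side has $\lambda_{\min}\ge \frac14$ and the relation can be \emph{inverted} to $\Sigma_\gamma \preceq 2\br{\widehat\Sigma^+_\gamma - \alpha' I}^{-1}$, hence $\br{\widehat\Sigma^+_\gamma - \alpha' I}\Sigma_\gamma\br{\widehat\Sigma^+_\gamma - \alpha' I} \preceq 2\widehat\Sigma^+_\gamma$ on the good event; the failure event is paid for in expectation by the almost-sure bound $4/\gamma^2$ with $\delta = \gamma/M$ (\cref{lem:Sigma3_bernstein}). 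To complete your proof you should either supply an actual proof of your single-block claim — which, per the above, requires substantially more than geometric contraction plus trace bookkeeping — or adopt this concentration-plus-inversion route, which uses only tools you already have.
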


To bound the exploration term, we intially establish confidence bounds on our approximate bouns-to-go functions.
\begin{lemma}[Bonus backup confidence bounds]
\label{lem:backup_confidence}
 	Assume $\beta = 2 H \sqrt {\gamma d}, \lambda \geq 1, \gamma \geq 1/K$, $|\D_h^k| = \widetilde O((dHK)^4)$, and $\norm[b]{\widehat \Sigma^+_{kh\gamma}}\leq 1/\gamma$ for all $k, h$.
	Then, there exists a universal constant $C_1$, such that
	for any $\delta > 0$, setting 
	$\beta^\P \geq C_1 H^2d^{3/2}\log\br{
			d \beta K H /\delta
		}$ ensures that w.p. $\geq 1-\delta$ the following holds for all $k,h,s,a$;
	\begin{align}
		b_h^k(s, a) + \P_h\widetilde W_{h+1}^k(s, a)
		\leq
		\widetilde B_h^k(s, a) 
		&\leq b_h^k(s, a) + \P_h\widetilde W_{h+1}^k(s, a)
		+ 2 b_h^{\P, k}(s, a)
		,
        \label{eq:backup_confidence_bounds}
	\end{align}
    where $\widetilde B_h^k, \widetilde W_h^k$ are defined in \cref{eq:B_tilde_def,eq:W_tilde_def}.
\end{lemma}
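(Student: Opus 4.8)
The plan is to establish the two-sided confidence bound on $\widetilde B_h^k$ by backward induction on $h$, reducing everything to a single high-probability control on the least squares estimation error of the conditional expectation operator $\widehat \P_h^k$ applied to the value function $\widetilde W_{h+1}^k$. The core quantity to bound is $|(\widehat \P_h^k - \P_h)\widetilde W_{h+1}^k(s,a)|$, and the aim is to show this is dominated by the dynamics bonus $b_h^{\P,k}(s,a) = \beta^\P \norm{\phi(s,a)}_{(\Lambda_h^k)^{-1}}$ uniformly over all $s,a$ with high probability. Once this one-sided/two-sided estimation error bound is in hand, the claimed inequalities follow almost mechanically: for the lower bound, since $\widetilde \P_h^k \widetilde W_{h+1}^k = \widehat \P_h^k \widetilde W_{h+1}^k + b_h^{\P,k}$ overestimates $\P_h \widetilde W_{h+1}^k$, the pre-clip value $b_h^k + \widetilde \P_h^k \widetilde W_{h+1}^k$ already exceeds $b_h^k + \P_h \widetilde W_{h+1}^k$, and I must check the clipping in \cref{eq:B_tilde_def} does not destroy this — i.e.\ that $b_h^k + \P_h \widetilde W_{h+1}^k$ itself lies in $[0, B_h^{\max}]$, which needs an inductive bound $\widetilde W_{h+1}^k \le B_{h+1}^{\max}$ together with the definition of $B_h^{\max} = 2\beta(H-h+1)/\sqrt\gamma$. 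The upper bound is symmetric: the estimation error adds at most $b_h^{\P,k}$ and the bonus itself contributes another $b_h^{\P,k}$, giving the $2 b_h^{\P,k}$ slack.

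\textbf{Key steps in order.} First I would set up the LSVI regression identity, writing $\widehat \vw_h^k = (\Lambda_h^k)^{-1}\sum_{i\in\D_h^k}\phi(s_h^i,a_h^i)\widetilde W_{h+1}^k(s_{h+1}^i)$ and expanding $\widehat \P_h^k \widetilde W_{h+1}^k(s,a) - \P_h \widetilde W_{h+1}^k(s,a)$ into the usual three pieces: a regularization term $-\lambda\phi(s,a)\T(\Lambda_h^k)^{-1}\vw$ where $\vw$ is the true low-dimensional representation of $\P_h \widetilde W_{h+1}^k$ (which exists by the linear factorization of the dynamics in \cref{assume:linmdp}, exactly as in \cref{lem:q_vec}), and a noise/martingale term $\phi(s,a)\T(\Lambda_h^k)^{-1}\sum_i \phi_h^i(\widetilde W_{h+1}^k(s_{h+1}^i) - \P_h \widetilde W_{h+1}^k(s_h^i,a_h^i))$. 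Second, I would bound the regularization term by $\sqrt\lambda\norm{\vw}\norm{\phi(s,a)}_{(\Lambda_h^k)^{-1}}$ via Cauchy–Schwarz, using that $\norm{\vw}\le \sqrt d\, \norm{\widetilde W_{h+1}^k}_\infty \le \sqrt d\, B_{h+1}^{\max}$. Third — and this is the crux — I would control the noise term by a self-normalized concentration inequality, which requires a uniform (covering-number) bound over the class of value functions $\widetilde W_{h+1}^k$ that the algorithm can produce; this is where the structural restrictions on the bonus matter, since $\widetilde W_{h+1}^k$ is built from clipped combinations of $b^k$, $\widehat \vw$, and $b^{\P,k}$, all parametrized by bounded vectors and a bounded matrix $\widehat \Sigma^+_{kh\gamma}$ (using $\norm{\widehat \Sigma^+_{kh\gamma}}\le 1/\gamma$ and $B_h^{\max} \le 2\beta H/\sqrt\gamma$). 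Taking a net over this parameter space and a union bound over $h,k$ produces the logarithmic factor $\log(d\beta KH/\delta)$, and calibrating the net resolution against the sample size $|\D_h^k| = \widetilde O((dHK)^4)$ fixes the constant $C_1$ and the required magnitude of $\beta^\P$.

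\textbf{Main obstacle.} The hard part will be the uniform concentration step, because the value function $\widetilde W_{h+1}^k$ being regressed is itself data-dependent — it is constructed from the very dataset $\D^k$ used in the backup — so the martingale term's summands are not independent of the weights. The standard remedy, which I would follow here in the spirit of \citet{jin2020provably}, is to union-bound over a carefully constructed $\epsilon$-net of the realizable class of $\widetilde W$ functions rather than over the single random realization; the subtlety is verifying that this class has controlled complexity despite the non-linearity introduced by the bonus terms and the clipping. A secondary technical point is the induction bookkeeping for the clipping bounds: I must confirm at each level that both the true backup $b_h^k + \P_h \widetilde W_{h+1}^k$ and the estimated backup stay within $[0, B_h^{\max}]$, so that the clip operator is a no-op where it matters and the inequalities propagate cleanly down the horizon. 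With the covering bound established and the net resolution chosen small enough (polynomially in $1/(dHK)$, absorbed by the large sample size), the remaining manipulations are routine.
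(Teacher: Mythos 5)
Your proposal is correct and takes essentially the same route as the paper's proof: the same LSVI error decomposition into a regularization term (bounded via the $\sqrt d\,\norm{\widetilde W_{h+1}^k}_\infty$ representation) and a self-normalized noise term, the same uniform concentration over an $\epsilon$-net of the data-dependent class of clipped bonus value functions in the spirit of \citet{jin2020provably}, and the same case analysis on the clipping to propagate the two-sided bound. The only cosmetic difference is that you frame the bookkeeping as backward induction on $h$, whereas the paper needs no induction: the clip in \cref{eq:B_tilde_def} gives $0 \le \widetilde W_{h+1}^k \le B_{h+1}^{\max}$ deterministically, so the function-class membership and the bound $b_h^k + \P_h \widetilde W_{h+1}^k \le B_h^{\max}$ hold without any inductive hypothesis.
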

The proof of \cref{lem:backup_confidence} follows from uniform concentration over the class of bonus value functions explored by our algorithm. The arguments are in the spirit of those given in \citet{jin2020provably}, and is deferred to \cref{sec:proofs:backup_confidence}.
With the above confidence bounds in place, the exploration term bound follows from the next lemma (for proof see \cref{sec:proofs:exploration}).
\begin{lemma}
\label{lem:exploration_2}
    Assume the backup confidence bounds \cref{eq:backup_confidence_bounds} hold with probability at least $1-\delta$, where $\delta \leq (7 K H^2 (\beta/\sqrt \gamma + \beta^\P/\sqrt \lambda))^{-1}$.
    Then expected exploration term is bounded as
	\begin{align*}
		&\E\sbr{\sum_{k=1}^K \sum_{h=1}^H \E_{s \sim d_h^\star}\sbr{
			\ab{\widetilde B_h^k(s, a), 
				\pi^k_h(\cdot|s) - \pi^\star_h(\cdot|s)}
		}}
		\\
		&\leq 
		2\E\sbr{\sum_k \sum_{h=1}^H \E_{s, a \sim d_h^k}\sbr{
				b_h^{\P, k}(s, a)
				+ b_h^k(s, a)
			} }
		- \E\sbr{\sum_{k=1}^K \sum_{h=1}^H 
			\E_{s, a \sim d_h^\star}\sbr{b_h^k(s, a)}
			}
		+ 1
		.
	\end{align*}
\end{lemma}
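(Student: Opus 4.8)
The plan is to first prove the inequality deterministically on the event $\mathcal{E}$ (with $\Pr(\mathcal{E}) \ge 1-\delta$) that the backup confidence bounds \cref{eq:backup_confidence_bounds} hold for all $k,h,s,a$, and then pass to expectation by absorbing the complementary failure event into the additive constant. On $\mathcal{E}$ the argument is exactly the deterministic reasoning underlying \cref{lem:exploration_1}. First I would augment $\tExploration$ by the quantity $\sum_{k,h}\E_{s,a\sim d_h^\star}\sbr{\widetilde B_h^k(s,a) - b_h^k(s,a) - \P_h\widetilde W_{h+1}^k(s,a)}$, which is non-negative term-by-term by the lower inequality in \cref{eq:backup_confidence_bounds}, so that adding it can only increase the sum. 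Applying the extended value difference \cref{lem:extended_value_diff} with $\widehat V = \widetilde W^k$, $\widehat Q = \widetilde B^k$, playing policy $\pi^k$ and comparator $\pi^\star$ (recalling \cref{eq:B_tilde_def,eq:W_tilde_def,eq:W_true_def}) then collapses both sums into $\sum_{k} \br{\widetilde W_1^k - W_1^{k,\pi^\star}}$.

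Next I would telescope $\widetilde W_1^k - W_1^{k,\pi^\star} = \br{\widetilde W_1^k - W_1^{k,\pi^k}} + \br{W_1^{k,\pi^k} - W_1^{k,\pi^\star}}$. For the first difference I invoke \cref{lem:extended_value_diff} again, but now with comparator $\pi' = \pi^k$: the policy-difference inner-product term vanishes identically, leaving only the Bellman gap $\sum_h \E_{s,a\sim d_h^k}\sbr{\widetilde B_h^k - b_h^k - \P_h\widetilde W_{h+1}^k}$, which the upper inequality in \cref{eq:backup_confidence_bounds} bounds by $2\sum_h \E_{s,a\sim d_h^k}\sbr{b_h^{\P,k}}$. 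For the second difference, the definition of the true bonus value functions \cref{eq:W_true_def} gives $W_1^{k,\pi^k} = \sum_h \E_{s,a\sim d_h^k}\sbr{b_h^k}$ and $W_1^{k,\pi^\star} = \sum_h\E_{s,a\sim d_h^\star}\sbr{b_h^k}$. Combining these and loosening $\sum_h\E_{s,a\sim d_h^k}\sbr{b_h^k}\le 2\sum_h\E_{s,a\sim d_h^k}\sbr{b_h^k}$ yields the deterministic bound $\tExploration \le R$ valid on $\mathcal{E}$, where $R \eqq 2\sum_{k,h}\E_{s,a\sim d_h^k}\sbr{b_h^{\P,k}+b_h^k} - \sum_{k,h}\E_{s,a\sim d_h^\star}\sbr{b_h^k}$ is precisely the claimed right-hand side without the $+1$.

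To pass to expectation I write $R = A - C$ with $A \eqq 2\sum_{k,h}\E_{s,a\sim d_h^k}\sbr{b_h^{\P,k}+b_h^k} \ge 0$ and $C \eqq \sum_{k,h}\E_{s,a\sim d_h^\star}\sbr{b_h^k}\ge 0$, and split $\E\sbr{\tExploration} = \E\sbr{\tExploration\mathbf{1}_{\mathcal{E}}} + \E\sbr{\tExploration\mathbf{1}_{\mathcal{E}^c}}$. On $\mathcal{E}$ I use $\tExploration \le A - C$, together with $\E\sbr{A\mathbf{1}_{\mathcal{E}}}\le\E\sbr{A}$ (as $A\ge 0$) and $\E\sbr{C\mathbf{1}_{\mathcal{E}}} = \E\sbr{C} - \E\sbr{C\mathbf{1}_{\mathcal{E}^c}}$, to obtain $\E\sbr{\tExploration\mathbf{1}_{\mathcal{E}}} \le \E\sbr{R} + \E\sbr{C\mathbf{1}_{\mathcal{E}^c}}$. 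The two remaining failure-event terms I control via the almost-sure bounds built into the algorithm: the clipping in \cref{eq:B_tilde_def} gives $0\le\widetilde B_h^k\le B_h^{\max}\le 2\beta H/\sqrt\gamma$, hence $|\tExploration|\le 4\beta H^2 K/\sqrt\gamma$; and $\norm{\widehat\Sigma_{kh\gamma}^+}\le 1/\gamma$ (from \cref{lem:mgr}) gives $b_h^k\le 2\beta/\sqrt\gamma$, hence $C\le 2\beta H K/\sqrt\gamma$. Since $\Pr(\mathcal{E}^c)\le\delta$, the sum $\E\sbr{C\mathbf{1}_{\mathcal{E}^c}} + \E\sbr{\tExploration\mathbf{1}_{\mathcal{E}^c}}$ is at most $6\beta H^2 K/\sqrt\gamma\cdot\delta$, and the hypothesis $\delta\le(7KH^2(\beta/\sqrt\gamma + \beta^\P/\sqrt\lambda))^{-1}$ is exactly what drives this below $1$, giving $\E\sbr{\tExploration}\le\E\sbr{R}+1$ as claimed.

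I expect the main obstacle to lie in the second paragraph rather than in the probabilistic bookkeeping: the decision to telescope through $W_1^{k,\pi^k}$ and re-apply the extended value difference lemma against the comparator $\pi^k$ is what forces the least-squares approximation error (the dynamics bonus $b_h^{\P,k}$) to appear only along the \emph{agent's} occupancy $d_h^k$ and never along the benchmark's $d_h^\star$. This is essential, because only bonuses accumulated along the agent's own trajectories can be bounded efficiently downstream (via \cref{lem:loss_bonus_bound,lem:dynamics_bonus_bound}). A secondary subtlety is keeping the failure-event contribution additive and $O(1)$: this works only because $A$ is non-negative and can be discarded on $\mathcal{E}^c$ for free, so that the almost-sure magnitude $O(\beta H^2 K/\sqrt\gamma)$ of $\tExploration$ is multiplied by the tiny $\delta$ precisely calibrated in the hypothesis.
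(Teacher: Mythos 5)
Your proposal is correct and follows essentially the same route as the paper: the deterministic part on the good event reproduces the paper's \cref{lem:exploration_1} (augmenting by the non-negative Bellman residuals under $d^\star$, applying the extended value difference \cref{lem:extended_value_diff} twice, and telescoping through $W_1^{k,\pi^k}$), and your event-splitting of the expectation is equivalent to the paper's argument that the random variable $Z = \text{RHS} - \tExploration$ is non-negative w.p.~$1-\delta$ and supported on $[-D,D]$ with $D = 7KH^2(\beta/\sqrt\gamma + \beta^\P/\sqrt\lambda)$, hence $\E Z \geq -\delta D \geq -1$.
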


The two final important lemmas we state before turning to the proof are given next; these bound, respectively, the OMD and bias terms. We defer proofs of both to \cref{sec:proofs:bias_omd}.
\begin{lemma}[\cref{alg:polsbe} OMD term bound]
\label{lem:omd_term_bound_blocking}
Assume that \cref{alg:polsbe} is executed
   with $\eta \leq \gamma /(2H)$, $\beta \leq 1/2\sqrt \gamma$, and $\gamma \leq 1$. 
  Further, assume that for all $k, h$;
   $\E\sbr{\widehat \Sigma^+_{kh\gamma} \Sigma_{kh\gamma} \widehat \Sigma^+_{kh\gamma}}
		\preceq
		2\E \sbr{\widehat \Sigma^+_{kh\gamma}}
		+ \sigma I$, and
 $\norm[b]{\widehat \Sigma_{kh\gamma}^+} \leq 1/\gamma$ almost surely.

 Then, we have for any $s, h$;
	\begin{align*}
		\E &\sbr{ \sum_{k=1}^K 
			\ab{\widehat Q_h^k(s, \cdot) - \widetilde B_h^k(s, \cdot), 
				\pi^k_h(\cdot|s) - \pi^\star_h(\cdot|s)}
		}
		\\
		&\leq
		\frac{\tau \log A}{\eta}
		+ \frac{8\eta \beta^2 H^2 K}{\gamma}
		+ \frac{2\eta H^2}{\sqrt \gamma} \E \sbr{
			\sum_{k=1}^K \sum_a \pi_h^k(a|s)\norm{\phi(s, a)}_{\widehat \Sigma_{kh\gamma}^+}
		}
		+ 2 \eta (1 + \sigma) H^2 K 
		+ \frac{2 \tau H}{\gamma}.
	\end{align*}
\end{lemma}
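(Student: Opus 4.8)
The plan is to recognize the policy-improvement step of \cref{alg:polsbe} as an instance of exponential weights (online mirror descent with the negative-entropy regularizer) run over the simplex $\Delta(\A)$ at the \emph{block} level, with loss vectors $\L_h^{(j)}(s,\cdot)$ and comparator $\pi_h^\star(\cdot|s)$. The first step is a reduction from the per-episode sum to the block level: since $\pi^k=\pi^{(j)}$ is held fixed for all $k\in T_j$, and $\sum_{k\in T_j}\br{\widehat Q_h^k(s,\cdot)-\widetilde B_h^k(s,\cdot)}=\tau\,\L_h^{(j)}(s,\cdot)$ by definition of $\L_h^{(j)}$, we get $\sum_{k=1}^K\ab{\widehat Q_h^k(s,\cdot)-\widetilde B_h^k(s,\cdot),\,\pi_h^k(\cdot|s)-\pi_h^\star(\cdot|s)}=\tau\sum_{j}\ab{\L_h^{(j)}(s,\cdot),\,\pi_h^{(j)}(\cdot|s)-\pi_h^\star(\cdot|s)}$. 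I would then apply the standard local-norm regret bound for exponential weights. The step-size constraints $\eta\leq\gamma/(2H)$ and $\beta\leq 1/(2\sqrt\gamma)$, together with the almost-sure magnitude bounds $|\widehat Q_h^k|\leq H/\gamma$ (which follows from $\norm{\widehat\Sigma_{kh\gamma}^+}\leq 1/\gamma$ and $|\sum_t\l_t^k|\leq H$) and $\widetilde B_h^k\leq B_h^{\max}\leq 2\beta H/\sqrt\gamma$, control the range of $\eta\,\L_h^{(j)}(s,a)$ so the $e^{-x}\leq 1-x+x^2$ argument applies. After multiplying by $\tau$ this yields the leading term $\tau\log A/\eta$ (using $\pi^{(1)}$ uniform, so the relative-entropy term is at most $\log A$) plus a second-order stability term of order $\eta\tau\sum_j\sum_a\pi_h^{(j)}(a|s)\,(\L_h^{(j)}(s,a))^2$. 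A single boundary block on which the stability inequality is not directly applicable is handled trivially via $|\widehat Q_h^k-\widetilde B_h^k|\leq H/\gamma$ and $\norm{\pi^{(j)}-\pi^\star}_1\leq 2$, accounting for the additive $2\tau H/\gamma$ term.

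The second step is to control the stability term. Applying Cauchy--Schwarz across the episodes of block $j$ gives $(\L_h^{(j)}(s,a))^2\lesssim\frac1\tau\sum_{k\in T_j}\br{\widehat Q_h^k(s,a)-\widetilde B_h^k(s,a)}^2$, so the stability term collapses to $\eta\sum_{k=1}^K\sum_a\pi_h^k(a|s)\br{\widehat Q_h^k(s,a)-\widetilde B_h^k(s,a)}^2$ up to constants. I would then split each square via $(x-y)^2\leq 2x^2+2y^2$. The $\widetilde B^2$ part is immediate: since $\widetilde B_h^k$ is clipped into $[0,B_h^{\max}]$ with $B_h^{\max}\leq 2\beta H/\sqrt\gamma$, it contributes at most $\eta K (B_h^{\max})^2\lesssim \eta\beta^2 H^2 K/\gamma$, which is the second term of the claimed bound.

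The main obstacle is the $\widehat Q^2$ part, where the variance of the importance-weighted estimator enters. Here I would write $\widehat Q_h^k(s,a)^2=\br{\phi(s,a)\T\widehat\Sigma_{kh\gamma}^+\phi(s_h^k,a_h^k)}^2\br{\sum_{t=h}^H\l_t^k}^2\leq H^2\br{\phi(s,a)\T\widehat\Sigma_{kh\gamma}^+\phi(s_h^k,a_h^k)}^2$. Conditioning on $\pi^1,\dots,\pi^k$ and invoking \cref{lem:independence_base} (independence of $\widehat\Sigma_{kh\gamma}^+$ from the episode-$k$ rollout), the expectation over $(s_h^k,a_h^k)$ replaces $\phi(s_h^k,a_h^k)\phi(s_h^k,a_h^k)\T$ by $\Sigma_{kh}\preceq\Sigma_{kh\gamma}$, so that $\E_k\sbr{\br{\phi(s,a)\T\widehat\Sigma_{kh\gamma}^+\phi(s_h^k,a_h^k)}^2}\leq\phi(s,a)\T\E_k\sbr{\widehat\Sigma_{kh\gamma}^+\Sigma_{kh\gamma}\widehat\Sigma_{kh\gamma}^+}\phi(s,a)$. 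Applying the variance property \eqref{eq:mgr_variance} bounds this by $2\,\E_k\sbr{\norm{\phi(s,a)}^2_{\widehat\Sigma_{kh\gamma}^+}}+\sigma\norm{\phi(s,a)}^2$. The crucial final maneuver, and the reason the bound carries a linear rather than squared local norm, is to use the almost-sure operator-norm bound $\norm{\widehat\Sigma_{kh\gamma}^+}\leq 1/\gamma$ to convert $\norm{\phi(s,a)}^2_{\widehat\Sigma_{kh\gamma}^+}\leq\frac{1}{\sqrt\gamma}\norm{\phi(s,a)}_{\widehat\Sigma_{kh\gamma}^+}$; together with $\norm{\phi(s,a)}\leq 1$ this produces exactly the term $\frac{2\eta H^2}{\sqrt\gamma}\E\sbr{\sum_k\sum_a\pi_h^k(a|s)\norm{\phi(s,a)}_{\widehat\Sigma_{kh\gamma}^+}}$ and the residual $\eta(1+\sigma)H^2K$ term. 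Collecting the leading term, the boundary term, the $\widetilde B^2$ term, and these two contributions, and carefully tracking constants, yields the stated inequality.
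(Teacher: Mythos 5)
Your proposal is correct and follows essentially the same route as the paper: the paper simply factors the argument into two helper lemmas, \cref{lem:blocking_omd} (block-level exponential weights with the boundary-block term $\tau\max_k\norm{g_k}_\infty \le 2\tau H/\gamma$ and Jensen to pass from block-averaged to per-episode second moments) and \cref{lem:omd_term_bound_base} (the magnitude bound $|\widehat Q_h^k-\widetilde B_h^k|\le 2H/\gamma$, the split $(x-y)^2\le 2x^2+2y^2$ with the clipping bound on $\widetilde B$, and the bound on $\E_k[\widehat Q^2]$ via \cref{lem:independence_base}, the MGR variance property, and $\norm{\phi}^2_{\widehat\Sigma^+_{kh\gamma}}\le\gamma^{-1/2}\norm{\phi}_{\widehat\Sigma^+_{kh\gamma}}$), all of which appear inlined in your sketch. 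The only deviations are immaterial constant-factor bookkeeping (e.g., your trivial boundary-block bound and the factor on the local-norm term), of the same order of looseness as the paper's own accounting.
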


\begin{lemma}[Bias bound]
\label{lem:bias_bound}
	Assuming $\norm[b]{ \E_k\sbr{\widehat \Sigma_{kh\gamma}^+} - \Sigma_{kh\gamma}^{-1}} \leq \epsilon$, $\norm[b]{\widehat \Sigma_{kh\gamma}^+} \leq 1/\gamma$ for all $h, k$, and $\gamma \leq 1/\sqrt d$, we have
	\begin{align*}
		\E&\sbr{ \tBone + \tBtwo}
		\leq 
		\\ &\br{\sqrt{\gamma d H^2} } \E\sbr{\sum_{k=1}^K \sum_{h=1}^H
			\E_{s \sim d_h^\star}\sbr{\sum_a \br{\pi_h^k(a|s) + \pi^\star_h(a|s)}\norm{\phi(s, a)}_{\widehat \Sigma_{kh\gamma}^+}}
		}
		+ 4\epsilon H^2 K.
	\end{align*}
\end{lemma}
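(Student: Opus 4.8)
The plan is to exploit the linear structure of each episode's MDP together with the near-unbiasedness of the inverse-covariance estimator, reducing the bound on $\tBone+\tBtwo$ to controlling a single per-coordinate bias $\E_k\sbr{Q_h^k(s,a) - \widehat Q_h^k(s,a)}$ and then summing it against the weights $\pi_h^k$ and $\pi_h^\star$.

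First I would compute the conditional mean of the estimator. By \cref{lem:q_vec} we may write $Q_h^k(s,a) = \phi(s,a)\T\vq_h^k$ with $\norm{\vq_h^k}\leq H\sqrt d$. By \cref{lem:independence_base}, conditioned on $\pi^1,\dots,\pi^k$ the matrix $\widehat\Sigma_{kh\gamma}^+$ is independent of the episode-$k$ rollout and $\vq_h^k$ is fixed; hence $\E_k\sbr{\widehat\vq_h^k} = \E_k\sbr{\widehat\Sigma_{kh\gamma}^+}\,\Sigma_{kh}\vq_h^k$, using that the conditional expectation of $\phi(s_h^k,a_h^k)\sum_{t=h}^H\l_t^k$ given $(s_h^k,a_h^k)=(s,a)$ is $Q_h^k(s,a)$, so that $\E_k\sbr{\phi(s_h^k,a_h^k)\sum_{t=h}^H\l_t^k}=\E_{s,a\sim d_h^k}\sbr{\phi(s,a)Q_h^k(s,a)} = \Sigma_{kh}\vq_h^k$. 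Writing $\E_k\sbr{\widehat\Sigma_{kh\gamma}^+} = \Sigma_{kh\gamma}^{-1} + E$ with $\norm{E}\leq\epsilon$ and substituting $\Sigma_{kh} = \Sigma_{kh\gamma} - \gamma I$, a short algebraic manipulation yields the identity $\E_k\sbr{Q_h^k - \widehat Q_h^k}(s,a) = \gamma\,\phi(s,a)\T\Sigma_{kh\gamma}^{-1}\vq_h^k - \phi(s,a)\T E\,\Sigma_{kh}\vq_h^k$.

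The second (error) term is routine: $\norm{\Sigma_{kh}\vq_h^k}\leq H$ since $\Sigma_{kh}\vq_h^k = \E_{s,a\sim d_h^k}\sbr{\phi(s,a)Q_h^k(s,a)}$ and $\av{Q_h^k}\leq H$, so its magnitude is at most $\epsilon H$. The crux, which I expect to be the main obstacle, is bounding the main term $\gamma\,\phi(s,a)\T\Sigma_{kh\gamma}^{-1}\vq_h^k$ by the \emph{first} moment $\sqrt{\gamma d}\,H\,\E_k\sbr{\norm{\phi(s,a)}_{\widehat\Sigma_{kh\gamma}^+}}$ appearing in the statement, rather than by a second moment (which a naive Cauchy--Schwarz against $\Sigma_{kh\gamma}^{-1}$ produces and which is strictly larger, by Jensen). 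The trick is to \emph{not} replace $\E_k\sbr{\widehat\Sigma_{kh\gamma}^+}$ by $\Sigma_{kh\gamma}^{-1}$ before applying Cauchy--Schwarz: up to an additional $\gamma\epsilon\norm{\vq_h^k}\leq\epsilon H$ error (using $\gamma\leq1/\sqrt d$), we have $\gamma\,\phi\T\Sigma_{kh\gamma}^{-1}\vq_h^k\leq\gamma\,\E_k\sbr{\phi\T\widehat\Sigma_{kh\gamma}^+\vq_h^k} + \epsilon H$, and now Cauchy--Schwarz \emph{inside} the expectation gives $\phi\T\widehat\Sigma_{kh\gamma}^+\vq_h^k\leq\norm{\phi}_{\widehat\Sigma_{kh\gamma}^+}\norm{\vq_h^k}_{\widehat\Sigma_{kh\gamma}^+}$. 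The almost-sure bound $\norm{\widehat\Sigma_{kh\gamma}^+}\leq1/\gamma$ then controls $\norm{\vq_h^k}_{\widehat\Sigma_{kh\gamma}^+}\leq H\sqrt d/\sqrt\gamma$ \emph{pathwise}, so that $\gamma\,\E_k\sbr{\norm{\phi}_{\widehat\Sigma_{kh\gamma}^+}\norm{\vq_h^k}_{\widehat\Sigma_{kh\gamma}^+}}\leq\sqrt{\gamma d}\,H\,\E_k\sbr{\norm{\phi}_{\widehat\Sigma_{kh\gamma}^+}}$. Collecting both contributions gives $\av{\E_k\sbr{Q_h^k - \widehat Q_h^k}(s,a)}\leq\sqrt{\gamma d}\,H\,\E_k\sbr{\norm{\phi(s,a)}_{\widehat\Sigma_{kh\gamma}^+}} + 2\epsilon H$; the same bound holds for $\E_k\sbr{\widehat Q_h^k - Q_h^k}$ since we controlled an absolute value throughout.

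Finally I would assemble the two terms. For $\tBone$, I take the full expectation and condition via the tower rule on $\pi^1,\dots,\pi^k$; since $d_h^\star$, $\pi_h^k$ and $Q_h^k$ are fixed given the policies, $\E_k$ passes inside the occupancy and action sums, and applying the per-coordinate bound with weights $\pi_h^k(\cdot|s)\geq0$ (which sum to one) contributes the $\sqrt{\gamma d H^2}$-weighted $\norm{\phi}_{\widehat\Sigma_{kh\gamma}^+}$ term plus $2\epsilon H$ per $(k,h)$. The identical argument for $\tBtwo$ uses the weights $\pi_h^\star(\cdot|s)$. Summing over $h\in[H]$ and $k\in[K]$ and adding the two bounds gives the claimed coefficient $\sqrt{\gamma d H^2}$ on the combined $(\pi_h^k + \pi_h^\star)$ term and a total additive error of $4\epsilon H^2 K$, as stated.
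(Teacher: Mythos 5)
Your proposal is correct and follows essentially the same route as the paper's proof: the same factorization $\E_k[\widehat\vq_h^k] = \E_k[\widehat\Sigma^+_{kh\gamma}]\Sigma_{kh}\vq_h^k$ via \cref{lem:independence_base}, the same decomposition into a $\gamma$-regularization term plus an $\epsilon$-bias term, and crucially the same trick of swapping $\Sigma^{-1}_{kh\gamma}$ for $\E_k[\widehat\Sigma^+_{kh\gamma}]$ (at cost $\gamma\epsilon\sqrt d H \leq \epsilon H$) \emph{before} applying Cauchy--Schwarz pathwise with the almost-sure bound $\norm[b]{\widehat\Sigma^+_{kh\gamma}}\leq 1/\gamma$, followed by the identical assembly over $\pi^k_h$ and $\pi^\star_h$ weights. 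Your explicit handling of absolute values (covering both $\tBone$ and $\tBtwo$ in one stroke) is if anything slightly cleaner than the paper's remark that the $\tBtwo$ argument is "identical."
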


\subsection[Thm 1]{\cref{thm:polsbe} proof}

\begin{proof}[of \cref{thm:polsbe}]
We shall use the following set of parameters;
$\sigma=1/4, 
\beta=2H\sqrt{d \gamma}, 
\epsilon=1/K,
\eta=\gamma/(2H),
\gamma = K^{-2/7}$,
 and 
	$
	\beta^\P=10 C_1 H^2d^{3/2}\log\br{
			28 C_1 d \beta K H
	}$, where the constant $C_1$ is that specified by \cref{lem:backup_confidence}.
    
 By our setting of $\tau = M N$
 in the algorithm, each estimation dataset is of size $|\D_h^k| = \frac{48 d}{\gamma \sigma}\log\frac{72 d}{\gamma^2 \sigma} \times \frac{2}{\gamma}\log\frac{1}{\gamma \epsilon}$.
 This, as well as \cref{lem:independence_base} and our parameter choices imply the conditions for \cref{lem:mgr} are met, thus
 it follows that for all $h, k$  \cref{eq:mgr_norm,eq:mgr_bias,eq:mgr_variance} hold for $\widehat \Sigma_{\gamma}^+ = \widehat \Sigma_{kh\gamma}^+, \Sigma_\gamma = \Sigma_{kh\gamma}$.
 Proceeding, we begin by bounding the bias and OMD terms of \cref{eq:regret_decomposition}. From \cref{lem:omd_term_bound_blocking}, we immediately get that
\begin{align*}
    \E\sbr{\tOMD}
    &\leq
    + \frac{2\eta H^2}{\sqrt \gamma} \E \sbr{
        \sum_{k=1}^K \sum_{h=1}^H \E_{s\sim d_h^\star} \sbr{\sum_a \pi_h^k(a|s)\norm{\phi(s, a)}_{\widehat \Sigma_{kh\gamma}^+}}
    }
    \\
    &\quad + \frac{\tau H \log A}{\eta}
    + \frac{8\eta \beta^2 H^3 K}{\gamma}
    + 2 \eta (1 + \sigma) H^3 K 
    + \frac{2 \tau H^2}{\gamma}
    .
\end{align*}
Combining the above with \cref{lem:bias_bound}
	and setting
	\begin{align}
	\mathcal E \eqq \frac{\tau H \log A}{\eta} 
		+ 4 \epsilon H^2 K 
		+ \frac{8\eta \beta^2 H^3 K}{\gamma}
		+ 2 \eta H^3 K(1 + \sigma)
            + \frac{2 \tau H^2}{\gamma}
	,
	\label{eq:main_proof_oK}
	\end{align}
	we have,
	\begin{align*}
		\E &\sbr{\tBone + \tBtwo + \tOMD}
		\\
		&\leq
		\br{\sqrt{\gamma d H^2} } \E\sbr{\sum_{k=1}^K \sum_{h=1}^H
			\E_{s \sim d_h^\star}\sbr{\sum_a \br{\pi_h^k(a|s) + \pi^\star_h(a|s)}\norm{\phi(s, a)}_{\widehat \Sigma_{kh\gamma}^+}}
		}
		\\
		&\quad 
		+ \frac{2\eta H^2}{\sqrt \gamma} \E \sbr{
			\sum_{k=1}^K \sum_{h=1}^H \E_{s \sim d_h^\star}\sbr{
			\sum_a \pi_h^k(a|s)\norm{\phi(s, a)}_{\widehat \Sigma_{kh\gamma}^+}}
		}
		+ \mathcal E
		\\
		&\leq 
		\br{\sqrt{ \gamma d H^2} + \frac{2\eta H^2}{\sqrt \gamma}}
		\E\sbr{\sum_{k=1}^K \sum_{h=1}^H
			\E_{s \sim d_h^\star}\sbr{
			\sum_a \br{\pi_h^k(a|s) + \pi^\star_h(a|s)}\norm{\phi(s, a)}_{\widehat \Sigma_{kh\gamma}^+}}
		}
		+ \mathcal E
		\\
		&\leq 
		\beta
		\E\sbr{\sum_{k=1}^K \sum_{h=1}^H
			\E_{s \sim d_h^\star}\sbr{
			\sum_a \br{\pi_h^k(a|s) + \pi^\star_h(a|s)}\norm{\phi(s, a)}_{\widehat \Sigma_{kh\gamma}^+}}
		}
		+ \mathcal E
		,
	\end{align*}
	where the last inequality follows from our setting of $\eta$ and $\beta$;
	\begin{align*}
		\sqrt{ \gamma d H^2} + \frac{2\eta H^2}{\sqrt \gamma} 
		= 
		\sqrt{ \gamma d H^2} + \sqrt \gamma H 
		\leq 2 H \sqrt{ \gamma d} = \beta.
	\end{align*}
	Further, note that by our $Q$-bonus definition (see \cref{eq:loss_bonus}),
	\begin{align*}
		\beta \sum_a \br{\pi_h^k(a|s) + \pi^\star_h(a|s)}\norm{\phi(s, a)}_{\widehat \Sigma_{kh\gamma}^+}
		&=
		\beta \sum_a 
		\pi^\star_h(a|s)\norm{\phi(s, a)}_{\widehat \Sigma_{kh\gamma}^+}
		+\beta \sum_{a}\pi_h^k(a|s)\norm{\phi(s, a)}_{\widehat \Sigma_{kh\gamma}^+}
		\\
		&=
		\beta \sum_a 
		\pi^\star_h(a|s)\br{\norm{\phi(s, a)}_{\widehat \Sigma_{kh\gamma}^+}
		+\sum_{a'}\pi_h^k(a'|s)\norm{\phi(s, a')}_{\widehat \Sigma_{kh\gamma}^+}}
		\\
		&=
		\sum_a 
		\pi^\star_h(a|s)b_h^k(s, a)
		,
	\end{align*}
	therefore,
	\begin{align*}
		\E\sbr{\tBone + \tBtwo + \tOMD}
		\leq 
		\E\sbr{\sum_{k=1}^K \sum_{h=1}^H
			\E_{(s,a) \sim d_h^\star}\sbr{
			b_h^k(s, a)
			}
		}
		+ \mathcal E
		.
	\end{align*}

 Next, for the exploration term in \cref{eq:regret_decomposition}, 
        first observe our choice of $\beta^\P$ is such that
        $
            \beta^\P 
            \geq C_1 H^2d^{3/2}\log\br{
                    d \beta K H /\delta
            }
        $
        for $\delta = (28 C_1 K H d)^{-9}$.
        In addition, our choice of parameters is such that
        $\delta \leq \br[b]{7 K H^2 (\beta/\sqrt \gamma + \beta^\P/\sqrt \lambda)}^{-1}$, and
        for all $h, k$; $|\D_h^k| = \widetilde O(d K)$.
        Thus, we may invoke \cref{lem:backup_confidence} which ensures the backup confidence bounds \cref{eq:backup_confidence_bounds} hold w.p.~$\geq 1-\delta$, and by \cref{lem:exploration_2},
        this now implies that
	\begin{align}
		\E\sbr{\tExploration}
		\leq 
		2\E\sbr{\sum_k \sum_{h=1}^H \E_{s, a \sim d_h^k}\sbr{
				b_h^{\P, k}(s, a)
				+ b_h^k(s, a)
			} }
		- \E\sbr{\sum_{k=1}^K \sum_{h=1}^H 
			\E_{s, a \sim d_h^\star}\sbr{b_h^k(s, a)}
			}
		+ 1
		.
	\end{align}

 
	Combining the the last two displays, we obtain;
	\begin{align*}
		\E\sbr{\Reg}
		&\leq 
		2\E\sbr{\sum_k \sum_{h=1}^H \E_{s, a \sim d_h^k}\sbr{
				b_h^{\P, k}(s, a)
				+ b_h^k(s, a)
			} }
		+ \mathcal E  + 1
		.
	\end{align*}
	To finish the proof, by \cref{lem:dynamics_bonus_bound_base}, and that $|\D_h^k| = \tau$;
	\begin{align*}
		\E\sbr{\sum_k \sum_{h=1}^H \E_{s, a \sim d_h^k}\sbr{
				b_h^{\P, k}(s, a)
			} }
		\leq 
		\frac{20 \beta^\P \sqrt d \log\br{\tau}}{\sqrt{\tau}}
		\lesssim
		\frac{H^3 d^{2}\log(d H K)}{\sqrt \tau}.
	\end{align*}
	Combining this with the bound on the $Q$-bonus given by \cref{lem:loss_bonus_bound} and replacing $\mathcal E$ for its definition \cref{eq:main_proof_oK}, we finally get
	\begin{align*}
		\E&\sbr{\text{Regret}}
		\\&\lesssim 
		\frac{H^3 d^{2} K \log(d H K)}{\sqrt \tau}     + \beta ( \sqrt d + \sqrt \epsilon) H K
		+ \frac{\tau H \log A}{\eta} 
		+ \epsilon H^2 K 
		+ \frac{\eta \beta^2 H^3 K}{\gamma}
		+ \eta (1+\sigma) H^3 K
            + \frac{ \tau H^2}{\gamma} 
            \\
            &\lesssim
		\gamma H^3 d^{3/2} K     
            + \sqrt \gamma d H^2  K
            + \frac{d H^2 }{ \gamma^3} 
		+ \gamma d H^4 K
            ,
	\end{align*}
        where the second relation follows from $\sigma=1/4, \beta=2H\sqrt{d \gamma}, \epsilon=1/K, \eta=\gamma/(2H)$, and $\tau\approx d/(\sigma\gamma^2)$.
        Balancing the two middle terms by setting $\gamma=K^{-2/7}$ leads to,
        \begin{align*}
		\E\sbr{\text{Regret}}
		\lesssim
		d H^2 K^{6/7}
            + d^{3/2} H^4 K^{5/7} ,
	\end{align*}
        which concludes the proof.
\end{proof}

\subsection[Thm 2]{\cref{thm:polsbe_sim} proof}

Most of the proof below follows the exact same steps as that of \cref{thm:polsbe}. We avoid repeating arguments that are completely identical, and refer the reader to the proof of \cref{thm:polsbe} for the full details.

\begin{proof}[of \cref{thm:polsbe_sim}]
We shall use the following parameter settings;
        $\eta=\gamma/(2H)$,
        $\sigma = 1/4, \epsilon=K^{-1}$,
        $\beta=2H\sqrt {\gamma d}$,
        $\gamma = \frac{2}{(d K)^{2/3}}$
        ,
 and 
	$
	\beta^\P=10 C_1 H^2d^{3/2}\log\br{
			28 C_1 d \beta K H
	}$, where the constant $C_1$ is that specified by \cref{lem:backup_confidence}.

    Similarly to the beginning of \cref{thm:polsbe} we observe
    that \cref{lem:independence_base}, our parameter choices and the setting of $\tau$ imply the conditions for \cref{lem:mgr} are met, thus
 it follows that for all $h, k$  \cref{eq:mgr_norm,eq:mgr_bias,eq:mgr_variance} hold for $\widehat \Sigma_{\gamma}^+ = \widehat \Sigma_{kh\gamma}^+, \Sigma_\gamma = \Sigma_{kh\gamma}$.
 We note that we use here slightly larger datasets $|\D_h^k| = \tau = d^2 M N $ than needed for \cref{lem:mgr}; this is done in order to obtain sharper bounds for the dynamics estimation which enter later in the proof.
Proceeding, we combine \cref{lem:bias_bound,lem:omd_term_bound}
	and set
	\begin{align}
	\mathcal E \eqq \frac{H \log A}{\eta} 
		+ 4 \epsilon H^2 K 
		+ \frac{8 \eta \beta^2 H^3 K}{\gamma}
		+ 2 \eta (1+\sigma) H^3 K	
	,
	\end{align}
	to obtain
	\begin{align*}
		\E\sbr{\tBone + \tBtwo + \tOMD}
            &\leq 
            \E\sbr{\sum_{k=1}^K \sum_{h=1}^H
			\E_{(s,a) \sim d_h^\star}\sbr{
			b_h^k(s, a)
			}
		}
            + \mathcal E
            .
	\end{align*}
	This last argument followed in exactly the same manner as in the proof of \cref{thm:polsbe}, with the only difference being the improved bound of the $\tOMD$ term $H\log A/\eta$, that does not have $\tau$ in the numerator (and without the extra $\tau H/\gamma$ term introduced by the last block).
        
Next, again in the same manner of \cref{thm:polsbe},
we claim our choice of parameters are such that conditions of \cref{lem:backup_confidence} are satisfied (in particular, we have for all $h, k$; $|\D_h^k| = \widetilde O((d H K)^4)$) with a $\delta>0$ sufficiently small so that \cref{lem:exploration_2}
        gives;
	\begin{align}
		\E\sbr{\tExploration}
		\leq 
		2\E\sbr{\sum_k \sum_{h=1}^H \E_{s, a \sim d_h^k}\sbr{
				b_h^{\P, k}(s, a)
				+ b_h^k(s, a)
			} }
		- \E\sbr{\sum_{k=1}^K \sum_{h=1}^H 
			\E_{s, a \sim d_h^\star}\sbr{b_h^k(s, a)}
			}
		+ 1
		.
	\end{align}
 
        Adding together our two bounds on the regret terms, we get
	\begin{align*}
		\E\sbr{\text{Regret}}
		&\leq 
		2\E\sbr{\sum_k \sum_{h=1}^H \E_{s, a \sim d_h^k}\sbr{
				b_h^{\P, k}(s, a)
				+ b_h^k(s, a)
			} }
		+ \mathcal E  + 1
		.
	\end{align*}
	Bounding the first term using \cref{lem:loss_bonus_bound,lem:dynamics_bonus_bound}, and replacing $\mathcal E$ for its definition, 
    leads to
	\begin{align*}
		\E&\sbr{\text{Regret}}
		\\&\leq
		40 H\beta^\P \frac{\sqrt d \log\br{2 |\D_h^k|}}{|\D_h^k|} K +
		5 H \sqrt d \beta K
		+ \frac{H \log A}{\eta} 
		+ 4 \epsilon H^2 K 
		+ \frac{8\eta \beta^2 H^3 K}{\gamma}
		+ 2 \eta (\sigma+1) H^3 K
		+ 1
		\\
            &\lesssim
		H^3 d^{3/2} \frac
        {\sqrt d }
        {\sqrt{d^3/\gamma^2} } K +
		H \sqrt d \beta K
		+ \frac{H \log A}{\eta} 
		+ \frac{\eta \beta^2 H^3 K}{\gamma}
		+ \eta H^3 K
            \tag{$\sigma, \epsilon, |D_h^k|, \beta^\P$}
		\\
		&\lesssim 
		\sqrt \gamma d H^2 K 
		+ \frac{H^2}{\gamma}
		+ \gamma d H^4 K
		+ \gamma H^2 K,
            \tag{$\eta, \beta$}
            \\
		&\lesssim
             H^2 (d K)^{2/3}
            + H^4 (dK)^{1/3}
            \tag{$\gamma$}
            .
	\end{align*}
        In the second relation above, we replace $\sigma=1/4, \epsilon=1/K, |D_h^k|=\widetilde \Theta(d^3/\gamma^2), \beta^\P = \widetilde O(H^2 d^{3/2})$, and in the third $\eta=\gamma/(2H), \beta=2H \sqrt {\gamma d}$, simplify and absorb the first term $\gamma \sqrt d H^3 K$ in the $\gamma d H^4 K$ term. Finally, 
        we replace
        $\gamma = \frac{2}{(d K)^{2/3}}$, which completes the proof.
\end{proof}

\section{Regret Terms Proofs}

\subsection{Bias and OMD Terms}
\label{sec:proofs:bias_omd}

\begin{proof}[of \cref{lem:bias_bound}]
    Recall the low dimensional representation $\vq_h^k \in \R^d$ of $Q_h^{k}=Q_h^{k, \pi^k}$ defined in \cref{eq:q_vec_def}
    , and note that
    \begin{align*}
    	\E_k\sbr{\sum_{t=h}^H \l_{t}^k}
    	=
    	\E_k \sbr{Q_h^k(s_h^k, a_h^k)}
    	=
    	\E_k\sbr{\phi(s_h^k, a_h^k)\T \vq_h^k}.
    \end{align*}
    Therefore, 
	\begin{align*}
	    \E_k \sbr{ \widehat \vq_h^k }
	    &= \E_k \sbr{ \widehat \Sigma^+_{kh\gamma} \phi(s_h^k, a_h^k) \phi(s_h^k, a_h^k)\T \vq_h^k }
	    \\
	    &= \E_k \sbr{ \widehat \Sigma^+_{kh\gamma} } 
	    	\Sigma_{kh} \vq_h^k 
	    \tag{independence, \cref{lem:independence_base}}
	    \\
	    &= \Sigma^{-1}_{kh\gamma} \Sigma_{kh} \vq_h^k 
	    + \br{\E_k \sbr{\widehat \Sigma^+_{kh\gamma}} - \Sigma^{-1}_{kh\gamma}}
	    	\Sigma_{kh} \vq_h^k
	    \\
            &=
		\vq_h^k 
		- \gamma \Sigma^{-1}_{kh\gamma} \vq_h^k
  + \br{\E_k \sbr{\widehat \Sigma^+_{kh\gamma}} - \Sigma^{-1}_{kh\gamma}}
	    	\Sigma_{kh} \vq_h^k
      ,
	\end{align*}
	so for any $s, a$; 
	\begin{align*}
		\E_k \sbr{\phi(s, a)\T \widehat \vq_h^k }
		= \phi(s, a)\T \vq_h^k 
		- \gamma \phi(s, a)\T \Sigma^{-1}_{kh\gamma} \vq_h^k
		+ \phi(s, a)\T \br{\E \sbr{\widehat \Sigma^+_{kh\gamma}} - \Sigma^{-1}_{kh\gamma}}
	    	\Sigma_{kh} \vq_h^k.
	\end{align*}
	To bound the contribution of the third term above, observe that;	\begin{align*}
		\norm{\phi(s, a)\T \br{\E \sbr{\widehat \Sigma^+_{kh\gamma}} - \Sigma^{-1}_{kh\gamma}}
	    	\Sigma_{kh} \vq_h^k
	    }
	    &\leq 
	    \norm{\E \sbr{\widehat \Sigma^+_{kh\gamma}} - \Sigma^{-1}_{kh\gamma}
	    }
	    	\norm{\Sigma_{kh} \vq_h^k
	    }
	    \\
	    &\leq 
	    \epsilon \norm{\Sigma_{kh} \vq_h^k
	    }
	    \\
	    &= 
	    \epsilon \norm{\E_{d_h^k}\sbr{\phi(s_h^k, a_h^k) \phi(s_h^k, a_h^k)\T \vq_h^k}
	    }
	    \\
	    &\leq \epsilon H 
	    	\E_{d_h^k}\sbr{\norm{\phi(s_h^k, a_h^k)}}
	    \\
	    &\leq \epsilon H.
	\end{align*}
	Therefore, using \cref{lem:q_vec};
	\begin{align*}
		\E_k &\sbr{Q_h^k (s, a) - \widehat Q_h^k(s, a)}
		\\
  &= \E_k \sbr{\phi(s, a)\T \br{\vq_h^k - \widehat \vq_h^k}}
		\\
		&\leq \gamma \phi(s, a)\T \Sigma^{-1}_{kh\gamma} \vq_h^k 
		+ \epsilon H
		\\
		&= 
		\gamma \phi(s, a)\T \E_k \sbr{\widehat \Sigma^{+}_{kh\gamma}} \vq_h^k 
		+ \gamma \phi(s, a)\T 
			\br{\Sigma^{-1}_{kh\gamma} - \E \sbr{\widehat \Sigma^{+}_{kh\gamma}}} \vq_h^k  
		+ \epsilon H
		\\
		&\leq 
		\gamma \phi(s, a)\T \E_k \sbr{\widehat \Sigma^{+}_{kh\gamma}} \vq_h^k 
		+ \gamma \epsilon \sqrt d H  + \epsilon H
            \tag{$\norm{\vq_h^k} \leq H\sqrt d$}
		\\
		&\leq 
		\gamma \E_k \sbr{ \phi(s, a)\T \widehat \Sigma^{+}_{kh\gamma} \vq_h^k }
		+ 2 \epsilon H
		\tag{$\gamma \leq 1/\sqrt d$}
		\\
		&\leq 
		\gamma \E_k \sbr{ 
			\norm{\phi(s, a)}_{\widehat \Sigma^{+}_{kh\gamma}} 
			\norm{\vq_h^k}_{\widehat \Sigma^{+}_{kh\gamma}} 
		}
		+ 2 \epsilon H
		\\
		&\leq 
		\sqrt {\gamma d} H \E_k \sbr{ 
			\norm{\phi(s, a)}_{\widehat \Sigma^{+}_{kh\gamma}} 
		}
		+ 2 \epsilon H
            \tag{$\norm[b]{\widehat \Sigma_{kh\gamma}^+} \leq 1/\gamma, \norm{\vq_h^k}\leq H\sqrt d$}
		.
	\end{align*}
	Now, 
	\begin{align*}
		\E &\sbr{ \sum_{k=1}^K \sum_{h=1}^H \E_{s \sim d_h^\star}\sbr{
			\sum_a \pi^k_h(a|s)
			\br{Q_h^k(s, a) - \widehat Q_h^k(s, a)
			} 	
		}}
		\\
		&=
		\E \sbr{ \sum_{k=1}^K \sum_{h=1}^H \E_{s \sim d_h^\star}\sbr{
			\sum_a \pi^k_h(a|s)
			\E_k \sbr{Q_h^k(s, a) - \widehat Q_h^k(s, a) } 	
		}}
		\\
		&\leq 
		\sqrt {\gamma d} H \E \sbr{ \sum_{k=1}^K \sum_{h=1}^H \E_{s \sim d_h^\star}\sbr{
			\sum_a \pi^k_h(a|s)
			\norm{\phi(s, a)}_{\widehat \Sigma^{+}_{kh\gamma}} 
		}}
		+ 2 \epsilon H^2 K.	
	\end{align*}
	The argument for \tBtwo~is identical, apart from summing in the last step over probabilities given by $\pi^\star_h(a|s)$.
	The result follows by summing the two bounds.
\end{proof}

\begin{lemma}[OMD term bound base]
\label{lem:omd_term_bound_base}
		Assume that $\eta \leq \gamma /(2H)$, $\beta \leq 1/2\sqrt \gamma$, and $\gamma \leq 1$. 
  Further, assume that for all $k, h$;
   $\E\sbr{\widehat \Sigma^+_{kh\gamma} \Sigma_{kh\gamma} \widehat \Sigma^+_{kh\gamma}}
		\preceq
		2\E \sbr{\widehat \Sigma^+_{kh\gamma}}
		+ \sigma I$, and
 $\norm[b]{\widehat \Sigma_{kh\gamma}^+} \leq 1/\gamma$ almost surely.
Then, for both \cref{alg:polsbe,alg:polsbe_simulator}, it holds that;
	\begin{align}
        \forall s, a; \quad
		&\av{\widehat Q_h^k(s, a) - \widetilde B_h^k(s, a)} \leq \frac{2 H}{\gamma}
        \label{eq:loss_mag_bound}
        \\
        \forall s, h; \quad
        & \E \sbr{\sum_{k=1}^K \sum_a \pi_h^k(a|s)
	\br{\widehat Q_h^k(s, a) - \widetilde B_h^k(s, a)}^2
		}
		\leq
            \nonumber
            \\
  &\qquad\qquad \frac{2 H^2}{\sqrt \gamma}
		\E \sbr{\sum_{k=1}^K \sum_a \pi_h^k(a|s)
			\norm{\phi(s, a)}_{\widehat \Sigma^+_{kh\gamma}
			}
		}
		+ 2 (\sigma+1) H^2 K 
            + \frac{8 \beta^2 H^2 K}{\gamma}
            \label{eq:2nd_moment_bound}
            .
	\end{align}
\end{lemma}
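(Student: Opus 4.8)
The plan is to prove the two claims separately: first the almost-sure magnitude bound \cref{eq:loss_mag_bound}, then the conditional second-moment bound \cref{eq:2nd_moment_bound}. For \cref{eq:loss_mag_bound} I would bound $\widehat Q_h^k$ and $\widetilde B_h^k$ individually by $H/\gamma$ and combine with the triangle inequality. Writing $\widehat Q_h^k(s,a)=\br{\sum_{t=h}^H \l_t^k}\,\phi(s,a)\T \widehat \Sigma_{kh\gamma}^+ \phi(s_h^k,a_h^k)$, I use $\av{\sum_{t=h}^H \l_t^k}\le H$ (each instantaneous loss is bounded by $1$ under \cref{assume:linmdp}), the feature bound $\norm{\phi}\le 1$, and the assumed almost-sure bound $\norm{\widehat \Sigma_{kh\gamma}^+}\le 1/\gamma$ to get $\av{\widehat Q_h^k(s,a)}\le H/\gamma$. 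For $\widetilde B_h^k$, its defining clip to $[0,B_h^{\max}]$ with $B_h^{\max}=2\beta(H-h+1)/\sqrt\gamma$ together with $\beta\le 1/(2\sqrt\gamma)$ forces $0\le \widetilde B_h^k(s,a)\le H/\gamma$. Adding the two yields $2H/\gamma$.

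For the second-moment bound I would start from the elementary $\br{\widehat Q_h^k(s,a)-\widetilde B_h^k(s,a)}^2 \le 2\,\widehat Q_h^k(s,a)^2 + 2\,\widetilde B_h^k(s,a)^2$. The bonus-to-go term is immediate: since $0\le \widetilde B_h^k\le B_h^{\max}\le 2\beta H/\sqrt\gamma$, we have $2\,\widetilde B_h^k(s,a)^2\le 8\beta^2 H^2/\gamma$ pointwise, and after averaging over $a$ against $\pi_h^k(\cdot|s)$ (weights sum to one) and summing over $k$ this contributes the stated $8\beta^2 H^2 K/\gamma$. All the real work is therefore in controlling $\E_k[\widehat Q_h^k(s,a)^2]$.

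This is the main obstacle and the heart of the lemma. Here I would condition on $\pi^1,\dots,\pi^k$ and invoke \cref{lem:independence_base}, which guarantees that $\widehat \Sigma_{kh\gamma}^+$ is independent of the current rollout $(s_h^k,a_h^k)$ — precisely the independence that the two-way blocking (resp.\ the fresh simulator rollouts) is engineered to provide. Using $\av{\sum_{t=h}^H \l_t^k}\le H$ and averaging over $(s_h^k,a_h^k)\sim d_h^k$ inside the conditional expectation, the middle factor collapses:
\[
\E_k\sbr{\widehat \Sigma_{kh\gamma}^+ \phi(s_h^k,a_h^k)\phi(s_h^k,a_h^k)\T \widehat \Sigma_{kh\gamma}^+}=\E_k\sbr{\widehat \Sigma_{kh\gamma}^+ \Sigma_{kh}\, \widehat \Sigma_{kh\gamma}^+},
\]
so that $\E_k[\widehat Q_h^k(s,a)^2]\le H^2\,\phi(s,a)\T \E_k[\widehat \Sigma_{kh\gamma}^+ \Sigma_{kh}\widehat \Sigma_{kh\gamma}^+]\,\phi(s,a)$. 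I would then use $\Sigma_{kh}\preceq \Sigma_{kh\gamma}$ and the assumed MGR variance inequality $\E_k[\widehat \Sigma_{kh\gamma}^+ \Sigma_{kh\gamma}\widehat \Sigma_{kh\gamma}^+]\preceq 2\E_k[\widehat \Sigma_{kh\gamma}^+]+\sigma I$ (which holds by \cref{lem:mgr}) together with $\norm{\phi}\le 1$ to obtain $\E_k[\widehat Q_h^k(s,a)^2]\le H^2\br{2\,\E_k[\norm{\phi(s,a)}_{\widehat \Sigma_{kh\gamma}^+}^2]+\sigma}$.

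The closing step is to linearize the squared weighted norm: the almost-sure bound $\norm{\widehat \Sigma_{kh\gamma}^+}\le 1/\gamma$ gives $\norm{\phi(s,a)}_{\widehat \Sigma_{kh\gamma}^+}\le 1/\sqrt\gamma$, hence $\norm{\phi(s,a)}_{\widehat \Sigma_{kh\gamma}^+}^2\le \tfrac{1}{\sqrt\gamma}\norm{\phi(s,a)}_{\widehat \Sigma_{kh\gamma}^+}$. This is what converts the second moment into the \emph{first-power} quantity appearing in the statement, which is exactly the form of the $Q$-bonus \cref{eq:loss_bonus} and is essential for the later telescoping against $V_1^{\pi^\star}(s_1;b^k)$. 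Substituting, summing over $a$ against $\pi_h^k(\cdot|s)$ and over $k\in[K]$, and passing to the total expectation via the tower rule then assembles the three stated terms. I expect the only delicate part beyond these conceptual steps — the independence-based factorization and the spectral linearization — to be careful tracking of the numerical constants through the factor-two expansion and the variance bound; this bookkeeping is routine once the three ingredients (\cref{lem:independence_base}, the variance assumption, and the a.s.\ operator-norm bound) are in place.
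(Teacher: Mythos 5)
Your proposal follows essentially the same route as the paper's own proof: the same triangle-inequality argument for \cref{eq:loss_mag_bound} (bounding $\widehat Q_h^k$ via $\norm[b]{\widehat\Sigma^+_{kh\gamma}}\le 1/\gamma$ and $\widetilde B_h^k$ via the clip level with $\beta\le 1/(2\sqrt\gamma)$), the same $(a-b)^2\le 2a^2+2b^2$ split, the same handling of the $\widetilde B$ term yielding $8\beta^2H^2K/\gamma$, and the same three ingredients for the $\widehat Q^2$ term — conditional independence via \cref{lem:independence_base}, the MGR variance inequality applied after $\Sigma_{kh}\preceq\Sigma_{kh\gamma}$, and the linearization $\norm{\phi(s,a)}^2_{\widehat\Sigma^+_{kh\gamma}}\le\gamma^{-1/2}\norm{\phi(s,a)}_{\widehat\Sigma^+_{kh\gamma}}$. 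The only caveat is that, carried out literally, your assembly gives coefficient $4H^2/\sqrt\gamma$ on the linear term and additive term $2\sigma H^2K$ rather than the stated $2H^2/\sqrt\gamma$ and $2(\sigma+1)H^2K$, but the paper's proof derives the identical per-$(k,a)$ bound and then asserts the stated constants without further justification, so this constant-level slippage is shared with (not worse than) the paper and is immaterial downstream, where the lemma is used only up to constant factors.
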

\begin{proof}
	Note that for any $s, a$, by definition, we have
    $\widehat Q_h^k(s, a) = \phi(s, a)\T \widehat \vq_h^k$ and
    $\av{\widetilde B_h^k(s, a)} \leq B_1^{\rm max}$ (by the clipping in \cref{eq:B_tilde_def}). Thus;
	\begin{align*}
		\av{\widehat Q_h^k(s, a) - \widetilde B_h^k(s, a)}
        \leq 
        \norm{\phi(s, a)\T \widehat \vq_h^k}
        +
        B_1^{\rm max}
	&\leq 
		\norm{\widehat \Sigma_{kh\gamma}^+ \phi(s_h^k, a_h^k)\sum_{t=h}^H \l_t^k}
		+ \frac{2 \beta H}{\sqrt \gamma}
        \\
        &\leq 
		H \norm{\widehat \Sigma_{kh\gamma}^+}
		+ \frac{2 \beta H}{\sqrt \gamma}
	\\
        &\leq H \br{ \frac{1}{\gamma} + \frac{2\beta }{\sqrt \gamma}}
	\leq \frac{2 H}{\gamma},
	\end{align*}
	where the second to last and last inequalities follow from our assumptions 
 $\norm[b]{\widehat \Sigma_{kh\gamma}^+} \leq 1/\gamma$ and 
 $\beta \leq 1/(2\sqrt \gamma)$.
 For the second part, observe that for all $s, h$;
	\begin{align}
		\E &\sbr{\sum_{k=1}^K \sum_a \pi_h^k(a|s)
	\br{\widehat Q_h^k(s, a) - \widetilde B_h^k(s, a)}^2
		}
		\nonumber \\
            &\leq 
			2 \E \sbr{\sum_{k=1}^K \sum_a \pi_h^k(a|s)
				\widehat Q_h^k(s, a)^2
			}
			+ 2 \E \sbr{\sum_{k=1}^K \sum_a \pi_h^k(a|s)
				\widetilde B_h^k(s, a)^2
			}
		\nonumber
		\\
		&\leq 
			2 \E \sbr{\sum_{k=1}^K \sum_a \pi_h^k(a|s)
				\widehat Q_h^k(s, a)^2
			}
			+ \frac{8 \beta^2 H^2 K}{\gamma},
		\label{eq:2ndmoment_1}
	\end{align}
	where the last transition uses again our bound on $\widetilde B_h^k(s, a)$.
	Further, for any $s, a, h, k$, using independence of $\widehat \Sigma_{kh\gamma}^+$ and $(s_h^k, a_h^k, \l_h^k)_{h=1}^H$ conditioned on $\pi^1, \ldots, \pi^k$ (\cref{lem:independence_base}), we have;
	\begin{align}
		\E_k &\sbr{\widehat Q_h^k(s, a)^2}
		\nonumber \\
            &= 
		\E_k \sbr{\phi(s, a)\T \widehat \vq_h^k \br{\widehat \vq_h^k}\T \phi(s, a)}
            \nonumber
		\\
		&=
		\E_k \sbr{\phi(s, a)\T 
			\br{\widehat \Sigma^+_{kh\gamma} \phi(s_h^k, a_h^k) L_h^k} 
			\br{\widehat \Sigma^+_{kh\gamma} \phi(s_h^k, a_h^k) L_h^k}\T 
		\phi(s, a)}
		\tag{$L_h^k \eqq \sum_{t=h}^H \l_t^k$}
		\\
		&=
		\E_k \sbr{\br{L_h^k}^2\phi(s, a)\T 
			\br{\widehat \Sigma^+_{kh\gamma} \phi(s_h^k, a_h^k)  
			\phi(s_h^k, a_h^k)\T \widehat \Sigma^+_{kh\gamma} }
		\phi(s, a)}
		\nonumber
		\\
		&\leq
		H^2 \E_k \sbr{\phi(s, a)\T 
			\widehat \Sigma^+_{kh\gamma} \phi(s_h^k, a_h^k)  
			\phi(s_h^k, a_h^k)\T \widehat \Sigma^+_{kh\gamma} 
		\phi(s, a)}
		\nonumber
		\\
		&=
		H^2 \E_k \sbr{\phi(s, a)\T 
			\widehat \Sigma^+_{kh\gamma} 
			\E_{(s_h^k, a_h^k) \sim {\rm Alg}}\sbr{\phi(s_h^k, a_h^k) \phi(s_h^k, a_h^k)\T}\widehat \Sigma^+_{kh\gamma} 
		\phi(s, a)}
            \tag{independence}
		\nonumber
		\\
		&=
		H^2 \E_k \sbr{\phi(s, a)\T 
			\widehat \Sigma^+_{kh\gamma} 
			\Sigma_{kh}
			\widehat \Sigma^+_{kh\gamma} 
		\phi(s, a)}
            \nonumber
            \\
            &\leq
		H^2 \E_k \sbr{\phi(s, a)\T 
			\widehat \Sigma^+_{kh\gamma} 
			\br{\gamma I + \Sigma_{kh}}
			\widehat \Sigma^+_{kh\gamma} 
		\phi(s, a)}
            \nonumber
            \\
            &\leq
		2 H^2 \E_k \sbr{\phi(s, a)\T 
			\widehat \Sigma^+_{kh\gamma} 
		\phi(s, a)}
            + \sigma H^2 
            \tag{$\E\sbr{\widehat \Sigma^+_{kh\gamma} \Sigma_{kh\gamma} \widehat \Sigma^+_{kh\gamma}}
		\preceq
		2\E \sbr{\widehat \Sigma^+_{kh\gamma}}
		+ \sigma I$}
            \nonumber
            \\
            &\leq
		\frac{2 H^2}{\sqrt \gamma} \E_k \sbr{
		\norm{\phi(s, a)}_{\widehat \Sigma^+_{kh\gamma}}
		}
            + \sigma H^2 
            \tag{$\norm{\phi(s, a)}_{\widehat \Sigma_{kh\gamma}^+}\leq \frac{1}{\sqrt\gamma}$}
            .
	\end{align}
        Now,
	\begin{align*}
		2\E \sbr{\sum_{k=1}^K \sum_a \pi_h^k(a|s)
				\widehat Q_h^k(s, a)^2
			}
		\leq
		\frac{2 H^2}{\sqrt \gamma}
		\E \sbr{\sum_{k=1}^K \sum_a \pi_h^k(a|s)
			\norm{\phi(s, a)}_{\widehat \Sigma^+_{kh\gamma}
			}
		}
		+ 2 \sigma H^2 K 
            + 2 H^2 K,
	\end{align*}
	and the result follows by plugging the above back into \cref{eq:2ndmoment_1}.
\end{proof}

\begin{lemma}[\cref{alg:polsbe_simulator} OMD term bound]
\label{lem:omd_term_bound}
		Assume that \cref{alg:polsbe_simulator} is executed
   with $\eta \leq \gamma /(2H)$, $\beta \leq 1/2\sqrt \gamma$, and $\gamma \leq 1$. 
  Further, assume that for all $k, h$;
   $\E\sbr{\widehat \Sigma^+_{kh\gamma} \Sigma_{kh\gamma} \widehat \Sigma^+_{kh\gamma}}
		\preceq
		2\E \sbr{\widehat \Sigma^+_{kh\gamma}}
		+ \sigma I$, and
 $\norm[b]{\widehat \Sigma_{kh\gamma}^+} \leq 1/\gamma$ almost surely.
Then for any $s, h$, we have;
	\begin{align*}
		\E &\sbr{ \sum_{k=1}^K 
			\ab{\widehat Q_h^k(s, \cdot) - \widetilde B_h^k(s, \cdot), 
				\pi^k_h(\cdot|s) - \pi^\star_h(\cdot|s)}
		}
		\\
		&\leq
		\frac{\log A}{\eta}
		+ \frac{8\eta \beta^2 H^2 K}{\gamma}
		+ \frac{2\eta H^2}{\sqrt \gamma} \E \sbr{
			\sum_{k=1}^K \sum_a \pi_h^k(a|s)\norm{\phi(s, a)}_{\widehat \Sigma_{kh\gamma}^+}
		}
		+ 2 \eta H^2 K (1 + \sigma)
		.
	\end{align*}
\end{lemma}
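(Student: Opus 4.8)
The plan is to recognize the left-hand side as the regret of an exponential-weights (Hedge) procedure run over the simplex $\Delta(\A)$, and to apply the standard online mirror descent guarantee with the negative-entropy regularizer. Fix a state $s$ and time step $h$, and abbreviate the per-episode loss vector by $g_h^k(s, \cdot) \eqq \widehat Q_h^k(s, \cdot) - \widetilde B_h^k(s, \cdot)$. The policy improvement step of \cref{alg:polsbe_simulator} sets $\pi_h^{k+1}(\cdot|s) \propto \exp(-\eta \sum_{i=1}^k g_h^i(s, \cdot))$, so the sequence $\cb{\pi_h^k(\cdot|s)}_{k}$ is exactly the iterate sequence of mirror descent against the losses $g_h^k(s, \cdot)$ with relative entropy (KL) as the Bregman divergence. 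The standard potential/telescoping argument then yields, deterministically (i.e.\ for every realization of the algorithm's randomness),
\[
\sum_{k=1}^K \ab{g_h^k(s, \cdot), \pi_h^k(\cdot|s) - \pi^\star_h(\cdot|s)} \leq \frac{\log A}{\eta} + \eta \sum_{k=1}^K \sum_a \pi_h^k(a|s)\, g_h^k(s, a)^2 .
\]

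The second-order term above is only valid under a local-norm/stability condition, which I would verify using the almost-sure magnitude bound \cref{eq:loss_mag_bound} of \cref{lem:omd_term_bound_base}: it gives $\av{g_h^k(s, a)} \leq 2H/\gamma$ for all $s, a$, and combined with the step-size restriction $\eta \leq \gamma/(2H)$ this yields $\av{\eta\, g_h^k(s, a)} \leq 1$. This is precisely the regime in which the elementary inequality $e^{-x} \leq 1 - x + x^2$ (valid for $x \geq -1$) applies to each coordinate of the exponential-weights update, justifying the displayed quadratic bound despite the fact that $g_h^k$ need not be sign-definite (the bonus $\widetilde B_h^k$ is subtracted). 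The assumptions of the present lemma on $\eta$, $\beta$, $\gamma$ and the covariance estimators are exactly those required by \cref{lem:omd_term_bound_base}, so no new conditions arise.

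It then remains only to take expectations and substitute the second-moment bound. Taking expectation of the deterministic inequality and invoking \cref{eq:2nd_moment_bound} of \cref{lem:omd_term_bound_base},
\[
\E\sbr{\sum_{k=1}^K \sum_a \pi_h^k(a|s)\, g_h^k(s, a)^2} \leq \frac{2H^2}{\sqrt\gamma}\,\E\sbr{\sum_{k=1}^K \sum_a \pi_h^k(a|s)\norm{\phi(s, a)}_{\widehat \Sigma_{kh\gamma}^+}} + 2(\sigma+1)H^2 K + \frac{8\beta^2 H^2 K}{\gamma},
\]
so that multiplying through by $\eta$ and adding $\log A / \eta$ reproduces verbatim the four terms in the claimed bound: $\log A/\eta$, $8\eta\beta^2 H^2 K/\gamma$, $(2\eta H^2/\sqrt\gamma)\,\E[\,\cdot\,]$, and $2\eta H^2 K(1+\sigma)$.

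I expect essentially no serious obstacle here beyond bookkeeping: the substance of the lemma has already been front-loaded into \cref{lem:omd_term_bound_base}, and this statement is the mechanical assembly of the generic Hedge regret inequality with those two bounds. The only genuinely delicate point — and the one I would be most careful about — is the quadratic-term step under possibly-negative losses, where the interplay between the magnitude bound \cref{eq:loss_mag_bound} and the constraint $\eta \leq \gamma/(2H)$ is what keeps $-\eta g_h^k(s,a)$ bounded below by $-1$; everything else is direct substitution. I would also note explicitly that, because the mirror-descent inequality holds pointwise (for every realization), passing to expectations is immediate and requires no additional measurability or conditioning argument.
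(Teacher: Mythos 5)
Your proposal is correct and follows essentially the same route as the paper: verify the stability condition $\eta\av{\widehat Q_h^k(s,a)-\widetilde B_h^k(s,a)}\leq 1$ via \cref{eq:loss_mag_bound} of \cref{lem:omd_term_bound_base} together with $\eta\leq\gamma/(2H)$, invoke the standard entropy-regularized OMD bound (\cref{lem:omd}, whose internal $e^{-x}\leq 1-x+x^2$ argument you spell out explicitly), take expectations, and substitute the second-moment bound \cref{eq:2nd_moment_bound}. The only cosmetic difference is that you inline the Hedge potential argument rather than citing \cref{lem:omd} as a black box.
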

\begin{proof}
    By \cref{eq:loss_mag_bound} of \cref{lem:omd_term_bound_base}
    and our condition of $\eta \leq \frac{\gamma}{2H}$, we may apply the OMD bound \cref{lem:omd}, 
	which gives for all $s, h$;
	\begin{align*}
		\E &\sbr{ \sum_{k=1}^K 
			\ab{\widehat Q_h^k(s, \cdot) - \widetilde B_h^k(s, \cdot), 
				\pi^k_h(\cdot|s) - \pi^\star_h(\cdot|s)}
		}
		\leq \frac{\log A}{\eta} 
			+ \eta \E \sbr{\sum_{k=1}^K \sum_a \pi_h^k(a|s)
	\br{\widehat Q_h^k(s, a) - \widetilde B_h^k(s, a)}^2
		}.
	\end{align*}
    The result now follows by bounding the secon term above with \cref{eq:2nd_moment_bound} given by \cref{lem:omd_term_bound_base}.
\end{proof}

Next, we give the proof of \cref{lem:omd_term_bound_blocking} that combines the blocking OMD regret bound \cref{lem:blocking_omd} with \cref{lem:omd_term_bound_base}.
\begin{proof}[of \cref{lem:omd_term_bound_blocking}]
    By \cref{eq:loss_mag_bound} of \cref{lem:omd_term_bound_base} and our assumption that $\eta \leq \frac{\gamma}{2H}$, the conditions for blocking OMD regret bound \cref{lem:blocking_omd} are met. 
	Thus, for all $s, h$;
	\begin{align*}
		\E &\sbr{ \sum_{k=1}^K 
			\ab{\widehat Q_h^k(s, \cdot) - \widetilde B_h^k(s, \cdot), 
				\pi^k_h(\cdot|s) - \pi^\star_h(\cdot|s)}
		}
		\\
		&\leq \frac{\tau \log A}{\eta} 
                + \frac{2 \tau H}{\gamma}
			+ \eta \E \sbr{\sum_{k=1}^K \sum_a \pi_h^k(a|s)
	\br{\widehat Q_h^k(s, a) - \widetilde B_h^k(s, a)}^2
		}.
	\end{align*}
    The result now follows by bounding the second term above with \cref{eq:2nd_moment_bound} given by \cref{lem:omd_term_bound_base}.
\end{proof}

\subsection{Exploration Terms}
\label{sec:proofs:exploration}

\begin{proof}[of \cref{lem:exploration_2}]
By our assumption and \cref{lem:exploration_1}, the random variable
\begin{align*}
		Z 
		&\eqq
		-\sum_{k=1}^K \sum_{h=1}^H \E_{s \sim d_h^\star}\sbr{
			\ab{\widetilde B_h^k(s, a), 
				\pi^k_h(\cdot|s) - \pi^\star_h(\cdot|s)}
		}
		+
		2\sum_k \sum_{h=1}^H \E_{s, a \sim d_h^k}\sbr{
				b_h^{\P, k}(s, a)
				+ b_h^k(s, a)
			} 
		\\
		&\quad - \sum_{k=1}^K \sum_{h=1}^H 
			\E_{s, a \sim d_h^\star}\sbr{b_h^k(s, a)}
		.
\end{align*}
is non-negative w.p. $\geq 1-\delta$. In addition, it is not hard to verify that
\begin{align*}
	|Z| \leq  
	2 K H (2\beta H/ \sqrt \gamma) 
	+ 2 K H (\beta/\sqrt \gamma + \beta^\P/\sqrt \lambda)
	+ K H \beta/\sqrt \gamma
	\leq 7 K H^2 (\beta/\sqrt \gamma + \beta^\P/\sqrt \lambda)
 \leq \delta^{-1}.
\end{align*}

Thus, $Z$ is supported on $[-D, D]$ for $D\eqq7 K H^2 (\beta/\sqrt \gamma + \beta^\P/\sqrt \lambda)$, which implies
\begin{align*}
	\E Z 
        \geq - \delta D
	= -\delta 7 K H^2 (\beta/\sqrt \gamma + \beta^\P/\sqrt \lambda)
	\geq -1,
\end{align*}
which completes the proof after rearranging the terms.
\end{proof}

The next lemma is partially implicit in \citet{luo2021policy} Lemma B.1, but extends it to incorporate the affect of the bonus-to-go approximations. In addition, we provide a simpler argument owed to the removal of the dilation term, and by letting the extended value difference \cref{lem:extended_value_diff} handle most of the technicalities.
\begin{lemma}
\label{lem:exploration_1}
	Assume that the approximate bonus-to-go functions $\widetilde B_{h+1}^k \colon \S \times \A \to \R$ computed by the algorithm satisfy for all $s, a, h, k$;
	\begin{align*}
	b_h^k(s, a) + \P_h\widetilde W_{h+1}^k(s, a)
		\leq
		\widetilde B_h^k(s, a) 
		\leq b_h^k(s, a) + \P_h\widetilde W_{h+1}^k(s, a)
		+ 2 b_h^{\P, k}(s, a)
	\end{align*}
	Then the exploration term is bounded as
	\begin{align*}
		\sum_{k=1}^K &\sum_{h=1}^H \E_{s \sim d_h^\star}\sbr{
			\ab{\widetilde B_h^k(s, a), 
				\pi^k_h(\cdot|s) - \pi^\star_h(\cdot|s)}
		}
		\\
		&\leq 
		2\sum_{k=1}^K \sum_{h=1}^H \E_{s, a \sim d_h^k}\sbr{
				b_h^{\P, k}(s, a)
				+ b_h^k(s, a)
			} 
		- \sum_{k=1}^K \sum_{h=1}^H 
			\E_{s, a \sim d_h^\star}\sbr{b_h^k(s, a)}
		.
	\end{align*}
\end{lemma}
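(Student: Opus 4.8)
The plan is to interpret $\widetilde B^k_h$ as an approximate action-value function in the bonus MDP of episode $k$ and apply the extended value difference lemma (\cref{lem:extended_value_diff}) twice. By \cref{eq:W_tilde_def} the pair $(\widetilde B^k, \widetilde W^k)$ is consistent in the sense that $\widetilde W_h^k(s) = \ab{\pi^k(\cdot|s), \widetilde B_h^k(s,\cdot)}$, so the quantity $\widetilde B_h^k(s,a) - b_h^k(s,a) - \P_h \widetilde W_{h+1}^k(s,a)$ is exactly the Bellman consistency error of $\widetilde B^k$ in this MDP. The two hypotheses then read: every such error is non-negative, and every such error is at most $2 b_h^{\P,k}(s,a)$.

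First I would fix $k$ and invoke \cref{lem:extended_value_diff} with the approximate values $(\widetilde B^k, \widetilde W^k)$ under $\pi^k$ and comparison policy $\pi^\star$, which expresses $\widetilde W_1^k - W_1^{k,\pi^\star}$ as the per-episode exploration sum $\sum_h \E_{s \sim d_h^\star}\sbr{\ab{\widetilde B_h^k(s,\cdot), \pi_h^k(\cdot|s) - \pi_h^\star(\cdot|s)}}$ plus a Bellman-error sum taken along $d_h^\star$. Since the lower bound in the hypothesis forces every Bellman error to be non-negative, dropping that sum yields the per-episode inequality $\sum_h \E_{s \sim d_h^\star}\sbr{\ab{\widetilde B_h^k(s,\cdot), \pi_h^k(\cdot|s) - \pi_h^\star(\cdot|s)}} \le \widetilde W_1^k - W_1^{k,\pi^\star}$.

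Next I would apply \cref{lem:extended_value_diff} a second time, now taking the comparison policy to be $\pi^k$ itself. The policy-mismatch term vanishes identically, leaving $\widetilde W_1^k - W_1^{k,\pi^k} = \sum_h \E_{s,a \sim d_h^k}\sbr{\widetilde B_h^k(s,a) - b_h^k(s,a) - \P_h \widetilde W_{h+1}^k(s,a)}$, which the upper-bound hypothesis controls by $2\sum_h \E_{s,a \sim d_h^k}\sbr{b_h^{\P,k}(s,a)}$. I would then decompose $\widetilde W_1^k - W_1^{k,\pi^\star} = (\widetilde W_1^k - W_1^{k,\pi^k}) + (W_1^{k,\pi^k} - W_1^{k,\pi^\star})$ and unfold the true bonus values through $W_1^{k,\pi}(s_1) = \sum_h \E_{s,a \sim d_h^\pi}\sbr{b_h^k(s,a)}$ (the definition in \cref{eq:W_true_def}), so that $W_1^{k,\pi^k} - W_1^{k,\pi^\star} = \sum_h \br{\E_{s,a \sim d_h^k}\sbr{b_h^k(s,a)} - \E_{s,a \sim d_h^\star}\sbr{b_h^k(s,a)}}$. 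Summing the resulting per-episode bound over $k$, and using $b_h^k \ge 0$ to enlarge the coefficient of $\sum_{k,h}\E_{s,a \sim d_h^k}\sbr{b_h^k(s,a)}$ from one to two, matches the stated bound exactly.

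The main obstacle is purely in correctly instantiating the extended value difference identity in the bonus MDP: one must verify that $(\widetilde B^k, \widetilde W^k)$ satisfies the consistency relation demanded by \cref{lem:extended_value_diff}, since this is precisely what makes the second application annihilate the policy-mismatch term and isolate the Bellman errors along $d_h^k$. The clipping in \cref{eq:B_tilde_def} causes no difficulty, as \cref{lem:extended_value_diff} treats $\widetilde B^k$ as an arbitrary function and all control over the sign and magnitude of the Bellman errors is supplied directly by the two hypotheses.
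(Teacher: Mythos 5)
Your proof is correct and takes essentially the same route as the paper's: both apply the extended value difference lemma twice---once with comparison policy $\pi^\star$, using the optimism lower bound to drop the non-negative Bellman errors, and once with comparison policy $\pi^k$ itself so the policy-mismatch term vanishes and the upper-bound hypothesis controls the Bellman errors along $d_h^k$---and then decompose $\widetilde W_1^k - W_1^{k,\pi^\star}$ through $W_1^{k,\pi^k}$, unfolding the true bonus values and using $b_h^k \geq 0$ to reach the stated coefficient of two. Your observation that the clipping is immaterial because the lemma treats $\widetilde B^k$ as an arbitrary function is also exactly how the paper handles it (the clipping is dealt with only in establishing the hypotheses, in \cref{lem:backup_confidence}).
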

\begin{proof}
	By assumption, for any $s,a,h,k$;
 $
     0 \leq \widetilde B_h^k(s, a) 
     - b_h^k(s, a) - \P_h\widetilde W_{h+1}^k(s, a),
 $
 thus,
	\begin{align*}
		\sum_{k=1}^K \sum_{h=1}^H \E_{s \sim d_h^\star}\sbr{
			\ab{\widetilde B_h^k(s, \cdot), 
				\pi^k_h(\cdot|s) - \pi^\star_h(\cdot|s)}
		}
            &\leq
		\sum_{k=1}^K \sum_{h=1}^H \E_{s \sim d_h^\star}\sbr{
			\ab{\widetilde B_h^k(s, a), 
				\pi^k_h(\cdot|s) - \pi^\star_h(\cdot|s)}
		}
		\\
            &\quad +
		\sum_{k=1}^K \sum_{h=1}^H \E_{s, a \sim d_h^\star}\sbr{
			\widetilde B_h^k(s, a) - b_h^k(s, a) - \P_h\widetilde W_{h+1}^k(s, a)
		}
		\\
		&= \sum_{k=1}^K \widetilde W_1^k - W_1^{k, \pi^\star}
		,
	\end{align*}
 where the equality follows from the extended value difference \cref{lem:extended_value_diff} with $\widehat V^\pi_1 = \widetilde W_1^k$ and $V_1^{\pi'} = W_1^{k, \pi^\star}$ (and we recall definitions in \cref{eq:W_tilde_def,eq:W_true_def}).
	Further, again by \cref{lem:extended_value_diff} and our upper bound on $\widetilde B_h^k$;
	\begin{align*}
		\sum_{k=1}^K \widetilde W^k_1 - W^{k, \pi^k}_1
		&=
		\sum_{k=1}^K \sum_{h=1}^H \E_{s, a \sim d_h^k}\sbr{
				 \widetilde B_h^k(s, a) 
     - b_h^k(s, a) - \P_h\widetilde W_{h+1}^k(s, a)
			} 
		\\
		&\leq
			2\sum_{k=1}^K \sum_{h=1}^H \E_{s, a \sim d_h^k}\sbr{
				b_h^{\P, k}(s, a)
			}
		.
	\end{align*}
	In addition, by definition of the true bonus value functions,
	\begin{align*}
		\sum_{k=1}^K W_1^{k, \pi^k} - W_1^{k, \pi^\star}
		=
		\sum_{k=1}^K \sum_{h=1}^H 
			\E_{s, a \sim d_h^k}\sbr{b_h^k(s, a)}
		- \sum_{k=1}^K \sum_{h=1}^H 
			\E_{s, a \sim d_h^\star}\sbr{b_h^k(s, a)}
        ,
	\end{align*}
        thus we see that,
        \begin{align*}
            \sum_{k=1}^K \widetilde W_1^k - W_1^{k, \pi^\star}
            &=  
            \sum_{k=1}^K \widetilde W_1^k - W_1^{k, \pi^k}
            + \sum_{k=1}^K W_1^{k, \pi^k}
                - W_1^{k, \pi^\star}
            \\
            &\leq 
            2\sum_{k=1}^K \sum_{h=1}^H \E_{s, a \sim d_h^k}\sbr{
				b_h^{\P, k}(s, a)
			}
            + \sum_{k=1}^K \sum_{h=1}^H 
			\E_{s, a \sim d_h^k}\sbr{b_h^k(s, a)}
		- \sum_{k=1}^K \sum_{h=1}^H 
			\E_{s, a \sim d_h^\star}\sbr{b_h^k(s, a)}
            \\
		&\leq 
		2\sum_{k=1}^K \sum_{h=1}^H \E_{s, a \sim d_h^k}\sbr{
				b_h^{\P, k}(s, a)
				+ b_h^k(s, a)
			} 
		- \sum_{k=1}^K \sum_{h=1}^H 
			\E_{s, a \sim d_h^\star}\sbr{b_h^k(s, a)},
        \end{align*}
        which completes the proof.
\end{proof}

\subsection{Bonus Terms}

\begin{lemma}
\label{lem:dynamics_bonus_bound}
	The dynamics bonus functions $b_h^{\P, k}$ samples in $\D_h^k$, satisfy for all episodes $k$ and all time steps $h$;
	\begin{align*}
		\E \sbr{\E_{s, a \sim d_h^k}\sbr{ b_h^{\P, k}(s, a) } }
		\leq  \frac{10 \beta^\P \sqrt d \log\br{2 |\D_h^k|}}{\sqrt{|\D_h^k|}}.
	\end{align*}
\end{lemma}
\begin{proof}
	Follows immediately by \cref{lem:dynamics_bonus_bound_base} with $\delta=|\D_h^k|^{-2}$, and noting that 
	$|b_h^{\P, k}(s, a)| \leq \beta^\P$ almost surely.
\end{proof}

\begin{lemma}
\label{lem:dynamics_bonus_bound_base}
	Assume $\lambda \geq 1$, and let $h, k$.
	For all $\delta > 0$, we have that the following holds w.p.~$\geq 1-\delta$:
	\begin{align*}
		\E_{s, a \sim d_h^k}\sbr{ b_h^{\P, k}(s, a) } 
		\leq  \frac{5\beta^{\P} \sqrt {d}  \log\br{2 |\D_h^k|/\delta}}{\sqrt{|\D_h^k|}}.
	\end{align*}
\end{lemma}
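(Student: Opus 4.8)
The plan is to bound $\E_{s,a\sim d_h^k}\sbr{b_h^{\P,k}(s,a)} = \beta^\P\,\E_{s,a\sim d_h^k}\sbr{\norm{\phi(s,a)}_{(\Lambda_h^k)^{-1}}}$ by first passing from the Mahalanobis norm to a trace via Jensen's inequality, and then controlling that trace through a \emph{relative} concentration bound for the empirical feature covariance. Write $n \eqq |\D_h^k|$. Conditioned on $\pi^k$, the construction of $\D_h^k$ (the opposite half-block in \cref{alg:polsbe}, or the simulator rollouts in \cref{alg:polsbe_simulator}) provides $n$ i.i.d.\ samples $\phi_i \eqq \phi(s_h^i,a_h^i)$ drawn from $d_h^k$, so $\Lambda_h^k = \lambda I + \sum_i \phi_i\phi_i^\T$ is a fixed function of these samples and independent of the fresh draw in the outer expectation. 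Since $\sqrt{\cdot}$ is concave, Jensen gives
\begin{align*}
\E_{s,a\sim d_h^k}\!\sbr{\norm{\phi(s,a)}_{(\Lambda_h^k)^{-1}}} \le \sqrt{\E_{s,a\sim d_h^k}\sbr{\phi(s,a)^\T(\Lambda_h^k)^{-1}\phi(s,a)}} = \sqrt{\mathrm{tr}\!\br{(\Lambda_h^k)^{-1}\Sigma_{kh}}},
\end{align*}
so it remains to show $\mathrm{tr}((\Lambda_h^k)^{-1}\Sigma_{kh}) \le C d\log(2n/\delta)/n$ with probability $\ge 1-\delta$ over $\D_h^k$.

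The core step, which I expect to be the main obstacle, is obtaining a relative spectral bound of the form $\Lambda_h^k \succeq \tfrac n2 \Sigma_U$ on the well-conditioned part of $\Sigma_{kh}$; a purely additive operator-norm bound on $\tfrac1n\sum_i\phi_i\phi_i^\T - \Sigma_{kh}$ is too lossy here (it degrades to an $n^{-1/4}$ rate after the trace and Jensen steps, because of the $\mathrm{tr}((\Lambda_h^k)^{-1})$ blow-up in the ill-conditioned directions). To get around this I would diagonalize $\Sigma_{kh}$ and split its spectrum at a threshold $\theta \eqq 8\log(2d/\delta)/n$: let $U$ be the span of eigenvectors with eigenvalue $\ge\theta$, with projections $P_U,P_{U^\perp}$, and write $\Sigma_{kh}=\Sigma_U+\Sigma_{U^\perp}$ where $\Sigma_{U^\perp}\preceq\theta I$. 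On the low-eigenvalue block I use only the regularizer: since $\Lambda_h^k\succeq\lambda I\succeq I$, we get $\mathrm{tr}((\Lambda_h^k)^{-1}\Sigma_{U^\perp})\le\theta\,\mathrm{tr}((\Lambda_h^k)^{-1})\le\theta d$. On the high-eigenvalue block, whitening the samples by $\Sigma_U^{-1/2}$ keeps them bounded, $\norm{\Sigma_U^{-1/2}P_U\phi_i}^2\le 1/\theta$, so the matrix Chernoff lower-tail inequality applies with effective range $1/\theta$ and expected minimal eigenvalue $n$; the choice of $\theta$ exactly cancels the resulting $e^{-n\theta/8}$ factor against the dimension, yielding $\sum_i\phi_i\phi_i^\T\succeq\tfrac n2\Sigma_U$, i.e.\ $\Sigma_U\preceq\tfrac2n\Lambda_h^k$, with probability $\ge1-\delta$. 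Consequently $M \eqq (\Lambda_h^k)^{-1/2}\Sigma_U(\Lambda_h^k)^{-1/2}\preceq\tfrac2n I$ has rank at most $\dim U\le d$, whence $\mathrm{tr}((\Lambda_h^k)^{-1}\Sigma_U)=\mathrm{tr}(M)\le 2d/n$.

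Adding the two contributions gives $\mathrm{tr}((\Lambda_h^k)^{-1}\Sigma_{kh})\le 2d/n+\theta d\le 10\,d\log(2n/\delta)/n$ on the good event (using $\log(2d/\delta)\le\log(2n/\delta)$). Substituting back through Jensen and multiplying by $\beta^\P$ gives $\E_{s,a\sim d_h^k}\sbr{b_h^{\P,k}(s,a)}\le\beta^\P\sqrt{10\,d\log(2n/\delta)/n}$, and since $\sqrt{10\log(2n/\delta)}\le 5\log(2n/\delta)$ for $\delta\le1$, this is at most $5\beta^\P\sqrt d\,\log(2n/\delta)/\sqrt n$, as claimed (the argument in fact delivers a $\sqrt{\log}$ factor, comfortably inside the stated $\log$). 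The only bookkeeping beyond the concentration step is the conditioning: all probabilistic statements are made conditional on $\pi^k$, where \cref{lem:independence_base} guarantees the i.i.d.\ structure of $\D_h^k$, and since the bound holds with probability $\ge1-\delta$ for every realization of $\pi^k$, it holds unconditionally with the same probability.
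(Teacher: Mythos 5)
Your reduction via Jensen to $\tr\br{(\Lambda_h^k)^{-1}\Sigma_{kh}}$, the treatment of the low-eigenvalue block, and the closing arithmetic are fine, but the high-eigenvalue block contains a genuine gap, and it sits exactly at the step you yourself flagged as the main obstacle. Writing $\widehat S\eqq\sum_i\phi_i\phi_i\T$, the matrix Chernoff argument is applied to the whitened \emph{projected} samples $\Sigma_U^{-1/2}P_U\phi_i$, so what it actually delivers on the good event is
\begin{align*}
\sum_i P_U\phi_i\phi_i\T P_U\succeq\frac{n}{2}\Sigma_U,
\end{align*}
i.e.\ control of $v\T\widehat Sv$ only for directions $v\in U$. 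The inference ``yielding $\sum_i\phi_i\phi_i\T\succeq\frac n2\Sigma_U$, i.e.\ $\Sigma_U\preceq\frac2n\Lambda_h^k$'' does not follow: compression by a projection does not preserve the semidefinite order in this direction, that is, $P_U\widehat SP_U\not\preceq\widehat S$ in general. A two-dimensional example: for a single sample $\phi=(1,1)/\sqrt2$ and $P_U=e_1e_1\T$ one has $P_U\phi\phi\T P_U=\frac12e_1e_1\T$, yet for $v=(1,-1)/\sqrt2$ we get $v\T\phi\phi\T v=0<\frac12\,v\T e_1e_1\T v$, so $\phi\phi\T\not\succeq\frac12e_1e_1\T$. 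Equivalently, in block form the relevant Schur-complement inequality $\br{\Lambda^{-1}}_{UU}\succeq\br{\Lambda_{UU}}^{-1}$ goes the \emph{wrong} way: lower-bounding $P_U\Lambda_h^kP_U$ does not upper-bound $\tr\br{(\Lambda_h^k)^{-1}\Sigma_U}$, because the cross block $P_U\widehat SP_{U^\perp}$ is left uncontrolled, and its Schur correction (bounded only through $\Lambda_{U^\perp U^\perp}\succeq\lambda I$) can be comparable to, or larger than, the diagonal signal $n\theta\approx\log(d/\delta)$ in the marginal directions of $U$.

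The gap is repairable, but not in one line. For instance, from $\Sigma_U\preceq\frac2nP_U\widehat SP_U$ one can write $P_U\phi_i=\phi_i-P_{U^\perp}\phi_i$, hence $P_U\widehat SP_U\preceq2\widehat S+2P_{U^\perp}\widehat SP_{U^\perp}$, and therefore $\tr\br{(\Lambda_h^k)^{-1}\Sigma_U}\le\frac{4d}{n}+\frac{4}{n\lambda}\sum_i\norm{P_{U^\perp}\phi_i}^2$; but this requires an additional scalar concentration argument showing $\sum_i\norm{P_{U^\perp}\phi_i}^2\lesssim n\theta d+\log(1/\delta)$. Alternatively, one can prove the multiplicative--additive bound $\widehat S\succeq\frac n2\Sigma_{kh}-O(\log(d/\delta))I$ by matrix Bernstein after whitening by $(\Sigma_{kh}+\theta I)^{-1/2}$, and then conclude using linearity of the trace; either fix is a substantive addition, so as written the proof is incomplete. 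For contrast, the paper's own proof avoids matrix concentration entirely and is more elementary: it rewrites $\E_{s,a\sim d_h^k}\sbr{\norm{\phi(s,a)}_{(\Lambda_h^k)^{-1}}}$ as an average over $|\D_h^k|$ ghost samples, bounds $(\Lambda_h^k)^{-1}\preceq(\Lambda_h^{k,i})^{-1}$ using prefix matrices $\Lambda_h^{k,i}$, replaces the conditional expectations by the realized dataset terms via the scalar martingale inequality \cref{lem:nameless_concentration}, and finishes with Cauchy--Schwarz and the elliptical potential lemma (\cref{lem:eliptical_potential}).
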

\begin{proof}
	Let $N \eqq |\D_h^k|$, and observe;
	\begin{align}
		\E_{s, a \sim d_h^k}\sbr{ b_h^{\P, k}(s, a) } 
		&= \beta^\P 
		\E_{\tilde s_h, \tilde a_h \sim d_h^k}
        \sbr{ \norm{\phi(\tilde s, \tilde a)}_{\br{\Lambda_h^k}^{-1}}}
		\nonumber 
		\\
		&=
		\frac{\beta^\P}{N}
		\E_{(\tilde s_h^1, \tilde a_h^1), 
            \ldots, (\tilde s_h^N, \tilde a_h^N)  
            \sim d_h^k}
            \sbr{ \sum_{i=1}^N \norm{\phi(\tilde s_h^i, \tilde a_h^i)}_{\br{\Lambda_h^k}^{-1}}}
		\label{eq:dynamics_bonus_bound_base_1}.
	\end{align}
	Further, let $\Lambda_h^{k,i} = \lambda I + \sum_{t=1}^{i-1} \phi(s_h^t, a_h^t) \phi(s_h^t, a_h^t)\T $ for some arbitrary ordering $(s_h^i, a_h^i)_{i=1}^N$ of the elements in $\D_h^k$. Then,
	\begin{align*}
		\E_{d_h^k}\sbr{ \sum_{i=1}^N \norm{\phi(\tilde s_h^i, \tilde a_h^i)}_{\br{\Lambda_h^k}^{-1}}}
		\leq 
		\E_{d_h^k}\sbr{ \sum_{i=1}^N \norm{\phi(\tilde s_h^i, \tilde a_h^i)}_{\br{\Lambda_h^{k, i}}^{-1}}}
		.
	\end{align*}
	Now, by \cref{lem:nameless_concentration} with 
	$X_i \eqq \norm{\phi(\tilde s_h^i, \tilde a_h^i)}_{\br{\Lambda_h^{k, i}}^{-1}}$;
	\begin{align*}
		\E_{ d_h^k}\sbr{ \sum_{i=1}^N \norm{\phi(\tilde s_h^i, \tilde a_h^i)}_{\br{\Lambda_h^{k, i}}^{-1}}}
		&\leq 2\sum_{i=1}^N
			\norm{\phi(s_h^i, a_h^i)}_{\br{\Lambda_h^{k, i}}^{-1}}
		+ \frac{4}{\sqrt \lambda} \log\frac{2 N}{\delta}
		\\
		&\leq 2\sum_{i=1}^N
			\norm{\phi(s_h^i, a_h^i)}_{\br{\Lambda_h^{k, i}}^{-1}}
		+ 4 \log\frac{2 N}{\delta}
		\\
		&\leq 2\sqrt{N \sum_{i=1}^N
			\norm{\phi(s_h^i, a_h^i)}_{\br{\Lambda_h^{k, i}}^{-1}}^2}
		+ 4 \log\frac{2 N}{\delta}
		.
	\end{align*}
	By \cref{lem:eliptical_potential}, we can further bound this by
	\begin{align*}
		2\sqrt{2 N d\log \br{ 1+ \frac{N}{d \lambda } }}
		+ 4 \log\frac{2 N }{\delta}
		\leq 5\sqrt {N d} \log\frac{2 N}{\delta}.
	\end{align*}
	Combining the derived inequality with \cref{eq:dynamics_bonus_bound_base_1}, we get
	\begin{align*}
		\E_{s, a \sim d_h^k}\sbr{ b_h^{\P, k}(s, a) } 
		\leq  \frac{5 \beta^\P \sqrt {d} \log\br{2 N/\delta}}{\sqrt{N}},
	\end{align*}
	which completes the proof.
\end{proof}

\begin{lemma}
\label{lem:loss_bonus_bound}
Assuming $\norm{ \E_k\sbr{\widehat \Sigma_{kh\gamma}^+} - \Sigma_{kh\gamma}^{-1}} \leq \epsilon$,
	it holds that
	\begin{align*}
		\E_k \sbr{ \E_{s, a \sim d_h^k}\sbr{ b_h^k(s, a) }}
		\leq 2\beta (\sqrt d + \sqrt \epsilon)
	\end{align*}
\end{lemma}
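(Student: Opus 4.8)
The plan is to bound the expected immediate $Q$-bonus $\E_k[\E_{s,a\sim d_h^k}[b_h^k(s,a)]]$, recalling from \cref{eq:loss_bonus} that
\[
b_h^k(s,a) = \beta\br[B]{\norm{\phi(s,a)}_{\widehat\Sigma_{kh\gamma}^+} + \ab[b]{\pi_h^k(\cdot|s),\norm{\phi(s,\cdot)}_{\widehat\Sigma_{kh\gamma}^+}}}.
\]
The crucial observation is that when we take the expectation over $(s,a)\sim d_h^k$, the \emph{inner} product term involves exactly the same policy $\pi_h^k$ that generates the occupancy $d_h^k$. Thus both summands, after averaging, reduce to the \emph{same} quantity $\E_{s,a\sim d_h^k}[\norm{\phi(s,a)}_{\widehat\Sigma_{kh\gamma}^+}]$, so it suffices to bound this single expectation by $(\sqrt d + \sqrt\epsilon)$ and multiply by the factor $2\beta$.

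**Main step: bounding $\E_{s,a\sim d_h^k}[\norm{\phi(s,a)}_{\widehat\Sigma_{kh\gamma}^+}]$.**
First I would apply Jensen's inequality (concavity of $\sqrt{\cdot}$) to pull the expectation inside the norm:
\[
\E_{s,a\sim d_h^k}\norm{\phi(s,a)}_{\widehat\Sigma_{kh\gamma}^+}
= \E\sqrt{\phi\T \widehat\Sigma_{kh\gamma}^+\phi}
\le \sqrt{\E_{s,a\sim d_h^k}\sbr{\phi\T \widehat\Sigma_{kh\gamma}^+\phi}}.
\]
Here the key point, justified by \cref{lem:independence_base}, is that $\widehat\Sigma_{kh\gamma}^+$ and the sampled $(s,a)$ are independent conditioned on $\pi^1,\dots,\pi^k$. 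This lets me swap the order of expectation so that $\E_{s,a\sim d_h^k}[\phi\phi\T]$ materializes as $\Sigma_{kh}$, giving $\E_k[\phi\T \widehat\Sigma_{kh\gamma}^+\phi] = \E_k\tr(\widehat\Sigma_{kh\gamma}^+\Sigma_{kh})$. Since $\Sigma_{kh}\preceq\Sigma_{kh}+\gamma I = \Sigma_{kh\gamma}$, I can upper bound this trace by $\tr(\E_k[\widehat\Sigma_{kh\gamma}^+]\Sigma_{kh\gamma})$.

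**Using the bias bound.**
Now I would invoke the hypothesis $\norm{\E_k[\widehat\Sigma_{kh\gamma}^+] - \Sigma_{kh\gamma}^{-1}}\le\epsilon$. Writing $\E_k[\widehat\Sigma_{kh\gamma}^+] = \Sigma_{kh\gamma}^{-1} + E$ with $\norm{E}\le\epsilon$, the trace splits as
\[
\tr\br{\Sigma_{kh\gamma}^{-1}\Sigma_{kh\gamma}} + \tr\br{E\,\Sigma_{kh\gamma}}
= d + \tr\br{E\,\Sigma_{kh\gamma}}.
\]
The error term is controlled by $\tr(E\,\Sigma_{kh\gamma})\le \norm{E}\,\tr(\Sigma_{kh\gamma})\le \epsilon\,\tr(\Sigma_{kh\gamma})$; and since $\Sigma_{kh}=\E[\phi\phi\T]$ with $\norm{\phi}\le1$ has trace at most $1$ while the $\gamma I$ contributes $\gamma d$, the trace is bounded by a constant. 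Taking the square root and recalling $\sqrt{a+b}\le\sqrt a+\sqrt b$ then yields a bound of the form $\sqrt d + \sqrt\epsilon$ (after absorbing harmless constants), and multiplying by $2\beta$ closes the argument.

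**Anticipated obstacle.**
The step I expect to require the most care is the independence-based swap that converts $\E_{s,a\sim d_h^k}[\phi\phi\T]$ into $\Sigma_{kh}$ \emph{while simultaneously} treating $\widehat\Sigma_{kh\gamma}^+$ as the (random) estimator whose conditional mean is controlled by \cref{lem:independence_base}; one must be precise about what is being averaged over and in which order, since $\widehat\Sigma_{kh\gamma}^+$ is built from a separate dataset $\D_h^k$. The bias-term trace estimate is routine but needs the trace-vs-operator-norm inequality $\tr(EM)\le\norm{E}\tr(M)$ valid for PSD $M$, so I would confirm $\Sigma_{kh\gamma}$ is PSD (immediate) before applying it.
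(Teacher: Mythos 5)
Your route is essentially the paper's, assembled from the same ingredients: collapsing both bonus summands into $2\beta\,\E_{s,a\sim d_h^k}\sbr[b]{\norm{\phi(s,a)}_{\widehat\Sigma^+_{kh\gamma}}}$, the conditional independence of $\widehat\Sigma^+_{kh\gamma}$ and $d_h^k$ from \cref{lem:independence_base}, Jensen's inequality, the bias hypothesis, and the identity $\tr\br{\Sigma^{-1}_{kh}\Sigma_{kh}}=d$. The only structural difference is ordering: the paper first bounds, pointwise in $(s,a)$, $\E_k\sbr[b]{\norm{\phi(s,a)}_{\widehat\Sigma^+_{kh\gamma}}} \leq \sqrt{\phi(s,a)\T\Sigma^{-1}_{kh\gamma}\phi(s,a)} + \sqrt{\norm[b]{\E_k\sbr[b]{\widehat\Sigma^+_{kh\gamma}}-\Sigma^{-1}_{kh\gamma}}} \leq \sqrt{\phi(s,a)\T\Sigma^{-1}_{kh\gamma}\phi(s,a)}+\sqrt\epsilon$, so the bias term never meets a trace, and only afterwards averages over $(s,a)$; you instead apply one joint Jensen and push everything into the single trace $\tr\br[b]{\E_k\sbr[b]{\widehat\Sigma^+_{kh\gamma}}\Sigma_{kh}}$.

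There is, however, one concrete flaw in your error estimate. Having relaxed $\Sigma_{kh}\preceq\Sigma_{kh\gamma}$ \emph{before} splitting off the bias, you bound $\tr\br{E\,\Sigma_{kh\gamma}}\leq\epsilon\,\tr\br{\Sigma_{kh\gamma}}$ and assert the trace is ``bounded by a constant.'' But $\tr\br{\Sigma_{kh\gamma}}=\tr\br{\Sigma_{kh}}+\gamma d\leq 1+\gamma d$, and $\gamma d$ is not $O(1)$ in the paper's regime (in \cref{thm:polsbe}, $\gamma=K^{-2/7}$ with $K=\Omega((d\log d)^2)$, so $\gamma d$ can grow polynomially in $d$). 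As written, your argument yields $2\beta\sqrt{d+\epsilon(1+\gamma d)}\leq 2\beta\br[b]{\sqrt d+\sqrt{\epsilon(1+\gamma d)}}$, which falls short of the claimed $2\beta(\sqrt d+\sqrt\epsilon)$ (though it is harmless for the downstream use in the theorem, where $\epsilon=1/K$). The fix is simply to split before relaxing: write $\tr\br[b]{\E_k\sbr[b]{\widehat\Sigma^+_{kh\gamma}}\Sigma_{kh}}=\tr\br{\Sigma^{-1}_{kh\gamma}\Sigma_{kh}}+\tr\br{E\,\Sigma_{kh}}$, note $\tr\br{\Sigma^{-1}_{kh\gamma}\Sigma_{kh}}=d-\gamma\tr\br{\Sigma^{-1}_{kh\gamma}}\leq d$, and $\tr\br{E\,\Sigma_{kh}}\leq\norm{E}\tr\br{\Sigma_{kh}}\leq\epsilon$ since $\tr\br{\Sigma_{kh}}=\E_{s,a\sim d_h^k}\norm{\phi(s,a)}^2\leq 1$; then $2\beta\sqrt{d+\epsilon}\leq 2\beta(\sqrt d+\sqrt\epsilon)$ as required. (This also removes the need for the relaxation step $\tr\br[b]{\E_k\sbr[b]{\widehat\Sigma^+_{kh\gamma}}\Sigma_{kh}}\leq\tr\br[b]{\E_k\sbr[b]{\widehat\Sigma^+_{kh\gamma}}\Sigma_{kh\gamma}}$, which silently uses $\tr\br[b]{\E_k\sbr[b]{\widehat\Sigma^+_{kh\gamma}}}\geq 0$ — true, since $\E_k\sbr[b]{\widehat\Sigma^+_{kh\gamma}}$ is a sum of positive semi-definite matrices, but an extra fact to verify.)
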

\begin{proof} Note that
	\begin{align}
		\E_{s, a \sim d_h^k}\sbr{ b_h^k(s, a) }
		&= \beta  
		\E_{s, a \sim d_h^k}\sbr{ 
			\norm{\phi(s, a)}_{\widehat \Sigma_{kh\gamma}^+}
			+ \sum_a \pi_h^k(a'|s) 
			\norm{\phi(s, a')}_{\widehat \Sigma_{kh\gamma}^+}
		}
		\nonumber
		\\
		&=
		2 \beta  
		\E_{s, a \sim d_h^k}\sbr{ 
			\norm{\phi(s, a)}_{\widehat \Sigma_{kh\gamma}^+}
		}
		\label{eq:lbb_1}
	\end{align}
	Further, for any $s, a$,
	\begin{align*}
		\E_k & \sbr{\norm{\phi(s, a)}_{\widehat \Sigma^+_{kh\gamma}}}
		\\
		&= \E_k \sbr{
			\sqrt{ \phi(s, a)\T \Sigma^{-1}_{kh\gamma} \phi(s, a)
			+ \phi(s, a)\T \br{ \widehat \Sigma^+_{kh\gamma} -\Sigma^{-1}_{kh\gamma} }\phi(s, a)
			}
		}
		\\
		&\leq 
			\sqrt{ \phi(s, a)\T \Sigma^{-1}_{kh\gamma} \phi(s, a)
			+ \phi(s, a)\T \br{ \E_k \sbr{ \widehat \Sigma^+_{kh\gamma} } -\Sigma^{-1}_{kh\gamma} }\phi(s, a)
			}
		\tag{Jensen's inequality}
		\\
		&\leq 
			\sqrt{ \phi(s, a)\T \Sigma^{-1}_{kh\gamma} \phi(s, a) }
			+ 
			\sqrt{ \phi(s, a)\T \br{ \E_k \sbr{ \widehat \Sigma^+_{kh\gamma} } -\Sigma^{-1}_{kh\gamma} }\phi(s, a)
			}
		\\
		&\leq 
			\sqrt{ \phi(s, a)\T \Sigma^{-1}_{kh\gamma} \phi(s, a) }
			+ 
			\sqrt{ \norm { \E_k \sbr{ \widehat \Sigma^+_{kh\gamma} } -\Sigma^{-1}_{kh\gamma} }_{\rm op}
			}
		\\
		&\leq 
			\sqrt{ \phi(s, a)\T \Sigma^{-1}_{kh\gamma} \phi(s, a) }
			+ 
			\sqrt{ \epsilon }.
\end{align*}
Now, conditioning on $\pi^k$, we have;
\begin{align*}
	\E_k \sbr{ \E_{s, a \sim d_h^k}\sbr{ b_h^k(s, a) }}
	&=
	2 \beta  
		\E_k \sbr{\E_{s, a \sim d_h^k}\sbr{ 
			\norm{\phi(s, a)}_{\widehat \Sigma_{kh\gamma}^+}
		}}
	\tag{\cref{eq:lbb_1}}
	\\
	&=
	2 \beta  
		\E_{s, a \sim d_h^k}\sbr{ \E_k \sbr{
			\norm{\phi(s, a)}_{\widehat \Sigma_{kh\gamma}^+}
		}}
	\tag{$d_h^k \perp \widehat \Sigma^+_{kh\gamma} \mid \pi^k $}
	\\
	&\leq
	2 \beta  
		\E_{s, a \sim d_h^k}\sbr{ 
		\sqrt{ \phi(s, a)\T \Sigma^{-1}_{kh\gamma} \phi(s, a) }
		}
		+ 2\beta \sqrt{ \epsilon }
	\tag{previous inequality}
	\\
	&\leq
	2 \beta  
		\sqrt{ \E_{s, a \sim d_h^k}\sbr{ \phi(s, a)\T \Sigma^{-1}_{kh\gamma} \phi(s, a) }
		}
		+ 2\beta \sqrt{ \epsilon }
	\tag{Jensen}
	\\
	&\leq
	2 \beta  
		\sqrt{ \E_{s, a \sim d_h^k}\sbr{ \phi(s, a)\T \Sigma^{-1}_{kh} \phi(s, a) }
		}
		+ 2\beta \sqrt{ \epsilon }
	\tag{$\Sigma^{-1}_{kh\gamma} \preceq \Sigma^{-1}_{kh}$}
	\\
	&=
	2 \beta  
		\sqrt{ \E_{s, a \sim d_h^k}\sbr{ \tr\br{\Sigma^{-1}_{kh} \phi(s, a) \phi(s, a)\T } }
		}
		+ 2\beta \sqrt{ \epsilon }
	\\
	&=
	2 \beta  
		\sqrt{ \tr\br{\Sigma^{-1}_{kh} \E_{s, a \sim d_h^k}\sbr{ \phi(s, a) \phi(s, a)\T } }
		}
		+ 2\beta \sqrt{ \epsilon }
	\\
	&=
	2 \beta  
		\sqrt{ \tr\br{\Sigma^{-1}_{kh} \Sigma_{kh} }
		}
		+ 2\beta \sqrt{ \epsilon }
	\\
	&=
	2 \beta \br{\sqrt d + \sqrt{ \epsilon } },
\end{align*}
which completes the proof.
\end{proof}

\section{Approximate bonus-to-go confidence bounds}
\label{sec:proofs:backup_confidence}
In this section, we establish optimism / bonus-bias confidence bounds on our approximate bonus action-value functions (aka bonus-to-go).
These follow from uniform concentration over the estimated bonus value function backup operator which is computed by the least squares regression procedure in \cref{alg:olspe}. The arguments given here, at a conceptual level, follow those of \citet{jin2020provably}.

\paragraph{Bonus value functions explored by the algorithm.}
Define
\begin{align*}
	B&(s, a; \beta, \Sigma^+, \beta^\P, \Lambda, w, B_{\max},\pi) 
	\\
	&= {\rm clip}\sbr{\beta \br[B]{ 
			\norm{\phi(s, a)}_{ \Sigma^+}
			+ \sum_a \pi(a'|s) 
			\norm{\phi(s, a')}_{ \Sigma^+}
		}
	+\phi(s, a)\T w + \beta^{\P}\norm{\phi(s, a)}_{\Lambda^{-1}}
	}_0^{B_{\max}}
\end{align*}
and
\begin{align}
	\mathcal B&(\beta, \lambda_{\Sigma^+}, \beta^\P, \lambda_\Lambda,
		L, B_{\max}, \pi)
        \label{eq:B_function_class_def}
	\\
	&= \cb{B(s, a; \beta, \Sigma, \beta^\P, \Lambda, w, \pi) 
	\mid 
	\lambda_{\max}(\Sigma^+) \leq \lambda_{\Sigma^+},
	\lambda_{\min}(\Lambda) \geq \lambda_\Lambda,
	\norm{w} \leq L
	}
        \nonumber
	\\
	\mathcal W&(\beta, \lambda_{\Sigma^+}, \beta^\P, \lambda_\Lambda,
		L, B_{\max}, \pi)
        \label{eq:W_function_class_def}
	\\
	&= \cb{W\colon \S \to \R; W(s) = \ab{\pi(\cdot|s), B(s, \cdot)}
		\mid B \in \mathcal B(\beta, \lambda_{\Sigma^+}, \beta^\P, \lambda_\Lambda,
		L, \pi)
	}
        \nonumber
	.
\end{align}
We note that with appropriate parameter choices, $\widetilde B_h^k \in \mathcal B$ and $\widetilde W_h^k \in \mathcal W$ for $\widetilde B_h^k, \widetilde W_h^k$ computed by \cref{alg:olspe} and defined in \cref{eq:B_tilde_def,eq:W_tilde_def}.
This will be made rigorous in the proof of \cref{lem:backup_confidence} below.

\begin{proof}[of \cref{lem:backup_confidence}]
	By \cref{lem:alg_weights_bound}, our choice of $\beta$, $\lambda \geq 1$, and that $|\D_h^k| = \widetilde O((HdK)^4)$, we have $\norm{\widehat \vw_h^k} \leq a H^5 d^4 K^4 \log(d K) $ for some constant $a$.
	Further, again by our choice of $\beta$, $2\beta H/\sqrt \gamma =  4 H^2 \sqrt d$, thus by algorithm definition and our assumptions, it is readily verified that;
	\begin{align*}
		\widetilde W_{h+1}^k 
		\in \mathcal W \eqq \mathcal W(\beta, 1/\gamma, \beta^\P, \lambda,
		\boldsymbol{L} = a H^5 d^4 K^4 \log(d K), 
		\boldsymbol{B_{\max}} = 4 H^2 \sqrt d, 
		\pi^k)
	\end{align*} 
	Hence, by \cref{lem:covering_bonus_value_functions},
		there exist $c > 0$ such that for any $\epsilon>0$,
		\begin{align*}
			\log \mathcal N_{\epsilon}(\mathcal W) 
			\leq c d^2 \log\br{\frac{4\beta^\P\beta H K d}{\gamma \lambda \epsilon}}
                \leq c_{\rm cov} d^2 \log\br{\frac
				{d \beta^\P\beta H K}
				{\epsilon}}
			,
		\end{align*}
	where the second inequality follows from our assumptions that $\gamma \geq 1/K$, $\lambda \geq 1$ and the appropriate choice of constant $c_{\rm cov}$.
	Thus we may apply \cref{lem:dynamics_backup_error_base}, to obtain
	that for the constant $C$ specified by the lemma, with 
	$
		\beta^\P 
		\geq 8 C H^2 d^{3/2} \log\br{
			d \beta K H /\delta
		}
		\geq C (4 H^2\sqrt d) d\log\br{
			d \beta K^2 H /\delta
		}
	$,
	we have w.p.~$\geq 1-\delta$ that for all $s,a, h, k$;
	\begin{align}
	\label{eq:simulator_ols_confidence}
		\av{\phi(s, a)\T \widehat \vw_h^k - \P_h \widetilde W_{h+1}^k(s, a)}
		\leq \beta^\P \norm{\phi(s, a)}_{\br{\Lambda_h^k}^{-1}}
		= b_h^{\P, k}(s, a)
		.
	\end{align} 
	This establishes that
	$
		0 
		\leq (\widetilde \P_h^k - \P_h) \widetilde W_{h+1}^k (s, a) 
		\leq 2 b_h^{\P, k}(s, a)
	$ holds for all $s,a, h, k$, leaving us only with the task to verify the truncations defined in \cref{eq:B_tilde_def} do not interfere with the desired conclusion.
	First, we show that;
	\begin{align}
		\widetilde B_h^k(s, a) 
		&\leq b_h^k(s, a) + \P_h\widetilde W_{h+1}^k(s, a)
		+ 2 b_h^{\P, k}(s, a).
		\label{eq:bc_bias}
	\end{align}
	Indeed, by definition \cref{eq:B_tilde_def};
	\begin{align*}
		\widetilde B_h^k(s, a) 
		&= {\rm clip}\sbr{b_h^k(s, a) 
		+ \widetilde \P_h^k 
		\widetilde W_{h+1}^k(s, a)}_{0}^{2\beta (H-h+1)/\sqrt \gamma}
		\\
		&\leq {\rm clip}\sbr{b_h^k(s, a) 
		+ \widetilde \P_h^k \widetilde W_{h+1}^k(s, a)}_{0}^{\infty},
	\end{align*}
	and when $\widetilde B_h^k(s, a) = 0$, \cref{eq:bc_bias} holds trivially as all RHS terms are non-negative.
	Otherwise,
	\begin{align*}
		\widetilde B_h^k(s, a) 
		\leq b_h^k(s, a) 
		+ \widetilde \P_h^k \widetilde W_{h+1}^k(s, a)
		&=
		b_h^k(s, a) 
		+ \phi(s, a)\T \widehat \vw_h^k + b_h^{\P, k}(s, a)
		\tag{def. in \cref{eq:backup_dynamics_def}}
		\\
		&\leq b_h^k(s, a) + \P_h\widetilde W_{h+1}^k(s, a)
		+ 2 b_h^{\P, k}(s, a).
		\tag{\cref{eq:simulator_ols_confidence}}
	\end{align*}
	Next, to verify 
	\begin{align}
		\widetilde B_h^k(s, a)  
		&\geq b_h^k(s, a) + \P_h\widetilde W_{h+1}^k(s, a),
		\label{eq:bc_optimism}
	\end{align}
	note that
	\begin{align*}
		b_h^k(s, a) + \P_h\widetilde W_{h+1}^k(s, a) 
		\leq 
		\frac{2\beta}{\sqrt \gamma} 
		+ \frac{2\beta(H - h)}{\sqrt\gamma}
		= \frac{2\beta(H - h + 1)}{\sqrt\gamma}.
	\end{align*}
	Thus, when $\widetilde B_h^k(s, a) = 2\beta (H-h+1)/\sqrt\gamma$, \cref{eq:bc_optimism} holds trivially.
	Otherwise,
	\begin{align*}
		\widetilde B_h^k(s, a) 
		&= {\rm clip}\sbr{b_h^k(s, a) 
		+ \widetilde \P_h^k 
		\widetilde W_{h+1}^k(s, a)}_{0}^{2\beta (H-h+1)/\sqrt \gamma}
		\\
		&= {\rm clip}\sbr{b_h^k(s, a) 
		+ \widetilde \P_h^k \widetilde W_{h+1}^k(s, a)}_{0}^{\infty}
		\\
		&\geq b_h^k(s, a) 
		+ \widetilde \P_h^k \widetilde W_{h+1}^k(s, a)
		\\
		&\geq b_h^k(s, a) 
		+ \phi(s, a)\T \widehat \vw_h^k + b_h^{\P, k}(s, a)
		\\
		&\geq b_h^k(s, a) 
		+ \P_h \widetilde W_{h+1}^k(s, a)
		\tag{\cref{eq:simulator_ols_confidence}}
		,
	\end{align*}
	which completes the proof.
\end{proof}

\begin{lemma}[Approximate backup operator error bound]
\label{lem:dynamics_backup_error_base}
	Let $\D_h^k$ be the dataset used for episode $k$ of size $\widetilde O((dHK)^4)$, and 
		$\Lambda_h^k = \lambda I + \sum_{i\in \D_h^k}\phi(s_h^i, a_h^i)\phi(s_h^i, a_h^i)\T $, with $\lambda \geq 1$.
		Further, let $\mathcal V$ be a function class with $\log \mathcal N_{\epsilon} (\mathcal V) \leq c_{\rm cov}d^2\log\br[b]{\frac{d \beta \beta^\P K}{ \epsilon}}$ for any $\epsilon > 0$, and $\norm{f}_\infty \leq B_{\max}$ for all $f \in \mathcal V$.
	Then there exists a constant $C > 0$ depending only on $c_{\rm cov}$, such that letting
	\begin{align*}
		\beta^\P \geq C B_{\max} d \log \br{
			\frac{d \beta K H}{ \delta}
		},
	\end{align*} 
	ensures that with probability $\geq 1 - \delta$ it holds that
	for all $f \in \mathcal V$	and all $s, a, h, k$;
	\begin{align*}
		\av{\phi(s, a)\T \widehat w_f - \P_h f(s, a)}
		\leq \beta^\P \norm{\phi(s, a)}_{\br{\Lambda_h^k}^{-1}},
	\end{align*} 
	where $\widehat w_f = \br{\Lambda_h^k}^{-1} \sum_{i\in \D_h^k}\phi(s_h^i, a_h^i) f(s_{h+1}^i)$.
\end{lemma}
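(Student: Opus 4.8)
The plan is to reduce the statement to a \emph{uniform} self-normalized concentration bound over the function class $\mathcal V$, following the template of \citet{jin2020provably} but tracked in terms of the generic range $B_{\max}$ rather than $H$. First I would exploit the linear MDP structure (\cref{assume:linmdp}) to linearize the backup target: for any fixed $f$ with $\norm{f}_\infty \leq B_{\max}$ we have $\P_h f(s,a) = \phi(s,a)\T \theta_f$ with $\theta_f \eqq \int \psi_h(s') f(s')\,{\rm d}s'$, and since $\norm{f/B_{\max}}_\infty \leq 1$ the linear MDP assumption gives $\norm{\theta_f} \leq B_{\max}\sqrt d$.

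Writing $\phi_i \eqq \phi(s_h^i,a_h^i)$ and $\epsilon_i \eqq f(s_{h+1}^i) - \P_h f(s_h^i,a_h^i)$ for $i \in \D_h^k$, and using $\Lambda_h^k = \lambda I + \sum_i \phi_i\phi_i\T$, a direct calculation yields the decomposition
\begin{align*}
\phi(s,a)\T \widehat w_f - \P_h f(s,a)
= \phi(s,a)\T \br{\Lambda_h^k}^{-1}\sum\nolimits_i \phi_i \epsilon_i
- \lambda\, \phi(s,a)\T \br{\Lambda_h^k}^{-1}\theta_f.
\end{align*}
Two applications of Cauchy--Schwarz in the $\br{\Lambda_h^k}^{-1}$ norm bound the two terms by $\norm{\phi(s,a)}_{\br{\Lambda_h^k}^{-1}}\norm{\sum_i \phi_i\epsilon_i}_{\br{\Lambda_h^k}^{-1}}$ and $\sqrt{\lambda d}\,B_{\max}\,\norm{\phi(s,a)}_{\br{\Lambda_h^k}^{-1}}$ respectively, the latter using $\norm{\theta_f}_{\br{\Lambda_h^k}^{-1}} \leq \lambda^{-1/2}\norm{\theta_f}$. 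Since $\lambda = O(1)$ (indeed $\lambda=1$ in \cref{alg:olspe}), the regularization term is already $O(B_{\max}\sqrt d)\,\norm{\phi(s,a)}_{\br{\Lambda_h^k}^{-1}}$, so it suffices to show that the self-normalized sum $\norm{\sum_i \phi_i\epsilon_i}_{\br{\Lambda_h^k}^{-1}}$ is, up to constants and logarithmic factors, at most $B_{\max} d$, \emph{uniformly} over $f \in \mathcal V$ and over all $h,k$.

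This uniform bound is the crux. For a \emph{fixed}, data-independent $f$, the $\{\epsilon_i\}$ form a martingale difference sequence: ordering the rollouts in $\D_h^k$ (which are i.i.d.\ conditioned on $\pi^k$, by \cref{lem:independence_base}) and letting $\mathcal F_{i-1}$ contain the first $i-1$ transitions together with $(s_h^i,a_h^i)$, we have $\E[\epsilon_i \mid \mathcal F_{i-1}] = 0$, $\av{\epsilon_i} \leq 2B_{\max}$, and $\phi_i$ is $\mathcal F_{i-1}$-measurable. The standard self-normalized tail bound for vector-valued martingales then gives, with probability $\geq 1-\delta'$, $\norm{\sum_i \phi_i\epsilon_i}^2_{\br{\Lambda_h^k}^{-1}} \leq 4B_{\max}^2\br{\tfrac d2\log\tfrac{|\D_h^k|+\lambda}{\lambda} + \log\tfrac1{\delta'}}$. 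To lift this to the data-dependent $\widetilde W_{h+1}^k$ actually plugged into $\widehat w_f$, I would take a minimal $\epsilon_{\rm net}$-cover $\mathcal C$ of $\mathcal V$ in $\norm{\cdot}_\infty$, union-bound the fixed-$f$ estimate over $\mathcal C$ and over the $HK$ dataset indices (taking $\delta' = \delta/(HK|\mathcal C|)$), and control the discretization error: for $f' \in \mathcal C$ with $\norm{f-f'}_\infty \leq \epsilon_{\rm net}$, the difference $\norm{\sum_i \phi_i[(f-f')(s_{h+1}^i) - \P_h(f-f')(s_h^i,a_h^i)]}_{\br{\Lambda_h^k}^{-1}}$ is at most $\lambda^{-1/2}\,2\epsilon_{\rm net}\,|\D_h^k|$, using $\norm{\phi_i}\leq 1$ and $\br{\Lambda_h^k}^{-1}\preceq \lambda^{-1}I$.

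The main obstacle, and essentially the only delicate point, is balancing the cover granularity against the large dataset. Because the discretization term scales linearly with $|\D_h^k| = \widetilde O((dHK)^4)$, I would choose $\epsilon_{\rm net}$ inverse-polynomially small, e.g.\ $\epsilon_{\rm net} \asymp B_{\max}/(dHK)^4$, which makes that term $O(B_{\max})$ while keeping $\log|\mathcal C| \leq \log \mathcal N_{\epsilon_{\rm net}}(\mathcal V) = O\br{c_{\rm cov} d^2 \log(d\beta\beta^\P KH/\delta)}$ purely logarithmic (this is precisely where the $(dHK)^4$ dataset size is used). Collecting terms, the self-normalized sum is bounded by $O\br{B_{\max}\sqrt{d\cdot c_{\rm cov}d^2\log(\cdot)}} = O\br{B_{\max} d\sqrt{\log(\cdot)}}$, which together with the $O(B_{\max}\sqrt d)$ regularization term is at most $C B_{\max} d \log(d\beta KH/\delta)$ for a constant $C = C(c_{\rm cov})$. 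Keeping the common multiplier $\norm{\phi(s,a)}_{\br{\Lambda_h^k}^{-1}}$ then yields exactly the claimed bound for the stated choice of $\beta^\P$.
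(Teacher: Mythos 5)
Your proposal is correct and follows essentially the same route as the paper's proof: linearize $\P_h f$ via the linear MDP structure to get $w_f^\star$ with $\norm{w_f^\star}\leq B_{\max}\sqrt d$, split the error into a self-normalized term plus a $\sqrt{\lambda d}\,B_{\max}$ regularization term (the paper's \cref{lem:ols_error}), apply uniform self-normalized concentration over an $\norm{\cdot}_\infty$-net of $\mathcal V$ with net resolution chosen inverse-polynomially in $|\D_h^k|$ (the paper invokes \cref{lem:selfnorm_unif_concentration} rather than re-deriving it), union bound over $h,k$, and finish with Cauchy--Schwarz. The one step you gloss over is that the covering number bound, and hence your concentration bound, contains $\log\beta^\P$ itself, so asserting the final bound is "at most $C B_{\max} d \log(d\beta KH/\delta)$ for the stated choice of $\beta^\P$" is mildly circular; the paper resolves exactly this with the recursion \cref{lem:beta_recursion_choice} (if $x \geq 2z\log(Rz)$ then $z\log(Rx)\leq x$), and the same one-line argument closes your proof.
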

\begin{proof}
	Fix $k, h$, and
	define $w_f^\star$ by
	\begin{align*}
		\P_h f(s, a) = \phi(s, a)\T \int \psi_h(s')f(s'){\rm d}s'
		:= \phi(s, a)\T w_f^\star.
	\end{align*}
	Note that by normalization assumptions in \cref{def:linmdp}, we have that $\norm{w_f^\star} \leq \sqrt d B_{\max}$,
	thus, by \cref{lem:ols_error};
	\begin{align}
		\norm{\widehat w_f - w_f^\star}_{\Lambda_h^k}
		&\leq 
		\norm{\sum_{i \in \D_h^k} \phi(s_h^i, a_h^i)
			\br{ f(s_{h+1}^i) 
			- \phi(s_h^i, a_h^i)\T w_f^\star}
		}_{\br{\Lambda_h^k}^{-1}}
		+ \sqrt {\lambda d} B_{\max}
		\label{eq:backup_error_1}.
	\end{align}
	In addition, by \cref{lem:selfnorm_unif_concentration}, we have that w.p.~$\geq 1-p$;
	\begin{align*}
		&\norm{\sum_{i \in \D_h^k} \phi(s_h^i, a_h^i)
			\br{ f(s_{h+1}^i) 
			- \phi(s_h^i, a_h^i)\T w_f^\star}
		}_{\br{\Lambda_h^k}^{-1}}^2
		\\
		&\leq 4 B_{\max}^2\br{
			\frac{d}{2} \log\br{\frac{|\D_h^k| + \lambda}{\lambda}} 
			+ \log \frac{\mathcal N_\epscov(\mathcal V)}{p}
		}	
		+ \frac{8 |\D_h^k|^2 \epsilon^2}{\lambda},
		\\
		&\leq 2 B_{\max}^2 d \log\br{\frac{|\D_h^k| + \lambda}{\lambda}} 
		+ 4 c_{\rm cov} B_{\max}^2d^2\log\br{\frac{d \beta \beta^\P K}{ \epscov p}}
		+ \frac{8 |\D_h^k|^2 \epsilon^2}{\lambda}
		\\
		&\leq c (\epscov |\D_h^k| B_{\max} d)^2\log\br{\frac{d \beta \beta^\P K}{ \epscov p}},
	\end{align*}
	for some constant $c \geq 1$ that depends only on $c_{\rm cov}$. 
	Now, using that $|\D_h^k| = \widetilde O((d H K )^4)$,  with an appropriate choice of $\epsilon_{\rm cov}$ and we can further bound the last display by 
	\begin{align*}
		c' (B_{\max} d)^2\log\br{\frac{d \beta \beta^\P K}{p}},
	\end{align*}
	where $c'$ is another constant $ \geq 1$. 
	Combining this with \cref{eq:backup_error_1}, we get that w.p. $1-p$;
	\begin{align*}
		\norm{\widehat w_f - w_f^\star}_{\Lambda_h^k}
		&\leq 
		2 c'  B_{\max} d \sqrt {\log\br{\frac{d \beta \beta^\P K}{  p}}
		}
		.
	\end{align*}
	By the union bound over $k, h$, choosing $\delta = p/(KH)$, we have that w.p.~$1-\delta$, it holds that for all $k, h$;
	\begin{align*}
		\norm{\widehat w_f - w_f^\star}_{\Lambda_h^k}
		&\leq 
		4 c' B_{\max} d \sqrt {\log\br{
			\frac{d \beta \beta^\P L K H}{ \delta}
		}}
		.
	\end{align*}
	Now, by \cref{lem:beta_recursion_choice}, setting
	\begin{align*}
		\beta^\P = 8 c' B_{\max} d \log \br{
			\frac{d \beta K H}{ \delta}
		}
		\geq 
		4 c' B_{\max} d \log \br{
			\frac{d \beta  K H}{ \delta}
			\times 4 c' B_{\max} d
		}
	\end{align*}
	ensures that 
	$	\norm{\widehat w_f - w_f^\star}_{\Lambda_h^k}
		\leq
		\beta^\P.
	$
	Finally, observe that for all $s, a$;
	\begin{align*}
		\av{\phi(s, a)\T \widehat w_f - \P_h f(s, a)}
		&=
		\av{\phi(s, a)\T\br{ \widehat w_f - w_f^\star}}
		\\
            &\leq 
		\norm{\phi(s, a)}_{\br{\Lambda_h^k}^{-1}}
		\norm{\widehat w_f - w_f^\star}_{\Lambda_h^k}
		\leq \beta^\P \norm{\phi(s, a)}_{\br{\Lambda_h^k}^{-1}},
	\end{align*}
	which complete the proof.
\end{proof}

\begin{lemma}
\label{lem:beta_recursion_choice}
 	Let $R, z \geq 1$, and $x \geq 2 z \log (R z)$. Then
	$z \log (R x) \leq x$.
\end{lemma}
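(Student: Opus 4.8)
The plan is to exploit the multiplicative structure inside the logarithm so that each hypothesis controls one piece. First I would write $Rx = (Rz)(x/z)$ and expand $z\log(Rx) = z\log(Rz) + z\log(x/z)$, where $\log$ denotes the natural logarithm. The first summand is handled immediately by the hypothesis: since $x \ge 2z\log(Rz)$, we obtain $z\log(Rz) \le x/2$.

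For the second summand it suffices to establish $z\log(x/z) \le x/2$. Substituting $u = x/z > 0$, this is exactly the elementary inequality $\log u \le u/2$, which I would prove by a one-line calculus argument: the function $g(u) = u/2 - \log u$ has derivative $g'(u) = 1/2 - 1/u$, so it is decreasing on $(0,2)$ and increasing on $(2,\infty)$, hence attains its global minimum at $u=2$ with value $g(2) = 1 - \log 2 > 0$; thus $g(u) > 0$ for all $u > 0$, giving $\log u \le u/2$.

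Adding the two bounds yields $z\log(Rx) = z\log(Rz) + z\log(x/z) \le x/2 + x/2 = x$, which is the claim. The hypotheses $R, z \ge 1$ enter only to guarantee $\log(Rz) \ge 0$, so that the lower bound $x \ge 2z\log(Rz)$ is consistent with a positive $x$; the degenerate case $x=0$ (which forces $Rz = 1$) is trivial, since then the left-hand side is nonpositive while the right-hand side is $0$.

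I do not anticipate any genuine obstacle, as this is a standard calibration estimate of the type used to resolve self-referential bounds of the form ``$x \ge c\log x \Rightarrow \log x \le x/c$''. The only point requiring a moment's care is the choice of split $Rx = (Rz)(x/z)$: this is precisely what lets the factor-$2$ slack in the hypothesis absorb both the $\log(Rz)$ term and the residual $\log(x/z)$ term simultaneously, whereas a naive additive grouping would not cleanly decouple the two contributions.
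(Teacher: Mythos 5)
Your proof is correct, and it takes a genuinely different route from the paper's. You decompose $\log(Rx)=\log(Rz)+\log(x/z)$ and bound each summand by $x/(2z)$: the first directly from the hypothesis $x\geq 2z\log(Rz)$, the second via the elementary inequality $\log u \leq u/2$ for all $u>0$. This yields a uniform argument covering every admissible $x$ in one stroke. The paper instead argues in two stages: it verifies the inequality at the boundary point $x=2z\log(Rz)$ by expanding $z\log(Rx)=z\log R+z\log(2z)+z\log\log(Rz)$ and absorbing the terms (which, unwound, amounts to the same inequality $\log u\leq u/2$ applied at $u=Rz$ rather than at $u=x/z$), and then extends to larger $x$ by monotonicity of $x-z\log(Rx)$ in $x$ for $x\geq z$. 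Your approach buys two things: it avoids the case split entirely, and it sidesteps the side condition $x\geq z$ in the paper's monotonicity step, which strictly speaking needs an extra word when $2z\log(Rz)<z$ (there one must also note that the minimum value $z-z\log(Rz)$ of $x - z\log(Rx)$ is nonnegative, which holds because $Rz\leq e$ in that regime). Both arguments must set aside the degenerate point $Rz=1$, $x=0$, where $\log(Rx)$ is not finite, as you do.
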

\begin{proof}
	If $x = 2 z \log (R z)$;
	\begin{align*}
		  z \log (R x)
		&=z\log R + z\log (2 z \log (R z))
		\\
		&=z\log R + z\log (2 z) +z\log  \log (R z)
		\\
		&\leq z \log R + z \log z + z \log (R z)
		\\
		&= 2 z \log R + 2 z \log z 
		\\
		&= x.
	\end{align*}
	For larger values, the result follows by noting 
	$x- z\sqrt {\log (R x)}  $ is monotonically increasing in $x$ for all $x \geq z$.
\end{proof}

The next lemma bounds the norm of the weights $\widehat \vw_h^k$ computed in the OLSPE algorithm. 
We note a tighter bound can be shown, as in \citet{jin2020provably} Lemma B.2, but the simpler argument below is sufficient for our purposes.
\begin{lemma}\label{lem:alg_weights_bound}
	For all $k\in[K], h\in[H]$, assuming running {\rm OLSPE} (\cref{alg:olspe}) with dataset $\D_h^k$, we have $\norm{\widehat \vw_h^k} \leq 2 \beta H |\D_h^k|  / \sqrt{\gamma \lambda}$.
\end{lemma}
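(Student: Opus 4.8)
The plan is to bound $\norm{\widehat \vw_h^k}$ directly from its closed form
$\widehat \vw_h^k = \br{\Lambda_h^k}^{-1} \sum_{i \in \D_h^k} \phi(s_h^i, a_h^i)\, \widetilde W_{h+1}^k(s_{h+1}^i)$,
controlling two ingredients separately: the magnitude of the regression targets $\widetilde W_{h+1}^k(s_{h+1}^i)$, and the operator norm of the regularized inverse covariance $\br{\Lambda_h^k}^{-1}$.

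First I would establish a uniform bound on the targets. By the clipping in the definition of $\widetilde B_{h+1}^k$ (\cref{eq:B_tilde_def}), every value $\widetilde B_{h+1}^k(s, a)$ lies in $[0, B_{h+1}^{\max}]$ with $B_{h+1}^{\max} = 2\beta(H-h)/\sqrt{\gamma} \leq 2\beta H/\sqrt{\gamma}$. Since $\widetilde W_{h+1}^k(s) = \ab{\pi^k(\cdot|s), \widetilde B_{h+1}^k(s, \cdot)}$ (\cref{eq:W_tilde_def}) is a convex combination of these values, it inherits the same bound, so $\av{\widetilde W_{h+1}^k(s_{h+1}^i)} \leq 2\beta H/\sqrt{\gamma}$ for every sample (this holds trivially also at $h = H$, where $\widetilde W_{H+1}^k \equiv 0$ and hence $\widehat \vw_H^k = 0$).

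Second I would apply the triangle inequality to the sum defining $\widehat \vw_h^k$ and factor each summand as $\av{\widetilde W_{h+1}^k(s_{h+1}^i)} \cdot \norm{\br{\Lambda_h^k}^{-1}\phi(s_h^i, a_h^i)}$. Since $\Lambda_h^k = \lambda I + \sum_i \phi(s_h^i,a_h^i)\phi(s_h^i,a_h^i)\T \succeq \lambda I$, we have $\norm{\br{\Lambda_h^k}^{-1}}_{\rm op} \leq 1/\lambda$, and combined with $\norm{\phi(s_h^i, a_h^i)} \leq 1$ this gives $\norm{\br{\Lambda_h^k}^{-1}\phi(s_h^i, a_h^i)} \leq 1/\lambda$. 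Summing over the $|\D_h^k|$ terms yields $\norm{\widehat \vw_h^k} \leq |\D_h^k| \cdot \br{2\beta H/\sqrt{\gamma}} \cdot \br{1/\lambda}$, and the claim follows from $\lambda \geq 1$, which gives $1/\lambda \leq 1/\sqrt{\lambda}$ and hence $\norm{\widehat \vw_h^k} \leq 2\beta H |\D_h^k|/\sqrt{\gamma \lambda}$.

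I do not expect any genuine obstacle here: this is a crude but sufficient bound, as the lemma statement itself remarks (a sharper dependence along the lines of \citet{jin2020provably} Lemma B.2 is available but unnecessary for our purposes). The only points requiring minor care are tracking the value-function range through the backup recursion to pin down the factor $2\beta H/\sqrt\gamma$, and matching the $\lambda$ exponent to the stated $\sqrt{\lambda}$, which is immediate once $\lambda \geq 1$ is invoked.
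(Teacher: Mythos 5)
Your proof is correct and follows essentially the same route as the paper's: bound the regression targets uniformly by $2\beta H/\sqrt\gamma$ via the clipping in \cref{eq:B_tilde_def}, bound the contribution of $\br{\Lambda_h^k}^{-1}$ and the feature sum by $|\D_h^k|/\lambda$, and use $\lambda \geq 1$ (the algorithm fixes $\lambda = 1$) to match the stated $\sqrt{\gamma\lambda}$ denominator. The only cosmetic difference is that you apply the triangle inequality term-by-term whereas the paper factors the norm as $\norm[b]{\br{\Lambda_h^k}^{-1}}\cdot\norm[b]{\sum_i \phi(s_h^i,a_h^i)}$; these are the same estimate.
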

\begin{proof}
	We have;
	\begin{align*}
		\norm{\widehat \vw_h^k}
		&= \norm[B]{\br{\Lambda_h^k}^{-1} \sum_{i\in \D^k_h} \phi(s_h^i, a_h^i)\widetilde W_{h+1}^k(s_{h+1}^i)}
		\\
		&\leq 
		\br{2\beta H/\sqrt \gamma} 
			\norm{\br{\Lambda_h^k}^{-1}} 
			\norm[B]{\sum_{s_h, a_h\in \D^k_h} \phi(s_h, a_h)}
		\leq\frac{ 2\beta H |\D_h^k|}{\sqrt {\gamma \lambda}}
	,
	\end{align*}
	where the first inequality follows from  $\norm[b]{\widetilde W_{h+1}^k}_\infty 
	\leq \norm[b]{\widetilde B_{h+1}^k}_\infty \leq 2\beta H/\sqrt \gamma$, as per \cref{eq:B_tilde_def}.
\end{proof}

\begin{lemma}
\label{lem:ols_error}
	Let $\{\phi_i\}_{i=1}^n \in \R^d, \{y_i\}_{i=1}^n \in \R$, $\lambda \in \R$, and set $\Lambda \eqq \sum_{i=1}^N \phi_i\phi_i\T + \lambda I$, and
	$\widehat w = \Lambda^{-1} \sum_{i=1}^N \phi_i y_i$. Then
	\begin{align*}
		\norm{\widehat w - w^\star}_{\Lambda}
		&\leq
		\norm{\sum_{i=1}^N \phi_i \br{
			y_i - \phi\T w^\star
		}}_{\Lambda^{-1}}
		+ 
		\sqrt \lambda \norm{w^\star}
	\end{align*}
\end{lemma}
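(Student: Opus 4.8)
The plan is to reduce the entire statement to a single triangle inequality in the $\Lambda^{-1}$-weighted norm, after rewriting the estimation error in a convenient form. First I would substitute the definition $\widehat w = \Lambda^{-1}\sum_{i=1}^N \phi_i y_i$ and use the identity $\Lambda w^\star = \sum_{i=1}^N \phi_i\phi_i\T w^\star + \lambda w^\star$ to express the error as
\begin{align*}
	\widehat w - w^\star
	= \Lambda^{-1}\br{\sum_{i=1}^N \phi_i y_i - \Lambda w^\star}
	= \Lambda^{-1}\br{\sum_{i=1}^N \phi_i\br{y_i - \phi_i\T w^\star} - \lambda w^\star}
	\eqq \Lambda^{-1} u,
\end{align*}
where I set $u \eqq \sum_{i=1}^N \phi_i\br{y_i - \phi_i\T w^\star} - \lambda w^\star$. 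This isolates a ``noise'' part (the first sum, capturing the mismatch between $y_i$ and the linear prediction) from a ``regularization'' part ($-\lambda w^\star$).

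Next I would invoke the elementary norm identity $\norm{\Lambda^{-1} u}_\Lambda = \norm{u}_{\Lambda^{-1}}$, which holds because $(\Lambda^{-1}u)\T \Lambda (\Lambda^{-1}u) = u\T \Lambda^{-1} u$ (here I use that $\Lambda$, and hence $\Lambda^{-1}$, is symmetric positive definite, assuming $\lambda > 0$). Applying the triangle inequality for $\norm{\cdot}_{\Lambda^{-1}}$ to the two summands of $u$ then gives
\begin{align*}
	\norm{\widehat w - w^\star}_\Lambda
	= \norm{u}_{\Lambda^{-1}}
	\leq \norm{\sum_{i=1}^N \phi_i\br{y_i - \phi_i\T w^\star}}_{\Lambda^{-1}}
	+ \lambda \norm{w^\star}_{\Lambda^{-1}},
\end{align*}
which already matches the first term in the claimed bound exactly.

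Finally I would bound the regularization term: since $\Lambda = \sum_i \phi_i\phi_i\T + \lambda I \succeq \lambda I$, we have $\Lambda^{-1} \preceq \lambda^{-1} I$, so $\norm{w^\star}_{\Lambda^{-1}}^2 = w^{\star\T}\Lambda^{-1}w^\star \leq \lambda^{-1}\norm{w^\star}^2$, and therefore $\lambda\norm{w^\star}_{\Lambda^{-1}} \leq \sqrt\lambda\,\norm{w^\star}$. Combining this with the previous display completes the proof. I do not expect any genuine obstacle here: the lemma is a standard ridge-regression error decomposition (in the spirit of \citet{jin2020provably}), and each step is a routine linear-algebra manipulation; the only point requiring minor care is the implicit assumption $\lambda>0$ so that $\Lambda$ is invertible and the weighted norms are well defined.
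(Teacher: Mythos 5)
Your proof is correct and follows essentially the same route as the paper's: the same ridge-regression error decomposition $\widehat w - w^\star = \Lambda^{-1}\bigl(\sum_i \phi_i(y_i - \phi_i\T w^\star) - \lambda w^\star\bigr)$, the identity $\norm{\Lambda^{-1}u}_\Lambda = \norm{u}_{\Lambda^{-1}}$ combined with the triangle inequality, and the final bound $\lambda\norm{w^\star}_{\Lambda^{-1}} \leq \sqrt{\lambda}\norm{w^\star}$ via $\Lambda \succeq \lambda I$. Your write-up is in fact slightly more careful than the paper's (which has an inconsequential sign slip in the decomposition and leaves the norm identity implicit), so no changes are needed.
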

\begin{proof}
	We have
	\begin{align*}
		\widehat w - w^\star 
		= \Lambda^{-1} \sum_{i=1}^N \phi_i y_i
		- \Lambda^{-1}\br{\sum_{i=1}^N \phi_i\phi_i\T + \lambda I}w^\star
		= \Lambda^{-1} \sum_{i=1}^N \phi_i \br{
			y_i - \phi\T w^\star
		}
		+ \lambda \Lambda^{-1}w^\star,
	\end{align*}
	which implies 
	\begin{align*}
		\norm{\widehat w - w^\star}_{\Lambda}
		\leq \norm{\sum_{i=1}^N \phi_i \br{
			y_i - \phi\T w^\star
		}}_{\Lambda^{-1}}
		+ 
		\lambda \norm{w^\star}_{\Lambda^{-1}}
		\leq 
		\norm{\sum_{i=1}^N \phi_i \br{
			y_i - \phi\T w^\star
		}}_{\Lambda^{-1}}
		+ 
		\sqrt \lambda \norm{w^\star},
	\end{align*}
	as required.
\end{proof}

\subsection{Uniform concentration for bonus value functions}
In this section we provide lemmas that support uniform concentration over bonus value functions explored by the algorithm. The bound on the covering number of the euclidean ball stated below is standard.

\begin{lemma}[Covering number of Euclidean Ball]
\label{lem:covering_l2_ball}
	For any $\epsilon > 0$, the $\epsilon$-covering of the Euclidean ball in $\R^d$ with radius $R > 0$ is upper bounded by $(1+2R/\epsilon)^d$.
\end{lemma}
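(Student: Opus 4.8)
The plan is to establish the bound via the standard volumetric packing argument. Write $B = \cb{x \in \R^d : \norm{x} \leq R}$ for the closed Euclidean ball of radius $R$. First I would take $\mathcal C \subseteq B$ to be a \emph{maximal} $\epsilon$-separated set, i.e.\ a set whose points are pairwise at distance strictly greater than $\epsilon$ and to which no further point of $B$ can be added without destroying this property. The key observation is that maximality forces $\mathcal C$ to be an $\epsilon$-cover of $B$: if some $x \in B$ were at distance $> \epsilon$ from every point of $\mathcal C$, then $\mathcal C \cup \cb{x}$ would remain $\epsilon$-separated, contradicting maximality. Hence the covering number is at most $|\mathcal C|$, and it suffices to bound the latter.

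The second step is a volume comparison. Since the points of $\mathcal C$ are pairwise more than $\epsilon$ apart, the open balls of radius $\epsilon/2$ centered at the points of $\mathcal C$ are pairwise disjoint (by the triangle inequality, a common point would force two centers to be within $\epsilon$). Moreover, again by the triangle inequality, each such ball is contained in the enlarged ball of radius $R + \epsilon/2$ about the origin. Writing $c_d r^d$ for the volume of a Euclidean ball of radius $r$ in $\R^d$ (where $c_d$ is the dimension-dependent constant), disjointness together with containment yields
\begin{align*}
    |\mathcal C| \cdot c_d (\epsilon/2)^d \leq c_d (R + \epsilon/2)^d.
\end{align*}
Dividing through by $c_d (\epsilon/2)^d$ gives
\begin{align*}
    |\mathcal C| \leq \br{\frac{R + \epsilon/2}{\epsilon/2}}^d = \br{1 + \frac{2R}{\epsilon}}^d,
\end{align*}
which is exactly the claimed bound.

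There is essentially no real obstacle here; this is a routine covering-number estimate. The only points requiring minor care are the two geometric containments—disjointness of the $\epsilon/2$-balls (from the separation of $\mathcal C$) and their containment in the $(R+\epsilon/2)$-ball (from the bound $\norm{x} \le R$ on centers)—both of which are immediate consequences of the triangle inequality. Note also that the dimension constant $c_d$ cancels in the final ratio and so never needs to be computed explicitly.
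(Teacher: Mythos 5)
Your proof is correct. The paper itself gives no proof of this lemma---it simply states the bound as standard---and your argument (a maximal $\epsilon$-separated set is automatically an $\epsilon$-cover, followed by the volume comparison between the disjoint $\epsilon/2$-balls and the enlarged ball of radius $R+\epsilon/2$) is exactly the canonical proof of that standard result, so there is nothing further to reconcile.
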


The next lemma follows from (relatively standard) arguments that are essentially the same as those of Lemma D.6 in \citet{jin2020provably}.
\begin{lemma}
\label{lem:covering_bonus_value_functions}
	Let $\mathcal N_\epsilon(\F)$ denote the $\norm{\cdot}_\infty$ covering number of a function class $\F$.
	For some universal constant $c > 0$, we have
	\begin{align*}
		\log \mathcal N_\epsilon (\mathcal W(\beta, \lambda_{\Sigma^+}, \beta^\P, \lambda_\Lambda,
		L, B_{\max}, \pi)) \leq 
		c d^2 \log\br{\frac{d \beta^\P\beta \lambda_{\Sigma^+} L}{\lambda_\Lambda  \epsilon}},
	\end{align*}
    for the function class $\mathcal W$ as defined in \cref{eq:W_function_class_def}.
\end{lemma}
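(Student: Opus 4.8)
The plan is to bound the $\norm{\cdot}_\infty$-covering number of the function class $\mathcal{W}$ defined in \cref{eq:W_function_class_def} by reducing it to covering the finite-dimensional parameter space that indexes these functions. Each $W \in \mathcal{W}$ is of the form $W(s) = \ab{\pi(\cdot|s), B(s, \cdot)}$ where $B$ is parametrized by the tuple $(\Sigma^+, \Lambda, w)$ subject to $\lambda_{\max}(\Sigma^+) \le \lambda_{\Sigma^+}$, $\lambda_{\min}(\Lambda) \ge \lambda_\Lambda$, and $\norm{w} \le L$ (with $\beta, \beta^\P, B_{\max}, \pi$ fixed). Since averaging over $\pi(\cdot|s)$ is a contraction in $\norm{\cdot}_\infty$, it suffices to cover the class of $B$ functions, which in turn reduces to covering the parameter tuples.

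First I would show that $W$ (equivalently $B$) is Lipschitz in its parameters with respect to appropriate matrix/vector norms. The three parametric pieces are: the linear term $\phi(s,a)\T w$, Lipschitz in $w$ with constant $\norm{\phi} \le 1$; the weighted-norm bonus terms $\norm{\phi(s,a)}_{\Sigma^+}$ and the $\pi$-averaged version, which after the standard inequality $\bigl|\norm{\phi}_{A_1} - \norm{\phi}_{A_2}\bigr| \le \sqrt{\norm{A_1 - A_2}_{\rm op}}\,\norm{\phi}$ (using $\norm{\phi}\le 1$) are controlled by $\sqrt{\norm{\Sigma_1^+ - \Sigma_2^+}}$ scaled by $\beta$; and the dynamics bonus $\beta^\P \norm{\phi(s,a)}_{\Lambda^{-1}}$, where I would first pass from $\Lambda$ to $\Lambda^{-1}$ using that on the set $\lambda_{\min}(\Lambda) \ge \lambda_\Lambda$ the map $\Lambda \mapsto \Lambda^{-1}$ is Lipschitz with constant $1/\lambda_\Lambda^2$, then apply the same weighted-norm inequality. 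The clip operation is $1$-Lipschitz and does not affect the bound. Collecting these, a change of $(\Sigma_1^+, \Lambda_1, w_1) \to (\Sigma_2^+, \Lambda_2, w_2)$ changes $B$ (hence $W$) in $\norm{\cdot}_\infty$ by at most roughly $\beta \sqrt{\norm{\Sigma_1^+ - \Sigma_2^+}} + \beta^\P \sqrt{\norm{\Lambda_1^{-1} - \Lambda_2^{-1}}} + \norm{w_1 - w_2}$.

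Next I would build an explicit $\epsilon$-net by covering each parameter block separately. For $w$ in a Euclidean ball of radius $L$, apply \cref{lem:covering_l2_ball} to get a net of size $(1 + 2L/\epsilon_w)^d$. For the matrices $\Sigma^+$ and $\Lambda$ (symmetric $d \times d$, hence $O(d^2)$-dimensional after vectorizing), I would cover the relevant operator-norm balls — of radius $\lambda_{\Sigma^+}$ and $1/\lambda_\Lambda$ respectively for the inverse — again via a volumetric/Euclidean-ball argument, yielding nets of size $(1 + c \lambda_{\Sigma^+}/\epsilon_\Sigma)^{d^2}$ and similarly for $\Lambda^{-1}$. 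To make the quadratic-in-$B$ perturbation come out to $\epsilon$, I would set each block's resolution so that after the square-root Lipschitz dependence the total error is $\le \epsilon$; concretely, choosing $\epsilon_\Sigma \sim (\epsilon/\beta)^2$ and $\epsilon_\Lambda \sim (\epsilon/\beta^\P)^2$ inside the covering-number estimates. Taking the product of the three nets and summing the logs gives $\log \mathcal{N}_\epsilon(\mathcal{W}) \lesssim d^2 \log(\cdot)$, where the argument of the log aggregates $\beta, \beta^\P, \lambda_{\Sigma^+}, L$, and $1/\lambda_\Lambda$; the $d^2$ term dominates since the matrix nets have the largest exponent.

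The main obstacle I anticipate is the careful bookkeeping of the square-root dependence of the weighted-norm terms on the matrix parameters, since this forces the matrix net resolutions to scale quadratically with $\epsilon$ and introduces the $\beta^2, (\beta^\P)^2$ factors inside the logarithm; getting these constants to land so that the final bound reads cleanly as $c d^2 \log(d \beta^\P \beta \lambda_{\Sigma^+} L / (\lambda_\Lambda \epsilon))$ requires absorbing lower-order terms and polynomial factors into the universal constant and the logarithm. This is exactly the type of argument carried out in \citet{jin2020provably} Lemma D.6, and I would follow that template closely, the only genuinely new ingredient being the additional $\Lambda^{-1}$ dynamics-bonus block, which is handled by the same covering machinery once the $\Lambda \mapsto \Lambda^{-1}$ Lipschitz step is in place.
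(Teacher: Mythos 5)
Your proposal is correct and follows essentially the same route as the paper's proof: reduce $\mathcal W$ to $\mathcal B$ via the contraction under averaging over $\pi$, drop the clipping, establish Lipschitz dependence on the parameters with the square-root inequality $\av{\norm{\phi}_{A_1} - \norm{\phi}_{A_2}} \leq \sqrt{\norm{A_1 - A_2}}$, and take a product of Euclidean-ball nets over the vectorized matrix blocks (dimension $d^2$) and the weight vector. The only cosmetic difference is that the paper absorbs $\beta$ and $\beta^\P$ into the matrix parameters (setting $E = \beta^2 \Sigma^+$, $A = (\beta^\P)^2 \Lambda^{-1}$, which also makes your $\Lambda \mapsto \Lambda^{-1}$ Lipschitz step unnecessary since one covers the inverse directly), whereas you keep the scalars outside and compensate with quadratically finer net resolutions --- the two are equivalent up to the universal constant.
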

\begin{proof}
	First, we remove clipping (that can only decrease the covering number),
	and reparameterize the $\mathcal B$ function class \cref{eq:B_function_class_def} with $A=\br{\beta^{\P}}^2\Lambda^{-1}$ and $E = \beta^2 \Sigma^+$, to consider functions of the form
	\begin{align*}
		B(s, a; E, A, w)
		= 
		\norm{\phi(s, a)}_{E}
			+ \sum_a \pi(a'|s) 
			\norm{\phi(s, a')}_{ E}
		+\phi(s, a)\T w + \norm{\phi(s, a)}_{A},
	\end{align*}
	with parameters $\norm{w}\leq L, \norm{A} \leq \br{\beta^\P}^2 \lambda_\Lambda^{-1}$, and $\norm{E} \leq \beta^2 \lambda_{\Sigma^+}$.
	Recall that $\norm{\phi(s, a)} \leq 1$, and observe,
	\begin{align*}
		&\av{ 
			B(s, a; E_1, A_1, w_1)  
			-
			B(s, a; E_2, A_2, w_2)
		}
		\\
		&\leq 
		\av{
			\sqrt{\phi(s, a)\T E_1 \phi(s, a)}
			-
			\sqrt{\phi(s, a)\T E_2 \phi(s, a)}
		}
		\\ &\quad + 
		\sum_{a'} \pi(a'|s) \av{
			\sqrt{\phi(s, a')\T E_1 \phi(s, a')}
			-
			\sqrt{\phi(s, a')\T E_2 \phi(s, a')}
		}
		\\
		&\quad + 
		\norm{\phi(s, a)} \norm{w_1 - w_2}
		+
		\av{
			\sqrt{\phi(s, a)\T A_1 \phi(s, a)}
			-
			\sqrt{\phi(s, a)\T A_2 \phi(s, a)}
		}
		\\
		&\leq 
		\sqrt{\av{\phi(s, a)\T (E_1 - E_2 )\phi(s, a)}}
		+ 
		\sum_{a'} \pi(a'|s) 
			\sqrt{\av{\phi(s, a')\T (E_1 - E_2 )\phi(s, a')}}
		\\
		&\quad + 
		\norm{w_1 - w_2}
		+
		\sqrt{\av{\phi(s, a)\T \br{A_1 - A_2 }\phi(s, a)}}
		\\
		&\leq 
		2\sqrt{\norm{E_1 - E_2}}
		+ \norm{w_1 - w_2}
		+ \sqrt{\norm{A_1 - A_2}}
		\\
		&\leq 
		2\sqrt{\norm{E_1 - E_2}_F}
		+ \norm{w_1 - w_2}
		+ \sqrt{\norm{A_1 - A_2}_F}
	\end{align*}
	Now, we consider an $\epsilon^2/16$ net over 
	$\cb[b]{E\subset \R^{d \times d} \mid \norm{E}_F \leq \sqrt d \beta^2 \lambda_{\Sigma^+}}$,
	an $\epsilon/2$ net over $\cb{w\in \R^d \mid \norm{r}\leq L}$, and 
	an $\epsilon^2/4$ net over $\cb[b]{A \subset \R^{d \times d} \mid \norm{A}_F \leq \sqrt d \br{\beta^\P}^2 \lambda_{\Lambda}^{-1}}$. 
	Noting that for any matrix $M$, $\norm{M}_F \leq \sqrt d \norm{M}$, we have that the product of these three nets provides an $\epsilon$-net over the original parameter space.
	By \cref{lem:covering_l2_ball}, this implies
	\begin{align*}
		\log \mathcal N_\epsilon (\mathcal B)
		\leq 
		d \log(1 + 4L/\epsilon) 
		+ d^2 \log\br{1 + 8\sqrt d \br{\beta^\P}^2 \lambda_{\Lambda}^{-1} \epsilon^{-2}}
		+ d^2 \log\br{1 + 8\sqrt d \beta^2 \lambda_{\Sigma^+} \epsilon^{-2}}.
	\end{align*}
	Finally, noting that $\pi$ is a parameter that is held fixed, and that $W(s)$ just averages over values of $B(s, \cdot)$, we have $\log \mathcal N_\epsilon (\mathcal W) \leq \log \mathcal N_\epsilon (\mathcal B)$, and the result follows.
\end{proof}

The next lemma is brought as is from \citet{jin2020provably}, except from slight adaptation of notation.
We remark that due to the blocking structure / simulator in our algorithms, we could in fact use a similar weaker version of this lemma suitable for random design least squares regression, rather than the one below which is suitable for a martingale setting.

\begin{lemma}[\textbf{Uniform concentration of self normalized processes}; \citet{jin2020provably} Lemma D.4]
	\label{lem:selfnorm_unif_concentration}
	Let $\cb{x_\tau}$ be a stochastic process on state space $\S$ with corresponding filtration $\cb{\F_\tau}_{\tau=1}^\infty$.
	Let $\cb{\phi_\tau}$ be an $\R^d$-valued stochastic process where $\phi_\tau \in \F_\tau$, and $\norm{\phi_\tau} \leq 1$.
	Further, let $\Lambda_n = \lambda I + \sum_{\tau=1}^n \phi_\tau \phi_\tau\T$.
Then for any $\delta > 0$, with probability at least $1-\delta$, for all $n\geq 1$ and any $V \in \mathcal V$ so that $\norm{V}_\infty \leq D$,
we have;
	\begin{align*}
		\norm{\sum_{\tau=1}^n \phi_\tau 
			\br[B]{V(x_\tau) - \E \sbr{ V(x_\tau)|\F_{\tau-1} }}}_{\Lambda_n^{-1}}^2
		\leq 4 D^2\br{
			\frac{d}{2} \log\br{\frac{n + \lambda}{\lambda}} 
			+ \log \frac{\mathcal N_\epsilon(\mathcal V)}{\delta}
		}	
		+ \frac{8 n^2 \epsilon^2}{\lambda},
	\end{align*}
	where $\mathcal N_\epsilon(\mathcal V) $ is $\norm{\cdot}_\infty$ covering number of $\mathcal V$.
\end{lemma}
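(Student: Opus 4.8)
The plan is to reduce the uniform statement over $\mathcal V$ to a fixed-function self-normalized martingale bound, and then pay a union-bound plus discretization cost through an $\epsilon$-cover of $\mathcal V$. First I would fix a single $V\in\mathcal V$ and consider the scalar sequence $\eta_\tau \eqq V(x_\tau)-\E\sbr{V(x_\tau)\mid \F_{\tau-1}}$. Since $\phi_\tau$ and $x_\tau$ are $\F_\tau$-measurable, $\eta_\tau$ is $\F_\tau$-measurable and mean-zero conditioned on $\F_{\tau-1}$; moreover $\norm{V}_\infty\leq D$ forces $\eta_\tau$ to lie in an interval of length $2D$, so by Hoeffding's lemma it is conditionally $D$-subgaussian. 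Applying the standard self-normalized concentration bound for vector-valued martingales to $\sum_\tau \phi_\tau \eta_\tau$ then yields, with probability at least $1-\delta'$ and simultaneously for all $n\geq 1$,
\begin{align*}
\norm{\sum_{\tau=1}^n \phi_\tau \eta_\tau}_{\Lambda_n^{-1}}^2
\leq 2D^2 \log\frac{\det(\Lambda_n)^{1/2}\det(\lambda I)^{-1/2}}{\delta'}
= D^2 \log\frac{\det(\Lambda_n)}{\det(\lambda I)} + 2D^2\log\frac1{\delta'}.
\end{align*}

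Next I would bound the determinant factor deterministically. As $\norm{\phi_\tau}\leq 1$, every eigenvalue of $\Lambda_n=\lambda I+\sum_{\tau\leq n}\phi_\tau\phi_\tau\T$ lies in $\sbr{\lambda,\lambda+n}$, hence $\det(\Lambda_n)\leq (\lambda+n)^d$ while $\det(\lambda I)=\lambda^d$, giving $D^2\log\frac{\det(\Lambda_n)}{\det(\lambda I)}\leq 2D^2\cdot\frac d2\log\frac{n+\lambda}{\lambda}$. Thus the fixed-$V$ bound reads $2D^2\br{\frac d2\log\frac{n+\lambda}{\lambda}+\log\frac1{\delta'}}$.

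To obtain uniformity over all of $\mathcal V$, I would fix---before observing any data---an $\epsilon$-cover $\mathcal C$ of $\mathcal V$ in $\norm{\cdot}_\infty$ with $|\mathcal C|=\mathcal N_\epsilon(\mathcal V)$, invoke the fixed-function bound on each $\widetilde V\in\mathcal C$ with $\delta'=\delta/\mathcal N_\epsilon(\mathcal V)$, and union bound, so that the event holds simultaneously for every net element with $\log(1/\delta')$ replaced by $\log(\mathcal N_\epsilon(\mathcal V)/\delta)$. For an arbitrary $V\in\mathcal V$, choose $\widetilde V\in\mathcal C$ with $\norm{V-\widetilde V}_\infty\leq \epsilon$ and write $\eta_\tau(V)=\eta_\tau(\widetilde V)+r_\tau$, where $r_\tau\eqq (V-\widetilde V)(x_\tau)-\E\sbr{(V-\widetilde V)(x_\tau)\mid\F_{\tau-1}}$ satisfies $\av{r_\tau}\leq 2\epsilon$. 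Then $\br{a+b}^2\leq 2a^2+2b^2$ gives
\begin{align*}
\norm{\sum_\tau \phi_\tau \eta_\tau(V)}_{\Lambda_n^{-1}}^2
\leq 2\norm{\sum_\tau \phi_\tau \eta_\tau(\widetilde V)}_{\Lambda_n^{-1}}^2
+ 2\norm{\sum_\tau \phi_\tau r_\tau}_{\Lambda_n^{-1}}^2,
\end{align*}
and the residual is controlled crudely through $\norm{\Lambda_n^{-1}}\leq 1/\lambda$ and $\norm{\phi_\tau}\leq 1$, namely $\norm{\sum_\tau \phi_\tau r_\tau}_{\Lambda_n^{-1}}^2\leq \frac1\lambda\br{\sum_\tau \av{r_\tau}}^2\leq \frac{4n^2\epsilon^2}{\lambda}$. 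Doubling both pieces recovers exactly $4D^2\br{\frac d2\log\frac{n+\lambda}{\lambda}+\log\frac{\mathcal N_\epsilon(\mathcal V)}{\delta}}+\frac{8n^2\epsilon^2}{\lambda}$.

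The main obstacle is precisely the reason a direct union bound over $\mathcal V$ fails: the self-normalized inequality is valid only for a function fixed in advance of the data, whereas in the application $V$ (and hence its nearest approximant) is selected adaptively, after the realization of $\cb{x_\tau}$. The crux is therefore to fix the cover $\mathcal C$ deterministically so the union bound is legitimate, and then to absorb the adaptive gap $V-\widetilde V$ into the deterministic term $\frac{8n^2\epsilon^2}{\lambda}$ via the operator-norm bound on $\Lambda_n^{-1}$. The factor $n^2$ there is unavoidable with this crude estimate because the residual bound discards all martingale cancellation; this is harmless in the intended use, since $\epsilon$ is taken polynomially small in $n$ so that $n^2\epsilon^2/\lambda$ is negligible.
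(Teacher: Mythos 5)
Your proof is correct and follows essentially the canonical argument: the paper itself does not prove this lemma (it is imported verbatim from \citet{jin2020provably}, Lemma D.4), and the proof there proceeds exactly as you do --- a fixed-function self-normalized martingale bound combined with the determinant--trace estimate $\det(\Lambda_n)\leq (\lambda+n)^d$, a union bound over a fixed $\epsilon$-net at confidence $\delta/\mathcal N_\epsilon(\mathcal V)$, and a crude $\norm{\Lambda_n^{-1}}\leq 1/\lambda$ absorption of the discretization residual into the $8n^2\epsilon^2/\lambda$ term. The one point to be careful about is measurability: invoking the self-normalized bound requires $\phi_\tau$ to be $\F_{\tau-1}$-measurable (predictable), which is how \citet{jin2020provably} state it, whereas the restatement in this paper writes $\phi_\tau \in \F_\tau$; under the latter reading the martingale structure (and the lemma as literally stated) would fail, so your appeal to the standard bound tacitly --- and correctly --- relies on the original convention.
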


\section{Matrix Geometric Resampling Lemma Proof}
\label{sec:proofs:mgr}
As mentioned, our \cref{alg:mgr} is similar to that of \citet{luo2021policy}, which itself is the original proposed by \citet{neu2020efficient} (see also \citealp{neu2021online, neu2020online_v1}), but with averaging over multiple estimators.
We present here a different analysis to obtain tighter bounds in the 2'nd moment term analysis given in \cref{lem:omd_term_bound}.

\begin{proof}[of \cref{lem:mgr}]
	First, note that since $\gamma < 1/2$ and $c=1/2$;
	\begin{align*}
		\norm{\widehat \Sigma^{(n)}_{m, \gamma}} \leq (1-c\gamma)^n
		&\implies 
		\norm{\widehat \Sigma^{+}_{m, \gamma}} \leq c \sum_{n=0}^N (1-c\gamma)^n
		\leq \frac1\gamma
		\\
		&\implies 
		\norm{\widehat \Sigma^{+}_{\gamma}} 
		\leq \frac{1}{ \gamma}.
	\end{align*}
	For the bias claim,
	using independence of samples;
	\begin{align*}
		\E \widehat \Sigma^{(n)}_{m, \gamma}
		&= \prod_{i=1}^n(I - c \E\sbr{\gamma I + \phi_{m, i} \phi_{m, i}\T} )
		= \prod_{i=1}^n(I - c \Sigma_\gamma )
		= (I - c \Sigma_\gamma )^n
		\\
		\implies
		\E \widehat \Sigma^{+}_{m, \gamma} 
		&= c I + c \sum_{n=1}^N (I - c \Sigma_\gamma )^n
		= c \sum_{n=0}^N (I - c \Sigma_\gamma )^n
		,
	\end{align*}
	hence, 
	\begin{align*}
		\E \widehat \Sigma^{+}_\gamma 
		= c \sum_{n=0}^N (I - c \Sigma_\gamma )^n
		= \Sigma_\gamma^{-1} 
		- \sum_{n=N+1}^\infty (I - c \Sigma_\gamma )^n,
	\end{align*}
	where we use that $\gamma < 1/2$ and $c=1/2$ imply all eigenvalues of $I - c \Sigma_\gamma$ are in $(0, 1)$, and $A^{-1} = \sum_{n=0}^\infty (I-A)^n$ for any invertible matrix $A$ with all eigenvalues $\in (0, 1)$.
	Now,
	\begin{align*}
		\norm{\E\sbr{\widehat\Sigma_{\gamma}^+} - \Sigma_{\gamma}^{-1} }_{\rm op}
		\leq 
		\norm{(I - c \Sigma_{\gamma})^{N+1}}_{\rm op}
		\norm{\Sigma_{h\gamma}^{-1}}_{\rm op}
		\leq 
		\br{1-c \gamma}^N\frac{1}{\gamma}
		\leq 
		e^{-c\gamma N}\frac1\gamma
		=\epsilon,
	\end{align*}
	where in the last step we substitute $c=1/2$ and $N= \frac{2}{\gamma}\log\frac{1}{\gamma \epsilon}$.
	
	Now for the last claim,
	note that for any $m$, $\widehat \Sigma^{+}_{m, \gamma}$ is a sum of positive definite matrices, with the first term being $c I$, 
	thus $\lambda_{\min}\br{\widehat \Sigma^{+}_m} \geq 1/2$.
	In addition, by \cref{lem:beta_recursion_choice},
	\begin{align*}
		M = \frac{48 d}{\gamma \sigma}\log\frac{72 d}{\gamma^2 \sigma}
		\geq 
		\frac{12 d}{\gamma (\sigma/2)}\log \frac{3M}{\gamma}
		\implies
		\sigma/2 \geq
		\frac{12 d}{\gamma M}\log \frac{3M}{\gamma}
		,
	\end{align*}
	therefore our assumption that $\sigma \leq 1/4$ verifies the conditions for \cref{lem:Sigma3_bernstein} are met. Thus, we obtain;
	\begin{align*}
	\E\sbr{\widehat \Sigma^+_\gamma \Sigma_\gamma \widehat \Sigma^+_\gamma}
		&\preceq
		2\E \sbr{\widehat \Sigma^+_\gamma}
		+ \br{3\epsilon + \frac{12 d}{\gamma M}\log \frac{3M}{\gamma}
		} I
		,
	\end{align*}
	and by the previous display,
	\begin{align*}
	\E\sbr{\widehat \Sigma^+_\gamma \Sigma_\gamma \widehat \Sigma^+_\gamma}
		&\preceq
		2\E \sbr{\widehat \Sigma^+_\gamma}
		+ \br{3\epsilon + \sigma/2} I
		.
	\end{align*}
	The proof is complete by our assumption that $\epsilon\leq\sigma/6$.
\end{proof}

\begin{lemma}
\label{lem:Sigma3_bernstein}
	Let $0 < \epsilon < 1/16, 0 < \gamma < 1/2$, and
	assume $\widehat \Sigma^+_1, \ldots, \widehat \Sigma^+_M \in \R^{d\times d}$ are $M$ i.i.d.~random matrices and $\Sigma \in \R^{d\times d}$ is a fixed matrix such that 
		$\gamma \preceq \Sigma \preceq I$,  and $\norm[b]{\E\sbr[b]{\widehat \Sigma^+} - \Sigma^{-1}}
		\leq \epsilon$ where
		$\widehat \Sigma^+ \eqq \frac1M \sum_{m=1}^M \widehat \Sigma^+_m $.
		Further, assume that
		$(1/2) I \preceq \widehat \Sigma^+_m \preceq (1/\gamma) I$ almost surely for all $m$, and 
		$\frac{8 d}{\gamma M}\log \frac{3 M}{\gamma} < 1/8$.
		Then, 
	\begin{align*}
		\E\sbr{\widehat \Sigma^+ \Sigma \widehat \Sigma^+}
		&\preceq
		2 \E\sbr{\widehat \Sigma^+}
		+ \br{3\epsilon + \frac{12 d}{\gamma M}\log \frac{3M}{\gamma} 
		} I
	\end{align*}
\end{lemma}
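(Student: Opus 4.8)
The plan is to work with the exact second-moment decomposition of $\widehat\Sigma^+_\gamma$ and to exploit a favorable negative term produced by the near-unbiasedness of the estimator, which turns out to absorb the fluctuation with only the mild requirement $\gamma M \geq 1$. Write $S = \widehat\Sigma^+_\gamma = \frac1M\sum_m \widehat\Sigma^+_m$ and $\bar S = \E[\widehat\Sigma^+_m] = \E[S]$, and set $X_m = \widehat\Sigma^+_m - \bar S$, which are i.i.d.\ and mean zero. Since $\E[X_m\Sigma X_{m'}]=0$ for $m\neq m'$ by independence, one obtains the identity
\[
\E[S\Sigma S] = \bar S\Sigma\bar S + \tfrac1M\,\E[X_1\Sigma X_1].
\]

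First I would handle the signal term $\bar S\Sigma\bar S$. Writing $E = \bar S - \Sigma^{-1}$ with $\norm{E}\leq\epsilon$ and expanding gives the exact identity $\bar S\Sigma\bar S = 2\bar S - \Sigma^{-1} + E\Sigma E$. As $E$ is symmetric and $\Sigma\preceq I$, the remainder obeys $0\preceq E\Sigma E\preceq\epsilon^2 I$, so $\bar S\Sigma\bar S\preceq 2\bar S - \Sigma^{-1} + \epsilon^2 I$. The crucial feature is the explicit $-\Sigma^{-1}$ term: since $\Sigma\preceq I$ forces $\Sigma^{-1}\succeq I$, this is a sizeable negative-semidefinite contribution available to cancel the variance.

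Next, for the variance term I would use $\E[X_1\Sigma X_1]\preceq\E[\widehat\Sigma^+_1\Sigma\widehat\Sigma^+_1]$ (dropping the subtracted PSD matrix $\bar S\Sigma\bar S$), together with the elementary chain $A\Sigma A\preceq\norm{\Sigma}A^2\preceq\norm{\Sigma}\,\norm{A}\,A\preceq\gamma^{-1}A$, valid for PSD $A$ and using $\norm{\Sigma}\leq 1$ and the almost-sure bound $\norm{\widehat\Sigma^+_1}\leq1/\gamma$. This yields $\E[\widehat\Sigma^+_1\Sigma\widehat\Sigma^+_1]\preceq\gamma^{-1}\bar S$, hence $\tfrac1M\E[X_1\Sigma X_1]\preceq\tfrac1{M\gamma}\bar S\preceq\tfrac1{M\gamma}\Sigma^{-1} + \tfrac{\epsilon}{M\gamma}I$. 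Combining the two bounds,
\[
\E[S\Sigma S]\preceq 2\bar S - \bigl(1 - \tfrac1{M\gamma}\bigr)\Sigma^{-1} + \bigl(\epsilon^2 + \tfrac{\epsilon}{M\gamma}\bigr)I.
\]
The hypothesis $\tfrac{8d}{\gamma M}\log\tfrac{3M}\gamma<\tfrac18$ forces $\gamma M>64\log\tfrac{3M}\gamma>1$ (as $3M/\gamma>e$), so $1-\tfrac1{M\gamma}>0$ and the $\Sigma^{-1}$ term is negative-semidefinite and may be discarded; using $\epsilon<1$ and $M\gamma>1$ to bound $\epsilon^2+\tfrac{\epsilon}{M\gamma}\leq 2\epsilon$, we get $\E[\widehat\Sigma^+\Sigma\widehat\Sigma^+]\preceq 2\E[\widehat\Sigma^+] + 3\epsilon I$, which a fortiori gives the claimed bound since we only add the nonnegative $\tfrac{12d}{\gamma M}\log\tfrac{3M}\gamma\,I$.

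The delicate point — and the place I expect the main difficulty — is recognizing that the $-\Sigma^{-1}$ term from the bias expansion is precisely the slack needed to absorb the second-moment fluctuation, so that only $M\gamma\geq1$ is required and no matrix concentration is needed; the naive operator-norm estimate $\tfrac1M\E[X_1\Sigma X_1]\preceq\tfrac1{M\gamma^2}I$ is too lossy whenever $M\gamma^2<1$ and would fail to close the argument. An alternative that reproduces the stated $\tfrac{12d}{\gamma M}\log\tfrac{3M}\gamma$ term more literally is a good-event split: on $G=\{\widehat\Sigma^+\preceq2\Sigma^{-1}\}$ the operator inequality gives $\widehat\Sigma^+\Sigma\widehat\Sigma^+\preceq2\widehat\Sigma^+$, while on $G^c$ one invokes $\widehat\Sigma^+\Sigma\widehat\Sigma^+\preceq\gamma^{-2}I$; a matrix Chernoff bound on $\Pr[G^c]$, whose exponent scales with $M\gamma$, then produces the additive $\gamma^{-2}\Pr[G^c]\,I$ term that the hypothesis controls.
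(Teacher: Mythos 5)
Your proposal is correct, and it proves the lemma by a genuinely different route than the paper. The paper's proof is a high-probability argument: it first establishes the matrix upper-tail bound of \cref{lem:Sigma_bernstein} (proved via an $\epsilon$-net over the unit sphere together with the scalar moment bound of \cref{lem:nameless_concentration_ub}; this is the origin of the $d\log$ factor), giving $\widehat \Sigma^+ \preceq 2\Sigma^+ + \alpha I$ with probability $1-\delta$ for $\alpha = \frac{4d}{\gamma M}\log\frac{3M}{\delta}$; it then uses the lower bound $\widehat\Sigma^+_m \succeq \frac12 I$ to keep $\widehat\Sigma^+ - (2\epsilon+\alpha)I$ positive definite, inverts the Loewner inequality to get $\Sigma \preceq 2\br{\widehat\Sigma^+ - (2\epsilon+\alpha)I}^{-1}$, deduces $\widehat\Sigma^+\Sigma\widehat\Sigma^+ \preceq 2\widehat\Sigma^+ + (2\epsilon+\alpha)\Sigma\widehat\Sigma^+$ on the good event, and charges the failure event through the almost-sure bound $4/\gamma^2$ with $\delta=\gamma/M$ --- which is exactly where the additive $\frac{12d}{\gamma M}\log\frac{3M}{\gamma}$ term in the statement comes from. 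Your proof never leaves expectation: in your notation ($S=\widehat\Sigma^+$, $\bar S=\E[S]$, $X_m=\widehat\Sigma^+_m-\bar S$, $E=\bar S-\Sigma^{-1}$), the exact decomposition $\E[S\Sigma S]=\bar S\Sigma\bar S+\frac1M\E[X_1\Sigma X_1]$ (cross terms vanish by independence), the identity $\bar S\Sigma\bar S = 2\bar S-\Sigma^{-1}+E\Sigma E$ exposing the slack $-\Sigma^{-1}\preceq -I$, and the crude pointwise estimate $\widehat\Sigma^+_1\Sigma\widehat\Sigma^+_1\preceq\gamma^{-1}\widehat\Sigma^+_1\preceq\gamma^{-1}(\Sigma^{-1}+\epsilon I)$ for the variance, whose $\frac1{M\gamma}\Sigma^{-1}$ piece is absorbed by that slack because the hypothesis forces $\gamma M>64$. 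All steps check ($E\Sigma E\preceq\epsilon^2 I$ uses symmetry of $E$ and $\Sigma\preceq I$; $A\Sigma A\preceq\norm{\Sigma}\norm{A}A$ is valid for PSD $A$), and your conclusion $2\E[\widehat\Sigma^+]+2\epsilon I$ is strictly stronger than required. What your route buys: no covering argument or concentration at all, no use of the lower bound $\frac12 I\preceq\widehat\Sigma^+_m$ beyond positive semidefiniteness, a weaker effective hypothesis (only $\gamma M\gtrsim 1$), and the insight that the dimension-dependent additive term in the statement is an artifact of the paper's union bound rather than intrinsic. What the paper's route buys is a high-probability (not merely in-expectation) control of $\widehat\Sigma^+$, which your decomposition cannot produce; but since \cref{lem:mgr} consumes only the in-expectation statement, your proof is a valid drop-in replacement --- indeed your closing ``good-event split'' sketch is essentially the paper's actual proof. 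One shared caveat: both arguments implicitly take the $\widehat\Sigma^+_m$ to be symmetric, which is the natural reading of the Loewner hypotheses in the statement.
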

\begin{proof}
	Denote $\Sigma^+ \eqq \E \sbr{\widehat \Sigma^+}$. 
	By assumption,
	\begin{align*}
		\Sigma^+
		= 
		\Sigma^{-1} 
		+ \br{ \Sigma^+ - \Sigma^{-1}}
		\preceq 
		\Sigma^{-1} + \epsilon I
		,
	\end{align*}
	thus by \cref{lem:Sigma_bernstein},
	\begin{align}
		\widehat \Sigma^+ 
		\preceq 2 \Sigma^+ + \alpha I
		&\preceq
		2 \Sigma^{-1} + (2\epsilon + \alpha) I
		\nonumber
		\\
		\iff 
		\widehat \Sigma^+ - (2\epsilon + \alpha) I
		&\preceq
		2 \Sigma^{-1}
		\label{eq:Sig_minus_sig_ineq}
	\end{align}
	holds with probability $\geq 1-\delta$ and $\alpha \eqq \frac{4 d}{\gamma M}\log \frac{3 M}{\delta}$.
	Now, as long as $\alpha' \eqq 2\epsilon + \alpha < 1/4$, we have that 
	\begin{align*}
		\lambda_{\min}\br{\widehat \Sigma^{+} - \alpha' I}
		\geq
		1/2 - \alpha'
		\geq 1/4,
	\end{align*}
	therefore the matrices on both sides of \cref{eq:Sig_minus_sig_ineq} are positive definite, hence
	\begin{align*}
		\Sigma
		\preceq
		2\br{\widehat \Sigma^{+} - \alpha' I}^{-1}.
	\end{align*}
	This implies that,
	\begin{align*}
		\widehat \Sigma^+ \Sigma \widehat \Sigma^+
		=
		\br{\widehat \Sigma^+ - \alpha' I}  \Sigma \widehat \Sigma^+
		+ \alpha'  \Sigma \widehat \Sigma^+
		\preceq
		2 \widehat \Sigma^+ 
		+ \alpha'  \Sigma \widehat \Sigma^+,
	\end{align*}
	holds w.p.~$\geq 1-\delta$.
	This, and considering that 
	$\norm{\widehat \Sigma^+ \Sigma \widehat \Sigma^+ - 2\Sigma^+ - \alpha' \Sigma \widehat \Sigma^+} \leq \frac{1}{\gamma^2} + \frac2\gamma +\frac1\gamma \leq \frac4{\gamma^2}$,
	implies that for any $\delta > 0$;
	\begin{align*}
		\E\sbr{\widehat \Sigma^+ \Sigma \widehat \Sigma^+}
		&\preceq
		2\E\sbr{\widehat \Sigma^+}
		+ \alpha' \Sigma \E\sbr{ \widehat \Sigma^+}
		+ \frac{4 \delta}{\gamma^2} I
		\\
		&=
		2\Sigma^+
		+ \alpha' \Sigma \Sigma^+
		+ \frac{4 \delta}{\gamma^2} I
		\\
		&=
		2\Sigma^+
		+ \alpha' I + \alpha' \Sigma\br{\Sigma^+ - \Sigma^{-1} }
		+ \frac{4 \delta}{\gamma^2} I
		\\
		&\preceq
		2\Sigma^+
		+ \alpha' I 
		+ \epsilon I
		+ \frac{4 \delta}{\gamma^2} I
		\\
		&\preceq
		2\Sigma^+
		+ \br{3\epsilon + \frac{4 d}{\gamma M}\log \frac{3M}{\delta} 
		+ \frac{4 \delta}{\gamma^2}} I
		,
	\end{align*}
	with the last equality following simply by plugging in the definition of $\alpha'$.
	Choosing $\delta = \gamma/M$, 
	we may now see that by our assumptions,
	\begin{align*}
		\alpha' 
		\eqq 2\epsilon + \alpha 
		= 
		2\epsilon +
		\frac{4 d}{\gamma M}\log \frac{3 M^2}{\gamma}
		\leq 
            \frac{1}{8}
            + \frac{8 d}{\gamma M}\log \frac{3M}{\gamma}
		< 1/4,
	\end{align*}
	which verifies our earlier requirement on $\alpha'$.
	The proof is complete by plugging our choice of $\delta$ into the previous display.
\end{proof}

\begin{lemma}
\label{lem:Sigma_bernstein}
	Assume $\widehat \Sigma^+_1, \ldots, \widehat \Sigma^+_M \in \R^{d\times d}$ are $M$ i.i.d.~random matrices such that $\norm{\widehat \Sigma^+_m}\leq 1/\gamma$ almost surely and $\E \widehat \Sigma^+_m = \Sigma^+$. Then, for $\widehat \Sigma^+=\frac1M \sum_{i=1}^M \widehat \Sigma_m^+$ and $\alpha =\frac{4 d}{\gamma M}\log \frac{3 M}{\delta}$, we have
	\begin{align*}
		\widehat \Sigma^+ \preceq 2   \Sigma^+ + \alpha I
		.
	\end{align*}	
\end{lemma}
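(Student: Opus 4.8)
The plan is to reduce the Loewner-order statement to a one-dimensional tail bound along each fixed direction, prove that bound via a self-bounding Bernstein argument, and then pass from a finite net of directions to the whole sphere. Observe first that $\widehat\Sigma^+ \preceq 2\Sigma^+ + \alpha I$ is equivalent to requiring $u\T \widehat\Sigma^+ u \le 2\,u\T\Sigma^+ u + \alpha$ for every unit vector $u$, i.e.\ $\lambda_{\max}(\widehat\Sigma^+ - 2\Sigma^+) \le \alpha$. For a fixed unit $u$, set $Z_m \eqq u\T \widehat\Sigma^+_m u$. Since each resampling estimator is positive semidefinite with $\norm{\widehat\Sigma^+_m}\le 1/\gamma$, the $Z_m$ are i.i.d.\ in $[0,1/\gamma]$ with mean $\mu_u \eqq u\T\Sigma^+ u \ge 0$, and crucially satisfy the self-bounding variance estimate $\mathrm{Var}(Z_m)\le \E Z_m^2 \le \tfrac1\gamma \E Z_m = \tfrac1\gamma \mu_u$.

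Next I would apply Bernstein's inequality to $\bar Z \eqq \tfrac1M\sum_m Z_m$, using the variance bound $v = \mu_u/\gamma$ and the range bound $b = 1/\gamma$, to get $\Pr[\bar Z - \mu_u \ge s] \le \exp(-\tfrac{\gamma M s^2}{2(\mu_u + s/3)})$. Taking $s = \mu_u + \alpha_0$ turns the deviation event into exactly $u\T\widehat\Sigma^+ u \ge 2\mu_u + \alpha_0$, and since $\mu_u \le s$ the denominator is at most $\tfrac83 s$, so the exponent is at least $\tfrac{3\gamma M}{8}s \ge \tfrac{3\gamma M}{8}\alpha_0$. Hence a single direction fails with probability at most $\exp(-\tfrac38\gamma M \alpha_0)$. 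It is this self-bounding step that produces the multiplicative factor $2$ together with the fast $1/M$ additive rate; a crude variance bound $\mathrm{Var}(Z_m)\le 1/\gamma^2$ would only yield a $1/\sqrt M$ term.

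To upgrade to all directions simultaneously, I would take an $\epsilon$-net $\mathcal N$ of the unit sphere with $\epsilon = \delta/M$, so that by \cref{lem:covering_l2_ball} we have $|\mathcal N| \le (3/\epsilon)^d = (3M/\delta)^d$. A union bound over $\mathcal N$, with $\alpha_0$ chosen so that $\tfrac38\gamma M \alpha_0 \ge d\log(3M/\delta) + \log(1/\delta)$ (i.e.\ $\alpha_0 = \Theta(\tfrac{d}{\gamma M}\log\tfrac{3M}{\delta})$), guarantees with probability $\ge 1-\delta$ that every net point $v$ obeys $v\T(\widehat\Sigma^+ - 2\Sigma^+)v \le \alpha_0$. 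The standard net-to-sphere estimate for the symmetric matrix $A \eqq \widehat\Sigma^+ - 2\Sigma^+$ gives $\lambda_{\max}(A) \le \max_{v\in\mathcal N} v\T A v + 3\epsilon\norm{A}$; and since $\norm{\widehat\Sigma^+}\le 1/\gamma$ (an average of matrices bounded by $1/\gamma$) and $\norm{\Sigma^+} = \norm{\E\widehat\Sigma^+_m} \le 1/\gamma$, we have $\norm{A}\le 3/\gamma$, so the correction is at most $9\epsilon/\gamma = 9\delta/(\gamma M)$. Absorbing this negligible term into the constant yields $\lambda_{\max}(\widehat\Sigma^+ - 2\Sigma^+) \le \alpha$ with $\alpha = \tfrac{4d}{\gamma M}\log\tfrac{3M}{\delta}$, which is the claim.

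The main obstacle I anticipate is precisely the net-to-sphere passage: the conclusion is a Loewner inequality that must hold along all directions at once, whereas Bernstein controls only one fixed direction, so one has to confirm that a fine enough net — whose log-cardinality is the source of the $d$ factor in $\alpha$ — combined with the almost-sure operator-norm bound on $A$ controls the supremum over the sphere without inflating $\alpha$. The supporting subtlety, as noted above, is verifying the self-bounding variance bound, which relies on the estimators being PSD with bounded norm and is what furnishes both the constant $2$ and the $\tfrac1M$ rate rather than a $\tfrac{1}{\sqrt M}$ one.
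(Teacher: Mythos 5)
Your proposal is correct and follows essentially the same route as the paper: a fixed-direction scalar bound of the form $\bar Z \le 2\mu_u + \alpha_0$ (you obtain it from Bernstein with the self-bounding variance estimate $\mathrm{Var}(Z_m)\le \E Z_m/\gamma$, while the paper obtains it from the elementary MGF argument of \cref{lem:nameless_concentration_ub}), followed by a union bound over an $\epsilon$-net of the sphere with $\epsilon\approx 1/M$ and a net-to-sphere passage via the almost-sure operator-norm bound on $\widehat \Sigma^+ - 2\Sigma^+$. The two scalar tail inequalities are interchangeable, so the arguments coincide in structure; note that, exactly like the paper's own proof, yours implicitly uses positive semidefiniteness of the $\widehat \Sigma^+_m$ (so that $Z_m\ge 0$), a hypothesis the lemma statement leaves tacit but which holds in the application.
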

\begin{proof}
	For any fixed $\phi \in \R^d$ with $\norm{\phi}=1$, we have by \cref{lem:nameless_concentration_ub} that w.p. $\geq 1-\delta$:
	\begin{align*}
		\sum_{m=1}^M \phi\T \widehat \Sigma^+_m \phi
		&\leq 2 \sum_{m=1}^M \phi\T \Sigma^+ \phi
		+ \frac{1}{\gamma}\log \frac{1}{\delta}
		\\
		\implies 
		\phi\T \widehat \Sigma^+ \phi
		&\leq 2 \phi\T \Sigma^+ \phi 
		+ \frac{1}{\gamma M}\log \frac{1}{\delta}
		.
	\end{align*}
	Consider now an $\epsilon$-net over the unit sphere in $\R^d$ of size $(1 + 2/\epsilon)^d$, which exists by \cref{lem:covering_l2_ball}. 
	By the union bound we have that w.p.~$1-\delta$, for all $\tilde \phi$ in the net it holds that;
	\begin{align*}
		\tilde \phi\T \widehat \Sigma^+ \tilde \phi
		\leq 2 \tilde \phi\T  \Sigma^+ \tilde \phi 
		+ \frac{d}{\gamma M}\log \frac{3}{\delta \epsilon}
		,
	\end{align*}
	Thus, w.p.~$1-\delta$, for any $\phi \in \R^d, \norm{\phi}= 1$;
	\begin{align*}
		\phi\T \widehat \Sigma^+ \phi
		\leq 2 \phi\T  \Sigma^+ \phi 
		+ \frac{3\epsilon^2}{\gamma}
		+ \frac{d}{\gamma M}\log \frac{3}{\delta \epsilon}
		\leq \frac{4 d}{\gamma M}\log \frac{3 M}{\delta}
		= \alpha,
	\end{align*}
	with the last inequality following from choosing $\epsilon = 1/M$. This implies that
	\begin{align*}
		\forall \phi, \norm{\phi}=1; \quad \phi\T \widehat \Sigma^+ \phi
		&\leq \phi\T \br{2 \Sigma^+ + \alpha I} \phi
		\\
		\implies 
		\forall \phi\in \R^d; \quad \phi\T \widehat \Sigma^+ \phi
		&\leq \phi\T \br{2 \Sigma^+ + \alpha I} \phi,
	\end{align*}
	which completes the proof.
\end{proof}


\begin{lemma}
\label{lem:nameless_concentration_ub}
	Let $\cb{X_i}_{i=1}^N$ be a sequence of i.i.d.~random variables supported on $[0, B]$. Then with probability $\geq 1-\delta$, we have that;
	\begin{align*}
		\sum_{i=1}^N X_i 
		\leq 2\sum_{i=1}^N \E\sbr{X_i}
		+ B \log \frac{1}{\delta}
		.
	\end{align*}
\end{lemma}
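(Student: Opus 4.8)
The plan is to prove this via the standard exponential-moment (Chernoff) method, exploiting boundedness to control the moment generating function of each $X_i$. Write $S \eqq \sum_{i=1}^N X_i$ and $\mu \eqq \sum_{i=1}^N \E\sbr{X_i} = \E\sbr{S}$. For any $\lambda > 0$, Markov's inequality applied to $e^{\lambda S}$ gives $\Pr\sbr{S \geq t} \leq e^{-\lambda t}\,\E\sbr{e^{\lambda S}}$, and by independence of the $X_i$ we factor $\E\sbr{e^{\lambda S}} = \prod_{i=1}^N \E\sbr{e^{\lambda X_i}}$.

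The key step is bounding each factor using that $X_i \in [0, B]$. By convexity of the map $x \mapsto e^{\lambda x}$ on $[0, B]$, we have $e^{\lambda x} \leq 1 + \frac{x}{B}\br{e^{\lambda B} - 1}$ for all $x \in [0, B]$; taking expectations and using $1 + u \leq e^u$ yields $\E\sbr{e^{\lambda X_i}} \leq \exp\br{\frac{\E\sbr{X_i}}{B}\br{e^{\lambda B}-1}}$. Multiplying over $i$ gives $\E\sbr{e^{\lambda S}} \leq \exp\br{\frac{\mu}{B}\br{e^{\lambda B}-1}}$, and hence $\Pr\sbr{S \geq t} \leq \exp\br{-\lambda t + \frac{\mu}{B}\br{e^{\lambda B}-1}}$.

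To conclude, I would set $\lambda = 1/B$, so that $e^{\lambda B} - 1 = e - 1$, giving exponent $-t/B + \mu(e-1)/B$. Substituting the target value $t = 2\mu + B\log(1/\delta)$, the exponent becomes $\frac{\mu}{B}(e - 1 - 2) - \log(1/\delta) = \frac{\mu}{B}(e-3) - \log(1/\delta)$. Since $e - 3 < 0$ and $\mu \geq 0$ (each $X_i \geq 0$), the first term is nonpositive, so the exponent is at most $-\log(1/\delta) = \log \delta$, i.e.\ $\Pr\sbr{S \geq 2\mu + B\log(1/\delta)} \leq \delta$, which is exactly the claim.

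There is no genuine obstacle here — this is a routine multiplicative Chernoff bound. The only point requiring a moment of care is the choice of $\lambda$: one needs $e^{\lambda B} - 1$ to sit strictly below the multiplicative factor $2$ appearing in the statement. The convenient choice $\lambda = 1/B$ works precisely because $e - 1 \approx 1.718 < 2$, leaving the strictly negative slack $e - 3$ on the mean term; any $\lambda$ with $e^{\lambda B} - 1 < 2$ would serve equally well, with the leftover slack absorbing the mean contribution and the deviation term $B\log(1/\delta)$ supplying the high-probability guarantee.
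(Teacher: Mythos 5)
Your proof is correct and is essentially the paper's own argument: both run the exponential-moment (Chernoff) method at the scale $\lambda = 1/B$ (equivalently, normalizing $Z_i = X_i/B$), factor the moment generating function by independence, and finish with Markov's inequality, exploiting that the coefficient multiplying the mean term stays strictly below $2$. The only cosmetic difference is how the per-variable MGF is bounded --- the paper uses $e^{z} \leq 1 + z + z^2 \leq 1 + 2z$ for $z \in [0,1]$, while you use the convexity secant bound $e^{z} \leq 1 + (e-1)z$ --- and these are interchangeable.
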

\begin{proof}
	Let $Z_i \eqq X_i/B, \mu_i \eqq \E[Z_i]$, and observe;
	\begin{align*}
		\E \sbr{e^{Z_i}} 
		\leq \E\sbr{1 + Z_i + Z_i^2}
		\leq 1 + 2\mu_i
		\leq e^{2\mu_i}
		,
	\end{align*}
	where the first inequality follows from $e^z \leq 1 + z + z^2$ for $z \in [0, 1]$, and the last from $1 + z \leq e^z$.
	By independence of the $Z_i$, this implies that 
	\begin{align*}
	\E\sbr{e^{\sum_{i=1}^N Z_i - 2\mu_i}} 
	= \prod_{i=1}^N \E\sbr{e^{Z_i - 2\mu_i}} 
	\leq 1,
	\end{align*}
	and therefore by Markov's inequality,
	\begin{align*}
		\Pr\br[B]{ \sum_{i=1}^N Z_i - 2\mu_i \geq w}
		= \Pr\br{ e^{\sum_{i=1}^N Z_i - 2\mu_i} \geq e^{w}}
		\leq \E\sbr{e^{\sum_{i=1}^N Z_i - 2\mu_i}}e^{-w}
		\leq e^{-w}.
	\end{align*}
	Setting $\delta \eqq e^{-w}$, we get that w.p.$\geq 1-\delta$,
	$\sum_{i=1}^N Z_i \leq 2\sum_{i=1}^N \mu_i + \log\frac1\delta$.
	The result follows by substituting $Z_i$ for $X_i/B$ and rearranging.
\end{proof}

\section{Additional Lemmas}

\begin{lemma}[See Lemma D.4 in \citet{rosenberg2020near}]
\label{lem:nameless_concentration}
	Let $(\F_i)_{i=1}^\infty$ be a filtration, and let $(X_i)_{i=1}^\infty$ be a sequence of random variables that are $\F_i$-measurable, and supported on $[0, B]$. Then with probability $\geq 1-\delta$, we have that for any $N \geq 1$;
	\begin{align*}
		\sum_{i=1}^N \E\sbr{X_i \mid \F_{i-1}}
		\leq 2 \sum_{i=1}^N X_i + 4 B \log \frac{2 K}{\delta}.
	\end{align*}
\end{lemma}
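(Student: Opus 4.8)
The plan is to reduce the claim to a statement about a normalized exponential supermartingale. First I would set $Z_i \eqq X_i/B \in [0,1]$ and $\mu_i \eqq \E\sbr{Z_i \mid \F_{i-1}}$, so that after dividing through by $B$ it suffices to prove $\sum_{i=1}^N \mu_i \leq 2\sum_{i=1}^N Z_i + 4\log\frac{2K}{\delta}$ uniformly in $N$. The elementary ingredient is a one-sided MGF bound: since $z \mapsto e^{-\lambda z}$ is convex, it lies below its chord on $[0,1]$, giving $e^{-\lambda z} \leq 1 - z(1-e^{-\lambda})$ for $z\in[0,1]$ and any $\lambda > 0$. Taking conditional expectations and using $1 - x \leq e^{-x}$ yields
\begin{align*}
\E\sbr{e^{-\lambda Z_i} \mid \F_{i-1}} \leq 1 - (1-e^{-\lambda})\mu_i \leq \exp\br{-(1-e^{-\lambda})\mu_i}.
\end{align*}

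Next I would define $Y_N \eqq \exp\br{\sum_{i=1}^N \br{(1-e^{-\lambda})\mu_i - \lambda Z_i}}$ with $Y_0 = 1$. Because each $\mu_i$ is $\F_{i-1}$-measurable while $Z_i$ is $\F_i$-measurable, the MGF bound above gives $\E\sbr{Y_N \mid \F_{N-1}} \leq Y_{N-1}$, so $(Y_N)$ is a nonnegative supermartingale. Ville's maximal inequality then gives $\Pr\br{\sup_N Y_N \geq 1/\delta} \leq \delta$, and taking logarithms shows that with probability at least $1-\delta$, simultaneously for all $N$,
\begin{align*}
(1 - e^{-\lambda})\sum_{i=1}^N \mu_i \leq \lambda\sum_{i=1}^N Z_i + \log\frac{1}{\delta}.
\end{align*}

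Finally I would choose $\lambda = \log 2$, so that $1-e^{-\lambda} = 1/2$ and $\lambda/(1-e^{-\lambda}) = 2\log 2 \leq 2$; rearranging and multiplying back by $B$ gives $\sum_i \E\sbr{X_i \mid \F_{i-1}} \leq 2\sum_i X_i + 2B\log\frac{1}{\delta}$, which is uniform in $N$ and strictly stronger than the stated bound (recalling $K \geq 1$). If one prefers to reproduce the $\log\frac{2K}{\delta}$ dependence literally without invoking a maximal inequality, a plain Markov bound on $Y_N$ at each fixed $N$ together with a union bound over $N \in [K]$ suffices; this is presumably the route taken by the cited source. The only real obstacle is calibrating $\lambda$ so that the coefficient multiplying $\sum_i X_i$ does not exceed $2$ while keeping the confidence term $O\br{\log\frac{1}{\delta}}$ — both are secured by the single choice $\lambda = \log 2$, after which the argument is routine.
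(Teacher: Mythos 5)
Your proof is correct, but there is no proof in the paper to compare it against: \cref{lem:nameless_concentration} is imported as a black box (Lemma D.4 of \citet{rosenberg2020near}) and is used only once, inside the proof of \cref{lem:dynamics_bonus_bound_base}. Judged on its own merits, your argument is sound at every step: the chord inequality $e^{-\lambda z}\leq 1-z(1-e^{-\lambda})$ holds on $[0,1]$ by convexity; since $\mu_i=\E\sbr{Z_i\mid\F_{i-1}}$ is $\F_{i-1}$-measurable, the process $Y_N=\exp\br{\sum_{i=1}^N\br{(1-e^{-\lambda})\mu_i-\lambda Z_i}}$ is a nonnegative supermartingale with $Y_0=1$; Ville's maximal inequality then gives the conclusion simultaneously for all $N\geq 1$, which is exactly the uniformity the statement asserts (and which a per-$N$ Markov bound alone would not give without weighting or truncating the union); and the calibration $\lambda=\log 2$ yields $\sum_i\E\sbr{X_i\mid\F_{i-1}}\leq 2\log 2\sum_i X_i+2B\log(1/\delta)$, which implies the stated bound because $2\log 2\leq 2$, the $X_i$ are nonnegative, and $2B\log(1/\delta)\leq 4B\log\frac{2K}{\delta}$ for $K\geq 1$, $\delta\leq 1$ (the stray $K$ in the statement is an artifact of the cited source, where a union bound over a finite horizon is taken; your bound subsumes it). It is also worth noting that your technique is the natural mirror image of the proof the paper does spell out for the companion \cref{lem:nameless_concentration_ub}: there the upper tail of $\sum_i X_i$ is controlled via the quadratic MGF bound $e^z\leq 1+z+z^2$ on $[0,1]$ plus a one-shot Markov inequality in an i.i.d.\ setting, whereas your chord bound plus Ville's inequality handles the reverse tail, the martingale (rather than i.i.d.) setting, and uniformity in $N$ --- the three extra features this lemma requires --- while even sharpening the stated constant from $4B\log\frac{2K}{\delta}$ to $2B\log\frac{1}{\delta}$.
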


\begin{lemma}[Elliptical potential lemma, see \citet{lattimore2020bandit}, Lemma 19.4]
\label{lem:eliptical_potential}
	Let $(\phi_i)_{i=1}^N \subset \R^d$ with $\norm{\phi_i} \leq 1$, and set $\Lambda_i \eqq \lambda I + \sum_{t=1}^{i-1} \phi_t\phi_t\T  $ where $\lambda \geq 1$. Then,
	\begin{align*}
		\sum_{i=1}^N \norm{\phi_i}_{\Lambda_i^{-1}}^2 
		\leq 
		2 d\log \br{ 1+ \frac{N}{d \lambda } }
	\end{align*}
\end{lemma}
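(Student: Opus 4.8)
The plan is to use the standard log-determinant potential argument. First I would observe that by the definition $\Lambda_i \eqq \lambda I + \sum_{t=1}^{i-1}\phi_t\phi_t\T$ we have $\Lambda_{i+1} = \Lambda_i + \phi_i\phi_i\T$ for every $i\in\{1,\dots,N\}$, and hence by the matrix determinant lemma,
\begin{align*}
	\det(\Lambda_{i+1})
	= \det(\Lambda_i)\br{1 + \phi_i\T \Lambda_i^{-1}\phi_i}
	= \det(\Lambda_i)\br{1 + \norm{\phi_i}_{\Lambda_i^{-1}}^2}.
\end{align*}
The key elementary fact I would invoke is that $x \leq 2\log(1+x)$ for all $x\in[0,1]$. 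To apply it with $x = \norm{\phi_i}_{\Lambda_i^{-1}}^2$, I first note that since $\lambda\geq 1$ we have $\Lambda_i \succeq \lambda I \succeq I$, so $\Lambda_i^{-1}\preceq I$ and therefore $\norm{\phi_i}_{\Lambda_i^{-1}}^2 \leq \norm{\phi_i}^2 \leq 1$; this boundedness is precisely where the assumptions $\lambda\geq 1$ and $\norm{\phi_i}\leq 1$ enter.

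Combining the two displays, each summand is controlled by a log-ratio of consecutive determinants,
\begin{align*}
	\norm{\phi_i}_{\Lambda_i^{-1}}^2
	\leq 2\log\br{1 + \norm{\phi_i}_{\Lambda_i^{-1}}^2}
	= 2\log\frac{\det(\Lambda_{i+1})}{\det(\Lambda_i)},
\end{align*}
so summing over $i=1,\dots,N$ telescopes to
\begin{align*}
	\sum_{i=1}^N \norm{\phi_i}_{\Lambda_i^{-1}}^2
	\leq 2\log\frac{\det(\Lambda_{N+1})}{\det(\Lambda_1)}
	= 2\log\frac{\det(\Lambda_{N+1})}{\lambda^d},
\end{align*}
where the last equality uses $\Lambda_1 = \lambda I$.

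Finally I would bound the numerator determinant by the trace via the AM-GM inequality applied to the (nonnegative) eigenvalues of $\Lambda_{N+1}$: $\det(\Lambda_{N+1}) \leq \br{\tr(\Lambda_{N+1})/d}^d$, and since $\tr(\Lambda_{N+1}) = d\lambda + \sum_{i=1}^N\norm{\phi_i}^2 \leq d\lambda + N$, this gives $\det(\Lambda_{N+1})\leq \br{\lambda + N/d}^d$. Plugging this back into the telescoped bound yields $\sum_{i=1}^N \norm{\phi_i}_{\Lambda_i^{-1}}^2 \leq 2d\log\br{1 + \frac{N}{d\lambda}}$, as claimed. I do not expect any genuine obstacle here, as the argument is entirely elementary; the only point requiring care is verifying $\norm{\phi_i}_{\Lambda_i^{-1}}^2\leq 1$ before applying $x\leq 2\log(1+x)$, a step that would fail without the normalization $\norm{\phi_i}\leq 1$ together with $\lambda\geq 1$.
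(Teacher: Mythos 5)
Your proof is correct, and it follows essentially the same route as the paper: the paper's proof consists of exactly your key observation (that $\lambda \geq 1$ and $\norm{\phi_i}\leq 1$ force each term $\norm{\phi_i}_{\Lambda_i^{-1}}^2 \leq 1$) and then defers the remaining log-determinant potential argument to \citet{lattimore2020bandit}, Lemma 19.4. You have simply written out in full the standard argument (matrix determinant lemma, the bound $x\leq 2\log(1+x)$ on $[0,1]$, telescoping, and AM--GM on the eigenvalues) that the paper cites away, and every step checks out.
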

\begin{proof}
	Note that $\lambda \geq 1$ implies 
	$\norm{\phi_i}_{\Lambda_i^{-1}}^2 
		\leq \lambda_{\max}(\Lambda_i^{-1}) \norm{\phi_i}^2
		\leq \lambda^{-1} \leq 1$.
	Thus 
	\begin{align*}
		\sum_{i=1}^N \norm{\phi_i}_{\Lambda_i^{-1}}^2 
		= \sum_{i=1}^N \min \cb{1, \norm{\phi_i}_{\Lambda_i^{-1}}^2 }.
	\end{align*}
	The rest of the proof is identical to \citet{lattimore2020bandit}, with $L=1$ and $V_0 = \lambda I$.
\end{proof}

\begin{lemma}[Extended value difference, \citet{shani2020optimistic} Lemma 1, see also \cite{cai2020provably}]
\label{lem:extended_value_diff}
	Let $M = (\S, \A, H, \P, \l)$ be any MDP and $\pi, \pi' \in \S \to \Delta(\A)$ be any two policies.
	Then, for any sequence of functions $\widehat Q_h^\pi \colon \S\times \A \to \R, \widehat V_h^\pi \colon \S \to \R$, where $\widehat V_h^\pi(s) \eqq \ab{\pi_h(\cdot | s), \widehat Q_h^\pi(s, \cdot)}$,  $h = 1, \ldots, H$, we have
\begin{align*}
	\widehat V_1^{\pi} - V_1^{\pi'}
	&= 
	\sum_{h=1}^H \E_{s_h \sim d_h^{\pi'}}\sbr{
		\ab{\widehat Q^\pi_h(s_h, \cdot), 
		\pi_h(\cdot|s_h) - \pi'_h(\cdot|s_h)
		}
	}
	\\
	&\quad+
	\sum_{h=1}^H \E_{s_h, a_h \sim d_h^{\pi'}}\sbr{
		\widehat Q^\pi_h(s_h, a_h) - \l_h(s_h, a_h)
			- \P_h \widehat V_{h+1}^{\pi}(s_h, a_h)
	}.
\end{align*}
\end{lemma}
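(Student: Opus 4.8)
The plan is to prove the identity by a telescoping argument over the horizon, tracking the per-step value gap under the occupancy measure of $\pi'$. Define $\delta_h \eqq \E_{s_h \sim d_h^{\pi'}}\sbr{\widehat V_h^\pi(s_h) - V_h^{\pi'}(s_h)}$, where $V_h^{\pi'}, Q_h^{\pi'}$ denote the \emph{true} value and action-value functions of $\pi'$ in $M$, which (unlike the hatted quantities) satisfy the Bellman consistency equations stated in the preliminaries. Since $s_1$ is fixed, $d_1^{\pi'}$ is a point mass and $\delta_1 = \widehat V_1^\pi - V_1^{\pi'}$ is exactly the left-hand side; and with the convention $\widehat V_{H+1}^\pi = V_{H+1}^{\pi'} = 0$ we have $\delta_{H+1}=0$. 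The goal is thus to derive a recursion for $\delta_h$ whose summands match the two terms on the right-hand side.

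First I would expand a single step. Writing $\widehat V_h^\pi(s_h) = \ab{\pi_h(\cdot|s_h), \widehat Q_h^\pi(s_h, \cdot)}$ and $V_h^{\pi'}(s_h) = \ab{\pi'_h(\cdot|s_h), Q_h^{\pi'}(s_h, \cdot)}$, I add and subtract $\ab{\pi'_h(\cdot|s_h), \widehat Q_h^\pi(s_h, \cdot)}$ to split the integrand into the policy-difference term $\ab{\widehat Q_h^\pi(s_h, \cdot), \pi_h(\cdot|s_h) - \pi'_h(\cdot|s_h)}$ plus the $\pi'$-averaged action-value gap $\E_{a_h \sim \pi'_h}\sbr{\widehat Q_h^\pi(s_h, a_h) - Q_h^{\pi'}(s_h, a_h)}$. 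Into the latter I substitute the Bellman identity $Q_h^{\pi'} = \l_h + \P_h V_{h+1}^{\pi'}$ and then add and subtract $\P_h \widehat V_{h+1}^\pi$, producing the Bellman-error term $\widehat Q_h^\pi - \l_h - \P_h \widehat V_{h+1}^\pi$ together with a residual $\P_h(\widehat V_{h+1}^\pi - V_{h+1}^{\pi'})$. Taking expectation over $(s_h, a_h) \sim d_h^{\pi'}$ and using the occupancy pushforward identity $d_{h+1}^{\pi'}(s') = \int d_h^{\pi'}(s, a)\,\P_h(s'|s, a)\,{\rm d}(s, a)$, this residual collapses to $\delta_{h+1}$, yielding $\delta_h = \E_{s_h \sim d_h^{\pi'}}\sbr{\ab{\widehat Q_h^\pi(s_h, \cdot), \pi_h(\cdot|s_h) - \pi'_h(\cdot|s_h)}} + \E_{s_h, a_h \sim d_h^{\pi'}}\sbr{\widehat Q_h^\pi - \l_h - \P_h \widehat V_{h+1}^\pi} + \delta_{h+1}$. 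Summing this recursion from $h=1$ to $H$ and invoking $\delta_{H+1}=0$ gives the claimed identity.

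The only genuine subtlety — rather than a hard obstacle — is the bookkeeping around which functions may be manipulated via Bellman relations: the hatted quantities $\widehat Q^\pi_h, \widehat V^\pi_h$ are arbitrary and satisfy only the stated averaging relation $\widehat V_h^\pi(s) = \ab{\pi_h(\cdot|s), \widehat Q_h^\pi(s, \cdot)}$, so every Bellman substitution must be applied \emph{solely} to the true functions $Q_h^{\pi'}, V_h^{\pi'}$, and the optimism/consistency gap is precisely what is captured by the second summand. The remaining point requiring care is the measure-theoretic justification of the pushforward identity and of interchanging the conditional-expectation operator $\P_h$ with the outer expectation over $d_h^{\pi'}$ when $\S$ is uncountable; this is routine given the boundedness assumptions in \cref{assume:linmdp}, using the convention $\P_h f(s, a) = \E_{s' \sim \P_h(\cdot|s, a)} f(s')$.
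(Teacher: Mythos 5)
Your proposal is correct and follows essentially the same route as the paper's proof: the identical three-way decomposition (add/subtract $\ab{\pi'_h(\cdot|s),\widehat Q_h^\pi(s,\cdot)}$, Bellman substitution applied only to the true $Q_h^{\pi'}$, then add/subtract $\P_h \widehat V_{h+1}^\pi$), followed by telescoping over $h$. The only cosmetic difference is that you make the unrolling explicit via the $\delta_h$ recursion and the occupancy pushforward identity, whereas the paper performs the decomposition pointwise in $s$ and simply states that unrolling gives the result.
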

\begin{proof}
	For any $s\in \S, h\in [H]$, we have
	\begin{align*}
		\widehat V_h^\pi(s) - V_h^{\pi'}(s) 
		&= \ab{\pi_h(\cdot | s), \widehat Q_h^\pi(s, \cdot)}
		- \ab{\pi_h'(\cdot | s), Q_h^{\pi'}(s, \cdot)}
		\\
		&= \ab{\pi_h(\cdot | s) - \pi_h'(\cdot | s), \widehat Q_h^\pi(s, \cdot)}
		+ \ab{\pi_h'(\cdot | s), \widehat Q_h^\pi(s, \cdot) - Q_h^{\pi'}(s, \cdot)}
	\end{align*}
	Further, by the Bellman consistency equations, for all $a$;
	$
		Q_h^{\pi'}(s, a) = \l_h(s, a) + \P_h V_{h+1}^{\pi'}(s, a)
	$,
	thus
	\begin{align*}
		\ab{\pi_h'(\cdot | s), \widehat Q_h^\pi(s, \cdot) - Q_h^{\pi'}(s, \cdot)}
		&= \E_{a\sim \pi'(\cdot|s)} \sbr{
			\widehat Q_h^\pi(s, a) 
			- \l_h(s, a) - \P_h V_{h+1}^{\pi'}(s, a)
		}
		\\
		&= \E_{a\sim \pi'(\cdot|s)} \sbr{
			\widehat Q_h^\pi(s, a) 
			- \l_h(s, a) - \P_h \widehat V_{h+1}^{\pi}(s, a)
		} 
		\\
		&\quad+ \E_{a\sim \pi'(\cdot|s)} \sbr{
			\P_h \widehat V_{h+1}^{\pi}(s, a)
			- \P_h V_{h+1}^{\pi'}(s, a)
		}
		\\
		&= \E_{a\sim \pi'(\cdot|s)} \sbr{
			\widehat Q_h^\pi(s, a) 
			- \l_h(s, a) - \P_h \widehat V_{h+1}^{\pi}(s, a)
		} 
		\\
		&\quad+ \E_{s' \sim \P_h(\cdot|s, a), a \sim \pi'(\cdot|s)} \sbr{
			\widehat V_{h+1}^{\pi}(s')
			- V_{h+1}^{\pi'}(s')
		}.
	\end{align*}
	Combining the last two displays we obtain
	\begin{align*}
		\widehat V_h^\pi(s) - V_h^{\pi'}(s)
		&= \ab{\pi_h(\cdot | s) - \pi_h'(\cdot | s), \widehat Q_h^\pi(s, \cdot)}
		+\E_{a\sim \pi'(\cdot|s)} \sbr{
			\widehat Q_h^\pi(s, a) 
			- \l_h(s, a) - \P_h \widehat V_{h+1}^{\pi}(s, a)
		}
		\\
		&\quad + \E_{s' \sim \P_h(\cdot|s, a), a \sim \pi'(\cdot|s)} \sbr{
			\widehat V_{h+1}^{\pi}(s')
			- V_{h+1}^{\pi'}(s')
		}.
	\end{align*}
	Unrolling the above relation, the result follows.
\end{proof}

The next lemma is standard, for proof see e.g., \citet{hazan2016introduction,lattimore2020bandit}.
\begin{lemma}[\textbf{Entropy regularized OMD}]
\label{lem:omd}
	Let $\eta >0$, and $g_k \in \R^n$, $x_k \in \Delta(n)$ be a sequence of vectors such that for all $a$, $x_1(a) = 1/n$, for all $k\in [K], a\in [n]$, $\eta g_k(a) \geq -1$ and
	\begin{align*}
		x_{k+1}(a) &= \frac{x_k(a)e^{-\eta g_t(a)}}{\sum_{a'\in [n]} x_k(a')e^{-\eta g_k(a')}}
        .
	\end{align*}
	Then,
	\begin{align*}
		\max_{x\in \Delta_n} \cb{ \sum_{k=1}^K \ab{g_k, x_k - x} }
		\leq 
		\frac{\log n}{\eta } 
		+ \eta \sum_{k=1}^K \sum_{i=1}^n x_k(i)g_k(i)^2
		.
	\end{align*}
\end{lemma}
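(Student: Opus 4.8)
The plan is to prove this via the standard potential-function (multiplicative weights) argument, which is equivalent to the mirror-descent view with the negative-entropy regularizer but cleaner to execute directly. First I would introduce \emph{unnormalized} weights $w_1(a) = 1/n$ and $w_{k+1}(a) = w_k(a)\,e^{-\eta g_k(a)}$, and set $\Phi_k \eqq \sum_a w_k(a)$, so that $x_k(a) = w_k(a)/\Phi_k$. This is consistent with the update in the statement and with $x_1(a) = 1/n$, which gives $\Phi_1 = 1$. The single quantity I would track is $\log \Phi_{K+1}$, bounding it from above by a telescoping step-by-step argument and from below by any fixed comparator.

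For the upper bound, observe that $\Phi_{k+1}/\Phi_k = \sum_a x_k(a) e^{-\eta g_k(a)}$. Here I would invoke the key elementary inequality $e^{-z} \le 1 - z + z^2$, valid for all $z \ge -1$; this is exactly where the hypothesis $\eta g_k(a) \ge -1$ is used. Applying it with $z = \eta g_k(a)$ and using $\sum_a x_k(a) = 1$ yields $\Phi_{k+1}/\Phi_k \le 1 - \eta\ab{g_k, x_k} + \eta^2 \sum_a x_k(a) g_k(a)^2$. Taking logarithms, using $\log(1+u) \le u$, and summing over $k$ telescopes (with $\log\Phi_1 = 0$) to
$$\log \Phi_{K+1} \le -\eta \sum_{k=1}^K \ab{g_k, x_k} + \eta^2 \sum_{k=1}^K \sum_{a} x_k(a) g_k(a)^2.$$

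For the lower bound, fix any action $a^\star$ and note $\Phi_{K+1} \ge w_{K+1}(a^\star) = \tfrac1n e^{-\eta \sum_k g_k(a^\star)}$, so $\log\Phi_{K+1} \ge -\log n - \eta\sum_k g_k(a^\star)$. Combining the two bounds and dividing by $\eta$ gives $\sum_k \ab{g_k, x_k} - \sum_k g_k(a^\star) \le \tfrac{\log n}{\eta} + \eta \sum_k \sum_a x_k(a) g_k(a)^2$, which is precisely the claimed inequality with comparator $e_{a^\star}$. Since $\sum_k \ab{g_k, x_k - x}$ is linear in $x$, its maximum over the simplex $\Delta_n$ is attained at a vertex; taking $a^\star$ to be that maximizing vertex yields the stated bound uniformly over $x \in \Delta_n$.

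The only genuinely delicate point is the second-order inequality $e^{-z} \le 1 - z + z^2$ for $z \ge -1$: it fails for $z < -1$, which is exactly why the hypothesis $\eta g_k(a) \ge -1$ is imposed, and it is what lets the per-step multiplicative factor be controlled by the local second moment $\sum_a x_k(a) g_k(a)^2$ (the term that ultimately feeds into the variance analysis in \cref{lem:omd_term_bound_base}) rather than by a crude worst-case bound. Everything else is routine telescoping together with the observation that a linear objective over the simplex is maximized at a vertex; this matches the standard treatments in \citet{hazan2016introduction,lattimore2020bandit}.
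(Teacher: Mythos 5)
Your proof is correct, and it is complete modulo the elementary inequality $e^{-z} \le 1 - z + z^2$ for $z \ge -1$, which you correctly identify as the crux and which is indeed where the hypothesis $\eta g_k(a) \ge -1$ is consumed. Note, however, that the paper does not actually prove this lemma: it states it as standard and defers to the cited textbooks (Hazan; Lattimore--Szepesv\'ari), so there is no in-paper argument to compare against step by step. Your potential-function (multiplicative weights) route --- unnormalized weights $w_k$, tracking $\log \Phi_{K+1}$ from above via the second-order exponential inequality and telescoping, and from below via a single comparator coordinate, then passing to all of $\Delta_n$ by linearity of the regret in $x$ --- is precisely one of the standard arguments those references give (it is the Exp3-style analysis, equivalent to the mirror-descent/Bregman view but more self-contained), and it reproduces the stated bound exactly, including the constant $1$ on the second-moment term $\eta \sum_k \sum_i x_k(i) g_k(i)^2$ rather than the $\eta/2$ one gets from the inequality valid only for $z \ge 0$. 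Two cosmetic remarks: your side claim that the inequality ``fails for $z < -1$'' should be read as ``fails for some $z < -1$'' (it does not fail immediately below $-1$, e.g.\ it still holds at $z = -1.1$), and one could add the one-line verification that $\Phi_{k+1}/\Phi_k > 0$ guarantees $\log(1+u)$ is applied to a positive quantity; neither affects correctness.
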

The next lemma establishes a regret bound for OMD with blocking, and follows from standard arguments. We provide a proof for completeness.
\begin{lemma}[\textbf{Entropy regularized OMD with blocking}]
\label{lem:blocking_omd}
Let $K \in \mathbb Z_+, \tau \leq K, J=\lceil K/\tau \rceil$, and set $T_j \eqq \cb{\tau(j-1) + 1, \ldots, \tau j}$ for all $j\in [J]$.
Assume
$\eta >0$, let $g_k \in \R^n$ be a sequence of vectors such that $\forall a, k; \eta g_k(a) \geq -1$ , and set
\begin{align*}
    g_{(j)} &= \frac1\tau \sum_{k\in T_j} g_k \; \forall j \in [J]
    \\
    x_{(j+1)}(a) &= 
          \frac{x_{(j)}(a)e^{-\eta g_{(j)}(a)}}{\sum_{a'\in [n]} x_{(j)}(a')e^{-\eta g_{(j)}(a')}}.
\end{align*}
Then if $x_k \in \Delta(n)$ are such that
$x_k = x_{(j)} \text{ for all } k \in T_j, j\in [J]$ we have
	\begin{align*}
		\max_{x\in \Delta_n} \cb{ \sum_{k=1}^K \ab{g_k, x_k - x} }
		\leq 
		\frac{\tau \log n}{\eta } 
  + \tau \max_k{\norm{g_k}_\infty}
		+ \eta \sum_{k=1}^K \sum_{i=1}^n x_k(i)g_k(i)^2
		.
	\end{align*}
\end{lemma}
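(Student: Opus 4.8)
The plan is to reduce the blocking setup to the non-blocking OMD guarantee already available in \cref{lem:omd}. The key observation is that the iterates $x_{(j)}$ are exactly the standard entropy-regularized OMD iterates run over $J$ rounds with the averaged gradients $g_{(j)}$. First I would apply \cref{lem:omd} directly to the sequence $(g_{(j)})_{j\in[J]}$ and iterates $(x_{(j)})_{j\in[J]}$, which requires checking the precondition $\eta g_{(j)}(a) \geq -1$ for all $j,a$; this follows since $g_{(j)}$ is a convex combination (average) of the $g_k$ over $k\in T_j$, each satisfying $\eta g_k(a)\geq -1$, and the constraint is preserved under averaging. This yields, for any $x\in\Delta_n$,
\begin{align*}
    \sum_{j=1}^J \ab{g_{(j)}, x_{(j)} - x}
    \leq \frac{\log n}{\eta}
    + \eta \sum_{j=1}^J \sum_{i=1}^n x_{(j)}(i)\, g_{(j)}(i)^2 .
\end{align*}

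Next I would relate the blocked sum $\sum_{k=1}^K \ab{g_k, x_k - x}$ to the round-level sum. Since $x_k = x_{(j)}$ for all $k\in T_j$ and $g_{(j)} = \frac1\tau\sum_{k\in T_j} g_k$, we have $\sum_{k\in T_j}\ab{g_k, x_{(j)} - x} = \tau\ab{g_{(j)}, x_{(j)} - x}$. Summing over $j$ gives $\sum_{k=1}^K \ab{g_k, x_k - x} = \tau \sum_{j=1}^J \ab{g_{(j)}, x_{(j)} - x}$, so multiplying the \cref{lem:omd} bound by $\tau$ produces a leading term $\tfrac{\tau\log n}{\eta}$ and a second-order term $\eta\tau\sum_{j}\sum_i x_{(j)}(i) g_{(j)}(i)^2$. (A minor technical caveat: when $\tau\nmid K$ the final block $T_J$ may contain fewer than $\tau$ indices, which is where the extra additive slack $\tau\max_k\norm{g_k}_\infty$ enters — I would bound the contribution of any truncated block crudely by the number of missing indices times $\max_k\norm{g_k}_\infty\cdot(\norm{x_k}_1+\norm{x}_1)$, absorbing it into that term.)

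The remaining and main step is to control the second-order term: I must show
\begin{align*}
    \eta\tau \sum_{j=1}^J \sum_{i=1}^n x_{(j)}(i)\, g_{(j)}(i)^2
    \leq \eta \sum_{k=1}^K \sum_{i=1}^n x_k(i)\, g_k(i)^2 .
\end{align*}
Writing $g_{(j)}(i)^2 = \big(\tfrac1\tau\sum_{k\in T_j} g_k(i)\big)^2$ and applying Jensen's inequality (convexity of $t\mapsto t^2$) gives $g_{(j)}(i)^2 \leq \tfrac1\tau\sum_{k\in T_j} g_k(i)^2$. Since $x_{(j)}(i) = x_k(i)$ for every $k\in T_j$, we get $\tau\, x_{(j)}(i)\, g_{(j)}(i)^2 \leq \sum_{k\in T_j} x_k(i)\, g_k(i)^2$, and summing over $i$ and $j$ yields exactly the desired inequality. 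I expect this convexity step to be the crux: it is what lets the variance penalty be stated per-episode (as $\sum_k\sum_i x_k(i)g_k(i)^2$) rather than in terms of the averaged gradients, which is precisely the form needed downstream when this lemma feeds into the second-moment bound of \cref{lem:omd_term_bound_base}. Combining the three displays and taking the maximum over $x\in\Delta_n$ completes the proof.
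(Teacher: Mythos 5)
Your proof is correct and takes essentially the same route as the paper's: apply \cref{lem:omd} to the block-averaged gradients $g_{(j)}$ and iterates $x_{(j)}$, identify the per-episode regret with $\tau$ times the block-level regret, and control the second-moment term via Jensen's inequality. One remark in your favor: because you keep the $1/\tau$ normalization from the lemma statement, your block decomposition is an exact identity and the Jensen step still holds for a partial final block (via Cauchy--Schwarz, since $|T_J \cap [K]| \leq \tau$), so the slack term $\tau \max_k \norm{g_k}_\infty$ is never actually needed and your hedging caveat about the truncated block is unnecessary --- whereas the paper's proof renormalizes blocks by $|T_j|$ and pays that term to handle the last block's second moment crudely.
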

\begin{proof}
    By applying \cref{lem:omd} on $g_{(j)}, x_{(j)}$, we get
    \begin{align*}
		\sum_{j=1}^J \ab{g_{(j)}, x_{(j)} - x^\star} 
		\leq 
		\frac{\log n}{\eta } 
		+ \eta \sum_{j=1}^J \sum_{i=1}^n x_{(j)}(i)g_{(j)}(i)^2
		.
	\end{align*}
    In addition,
    \begin{align*}
        \sum_{j=1}^J \ab{g_{(j)}, x_{(j)} - x^\star} 
        = \sum_{j=1}^J \ab{\frac{1}{|T_j|}\sum_{k\in T_j} g_k, x_{(j)} - x^\star} 
        =
        \sum_{j=1}^J \frac{1}{|T_j|}\sum_{k\in T_j} \ab{ g_k, x_{k} - x^\star} 
        \geq \frac1\tau 
        \sum_{k=1}^K \ab{ g_k, x_{k} - x^\star} 
    \end{align*}
    Further, by Jensen's inequality,
    \begin{align*}
        g_{(j)}(i)^2 = \br{\frac1{|T_j|} \sum_{k\in T_j} g_k(i)}^2
        = \frac{1}{|T_j|^2} \br{\sum_{k\in T_j} g_k(i)}^2
        \leq \frac{1}{|T_j|} \sum_{k\in T_j} g_k(i)^2,
    \end{align*}
    thus
    \begin{align*}
		\frac1\tau 
        \sum_{k=1}^K \ab{ g_k, x_{k} - x^\star} 
		\leq 
		\frac{\log n}{\eta } 
		+ \frac{\eta}{\tau} \sum_{k=1}^{K'} 
                \sum_{i=1}^n x_{k}(i)g_{k}(i)^2
            + \frac{\eta}{|T_J|} \sum_{k\in T_J}\sum_{i=1}^n x_{k}(i)g_{k}(i)^2
            ,
	\end{align*}
    where $K' = \max\cb{k\in T_{J-1}}$.
    Finally,
    \begin{align*}
        \sum_{k=1}^K \ab{ g_k, x_{k} - x^\star} 
		&\leq 
		\frac{\tau \log n}{\eta } 
		+ \eta \sum_{k=1}^{K'} 
                \sum_{i=1}^n x_{k}(i)g_{k}(i)^2
            + \frac{\tau \eta}{|T_J|} \sum_{k\in T_J}\sum_{i=1}^n x_{k}(i)g_{k}(i)^2
            ,
            \\
            &\leq 
		\frac{\tau \log n}{\eta } 
		+ \eta \sum_{k=1}^{K'} 
                \sum_{i=1}^n x_{k}(i)g_{k}(i)^2
            + \frac{\tau }{|T_J|} \sum_{k\in T_J}\sum_{i=1}^n x_{k}(i)g_{k}(i)
            \\
            &\leq 
		\frac{\tau \log n}{\eta } 
		+ \eta \sum_{k=1}^{K} 
                \sum_{i=1}^n x_{k}(i)g_{k}(i)^2
            + \tau \max_k{\norm{g_k}_\infty},
	\end{align*}
 which concludes the proof.
\end{proof}

\end{document}